\documentclass{article}
\PassOptionsToPackage{authoryear,round}{natbib}

\usepackage[main,final]{neurips_2026}

\usepackage[utf8]{inputenc} 
\usepackage[T1]{fontenc}    
\usepackage{hyperref}       
\usepackage{url}            
\usepackage{booktabs}       
\usepackage{amsmath}
\usepackage{amsfonts}       
\usepackage{nicefrac}       
\usepackage{microtype}      
\usepackage{xcolor}         

\usepackage{listings}
\usepackage{tcolorbox}
\usepackage{makecell}
\usepackage{enumitem}
\usepackage{stfloats}
\usepackage{float}
\usepackage{algorithm}
\usepackage{algpseudocode}
\usepackage{thmtools}
\usepackage{thm-restate}

\newtheorem{assumption}{Assumption}
\newtheorem{definition}{Definition}
\newtheorem{proposition}{Proposition}
\newtheorem{lemma}{Lemma}
\newtheorem{remark}{Remark}

\usepackage{wrapfig}

\usepackage{pifont}
\usepackage{graphicx}
\usepackage{subcaption}

\definecolor{trendred}{RGB}{231, 75, 120}
\definecolor{trendgreen}{RGB}{30, 180, 120}

\usepackage[dvipsnames]{xcolor} 
\usepackage{listings}
\usepackage{multirow}
\usepackage{booktabs}
\definecolor{lightred}{RGB}{255,230,230}

\newcommand{\yes}{\textcolor{trendgreen}{\ding{51}}}
\newcommand{\no}{\textcolor{trendred}{\ding{55}}}

\usepackage[table]{xcolor}
\definecolor{scoreA}{HTML}{FBFDFF} 
\definecolor{scoreB}{HTML}{F1F7FD}
\definecolor{scoreC}{HTML}{E3F0FA}
\definecolor{scoreD}{HTML}{D2E6F5}
\definecolor{scoreE}{HTML}{BCD8EE}
\definecolor{scoreF}{HTML}{9EC5E5} 

\newcommand{\sA}[1]{\cellcolor{scoreA}#1}
\newcommand{\sB}[1]{\cellcolor{scoreB}#1}
\newcommand{\sC}[1]{\cellcolor{scoreC}#1}
\newcommand{\sD}[1]{\cellcolor{scoreD}#1}
\newcommand{\sE}[1]{\cellcolor{scoreE}#1}
\newcommand{\sF}[1]{\cellcolor{scoreF}#1}

\newcommand{\method}{\textsc{TPO}}

\title{TPO: Tri-level Distributionally Robust Learning for OOD Direct Preference Optimization}

\author{%
  Chengtao Jian$^{1}$, Kai Yang$^{1,2,3}$\thanks{Corresponding author.} \\
  $^{1}$School of Computer Science and Technology, Tongji University, Shanghai, China \\
  $^{2}$Shenzhen Loop Area Institute, Shenzhen, China \\
  $^{3}$Shanghai Research Institute for Intelligent Autonomous Systems, Shanghai, China \\
  \texttt{\{jct,kaiyang\}@tongji.edu.cn}
}

\begin{document}

\maketitle

\begin{abstract}
Direct Preference Optimization (DPO) minimizes empirical risk over observed preference comparisons, implicitly assuming that the training preference distribution matches the deployment preference distribution. In practice, this distributional assumption often fails, as preference distribution shifts can arise at three levels: group-level shift, prompt-level shift, and preference-level shift. Such multi-level distributional mismatch poses a critical challenge for safe LLM deployment, yet existing robust DPO methods typically either model uncertainty over the joint preference distribution or focus on individual sources of shift, leaving the hierarchical structure of OOD preference shifts underexplored. We propose \method{}, a tri-level distributionally robust framework for OOD preference alignment, which jointly models all three levels of distributional uncertainty through a hierarchically structured rectangular ambiguity set. By instantiating it with Wasserstein distance, we derive a tractable and interpretable objective that combines a preference-level margin penalty, prompt-level adversarial embedding perturbation, and worst-case group reweighting, each governed by an independent radius. We establish finite-sample convergence guarantees and empirically validate the effectiveness of \method{} across diverse OOD environments.
\end{abstract}

\section{Introduction}

Direct Preference Optimization (DPO)~\citep{RafailovSMMEF23_DPO} has emerged as a prominent method for aligning large language models (LLMs) with human preferences. Unlike RLHF~\citep{ChristianoLBMLA17,StiennonO0ZLVRA20,Ouyang0JAWMZASR22}, DPO directly optimizes a preference objective from pairwise comparison data, bypassing explicit reward modeling and improving optimization stability. However, standard DPO minimizes empirical risk over observed pairwise preference data, without explicitly accounting for distribution shifts between training and deployment. Such out-of-distribution (OOD) shifts can degrade preference alignment performance~\citep{wu2025iclr-robust_drdpo,xu2025neurips-robust_wdpo}.

In preference alignment, OOD preference shift exhibits a natural hierarchical structure across multiple levels. 
\textbf{First}, the distribution over groups may change between training and deployment, inducing a \emph{group-level shift}, which has been studied in group robust preference alignment~\citep{ramesh2024group_grpo,SagawaKHL20_groupdro}. 
\textbf{Second}, even within the same group, the prompt marginal distribution may vary across domains, time, or user interaction patterns, yielding a \emph{prompt-level shift}, which corresponds to covariate shift in the prompt distribution and has been observed in robust prompt optimization~\citep{li-etal-2023-robust,seegmiller2026measuring}. 
\textbf{Third}, the conditional preference distribution may also change due to noisy feedback or heterogeneous preferences within the same group, giving rise to a \emph{preference-level shift}~\citep{YaoCCYJYH25_gdpo,chowdhury24a_rdpo,PoddarWIGJ24}.

Recent robust DPO methods have begun to address distributional robustness, but they typically do not explicitly model the hierarchical structure of preference distribution shift. WDPO and KLDPO~\citep{xu2025neurips-robust_wdpo} construct Wasserstein and KL ambiguity sets over the entire preference distribution, while rDPO~\citep{chowdhury24a_rdpo} and Dr.DPO~\citep{wu2025iclr-robust_drdpo} focus on preference label noise and on pointwise and pairwise data noise, respectively.
As summarized in Table~\ref{tab:method-comparison}, existing methods either model overall uncertainty or focus on a single robustness level. Motivated by these observations, we propose \method{},
a tri-level distributionally robust framework for OOD
preference alignment, which jointly models all three levels through a hierarchically structured ambiguity set. Overall, our main contributions are as follows:
\begin{itemize}[leftmargin=*]
\item \textbf{Tri-level learning for OOD preference alignment.} 
We identify that preference distribution shift exhibits
a natural tri-level structure comprising group-level,
prompt-level, and preference-level shifts that existing
robust DPO methods do not jointly address. We propose 
\method{}, which captures all three levels within a hierarchical 
rectangular ambiguity set. This decomposition is relevant to safe LLM alignment, as real-world LLMs inevitably encounter distributional mismatch at multiple levels simultaneously, while single-level robustness leaves models vulnerable to the remaining sources of shift.

\item \textbf{Tractable Wasserstein robust preference objective.} 
By instantiating the hierarchical ambiguity set with Wasserstein distances, 
we derive a tractable and interpretable training objective with three 
complementary components: a preference-level margin penalty, 
prompt-level adversarial embedding perturbation, and worst-case group 
reweighting. Each component targets a distinct source of distributional uncertainty and is governed by its own radius, enabling flexible and modular control over robustness.

\item \textbf{Theoretical guarantees and empirical validation.} 
We establish finite-sample convergence guarantees, showing that the empirical minimizer converges to the population minimizer at a rate of $\widetilde{\mathcal{O}}(n_{\min}^{-1/4})$, where $n_{\min}=\min_{g\in[m]}n_g$ is the smallest group size.
Empirically, \method{} achieves superior or competitive performance over standard DPO and robust DPO baselines across multiple OOD shift settings.
\end{itemize}

\begin{table}[t]
\centering
\small
\vspace{-0.6em}
\caption{Comparison of robust DPO methods by the shift levels explicitly modeled in their objectives. \yes\ denotes a dedicated robustness component, while \no\ denotes no explicit level-specific component.}
\label{tab:method-comparison}
\setlength{\tabcolsep}{8pt}
\begin{tabular}{lcccc}
\toprule
Methods & Overall & Group-level & Prompt-level & Preference-level \\
\midrule
DPO~\citep{RafailovSMMEF23_DPO} & \no & \no & \no & \no \\
WDPO / KLDPO~\citep{xu2025neurips-robust_wdpo} & \yes & \no & \no & \no \\
GRPO~\citep{ramesh2024group_grpo} & \no & \yes & \no & \no \\
rDPO~\citep{chowdhury24a_rdpo} & \no & \no & \no & \yes \\
Dr.DPO~\citep{wu2025iclr-robust_drdpo} & \no & \no & \no & \yes \\
\midrule
\textbf{\method{} (Ours)} & \yes & \yes & \yes & \yes \\
\bottomrule
\end{tabular}
\vspace{-1.5em}
\end{table}

\section{Preliminaries}

\textbf{Notation.}
Unless otherwise specified, we use uppercase letters for random variables and lowercase letters for their realizations, with probability distributions denoted by $P$ and $Q$. We use $\boldsymbol{x}\in\mathcal{X}$ for a prompt, $\boldsymbol{y}\in\mathcal{Y}$ for a response, and $\boldsymbol{h}\in\mathcal{H}$ for the prompt embedding, with $\tilde{\boldsymbol{h}}=\boldsymbol{h}+\boldsymbol{\delta}$ denoting an adversarially perturbed embedding. A pairwise preference example is modeled as a random variable \(U=(X,V,S)\), where \(V=(Y_1,Y_2)\) is a pair of candidate responses and \(S\in\{+1,-1\}\) is the binary preference label indicating which response is preferred. In standard DPO training, each comparison \((\boldsymbol{x},\boldsymbol{y}^+,\boldsymbol{y}^-)\) is encoded as \(V=(\boldsymbol{y}^+,\boldsymbol{y}^-)\) and \(S=+1\). The group index is $g\in\mathcal{G}=[m]$, where $[m]=\{1,\dots,m\}$,  and $P_g$ denotes the group-specific distribution over $U$. 
Let $I_g$ be the index set of the $n_g=|I_g|$ training samples from group $g$.
We factorize $P_g(U)=P_g^X(X)\,P_g^V(V\mid X)\,P_g^S(S\mid V,X)$,  where $P_g^X$, $P_g^V$, and $P_g^S$ are the prompt marginal  distribution, conditional candidate response distribution, and  conditional preference distribution, respectively. We write $\Delta^{m-1}$ for the probability simplex over $m$ groups, $\|\boldsymbol{v}\|_2$ for the Euclidean norm, $\phi(\boldsymbol{h},\boldsymbol{y})\in\mathbb{R}^d$ for the log-linear feature map, and $\varphi(t)=\log(1+e^{-t})$ for the logistic loss.

\paragraph{Direct Preference Optimization.}
DPO learns a policy \(\pi_{\boldsymbol{\theta}}\) from pairwise preference data
\(\mathcal{D}=\{(\boldsymbol{x}_i,\boldsymbol{y}_i^+,\boldsymbol{y}_i^-)\}_{i=1}^n\), where \(\boldsymbol{y}_i^+\succ\boldsymbol{y}_i^-\) indicates that
\(\boldsymbol{y}_i^+\) is preferred over \(\boldsymbol{y}_i^-\). 
The DPO margin for a candidate pair \((\boldsymbol y_1,\boldsymbol y_2)\) is defined as
\begin{equation}
\label{eq:prelim-dpo-margin}
m(\boldsymbol{x},\boldsymbol{y}_1,\boldsymbol{y}_2;\boldsymbol{\theta})
=
\beta\!\left(
\log\frac{\pi_{\boldsymbol{\theta}}(\boldsymbol{y}_1\mid\boldsymbol{x})}
         {\pi_{\mathrm{ref}}(\boldsymbol{y}_1\mid\boldsymbol{x})}
-
\log\frac{\pi_{\boldsymbol{\theta}}(\boldsymbol{y}_2\mid\boldsymbol{x})}
         {\pi_{\mathrm{ref}}(\boldsymbol{y}_2\mid\boldsymbol{x})}
\right),
\end{equation}
where \(\beta>0\) is a scaling parameter and \(\pi_{\mathrm{ref}}\) is a reference policy.
Given a binary preference label \(s\in\{+1,-1\}\), the labeled DPO loss is $\mathcal{L}(\boldsymbol{x},\boldsymbol{y}_1,\boldsymbol{y}_2,s;\boldsymbol{\theta})=\varphi\!\left(s\cdot m(\boldsymbol{x},\boldsymbol{y}_1,\boldsymbol{y}_2;\boldsymbol{\theta})\right)$. For the standard DPO setting, we set \((\boldsymbol y_1,\boldsymbol y_2)=(\boldsymbol y^+,\boldsymbol y^-)\) and \(s=+1\), which gives the empirical DPO risk minimization objective: 
\begin{equation}
\label{eq:erm-objective}
\widehat{\mathcal R}_{\mathrm{DPO}}(\boldsymbol{\theta}):=\mathbb{E}_{\hat P}\bigl[\mathcal L(U;\boldsymbol{\theta})\bigr]=\frac{1}{n}\sum\nolimits_{i=1}^n
\varphi\bigl(
m(\boldsymbol{x}_i,\boldsymbol{y}_i^+,\boldsymbol{y}_i^-;\boldsymbol{\theta})
\bigr).
\end{equation}
Standard DPO minimizes this empirical risk under the assumption that training and deployment preferences follow the same distribution. When this distributional assumption fails, DPO can suffer from reduced preference alignment quality~\citep{wu2025iclr-robust_drdpo,xu2025neurips-robust_wdpo}.

\paragraph{Group DRO.}
Group DRO~\citep{SagawaKHL20_groupdro} introduces a group variable
$G\in\mathcal{G}=\{1,\dots,m\}$, so that each preference observation
$U=(X,V,S)$ is associated with a group index $g$.
Let $P_g$ denote the population distribution over $U$ for group $g$. The overall training
distribution is expressed as $P=\sum_{g=1}^m\alpha_g P_g$, where $\boldsymbol{\alpha}\in\Delta^{m-1}$
are the group proportions.
Through the lens of distributionally robust
optimization~\citep{Jiao2022Distributed,Esfahani2015DatadrivenDR}, Group DRO replaces the ERM objective in Eq.~\eqref{eq:erm-objective} with the following min-max optimization problem:
\begin{equation}
\label{eq:group-dro}
\inf_{\boldsymbol{\theta}\in\Theta}
\sup_{\boldsymbol{\omega}\in\mathcal{Q}}
\sum\nolimits_{g=1}^m\omega_g\,
\mathbb{E}_{P_g}\bigl[\mathcal{L}(U;\boldsymbol{\theta})\bigr],
\end{equation}
which minimizes the worst-case expected loss over an ambiguity set $\mathcal{Q}$ of group mixture weights, a paradigm that has also been adopted in robust RLHF~\citep{chakraborty2024maxminrlhf,ramesh2024group_grpo}. However, this formulation keeps the within-group distributions $P_g$ fixed and does not explicitly
model shifts in either the prompt marginal distribution
~\citep{li-etal-2023-robust,KohSMXZBHYPGLDS21}
or the conditional preference distribution
~\citep{SorensenMFGMRLJCD24_pluralistic,YaoCCYJYH25_gdpo}.
This motivates the hierarchical robust formulation
in Section~\ref{sec:hierarchical-shifts}.

\section{Method}

\subsection{Hierarchical Distribution Shifts and Uncertainty in Preference Alignment}
\label{sec:hierarchical-shifts}
In preference alignment, distribution shift exhibits a natural three-level hierarchical structure. At the \emph{global} level, the group proportions may shift between training and deployment~\citep{qiao2025gsr,ramesh2024group_grpo,jian2024trilevel}. Within each group, shift arises at the \emph{local} level in two forms: the prompt marginal distribution $P_g^X$ may shift across environments~\citep{li-etal-2023-robust,KohSMXZBHYPGLDS21}, and the conditional preference distribution $P_g^S(S\mid V,X)$ may vary due to noisy~\citep{chowdhury24a_rdpo} or pluralistic~\citep{PoddarWIGJ24,YaoCCYJYH25_gdpo} human feedback. We refer to these as \emph{group-level}, \emph{prompt-level}, and \emph{preference-level} distribution shifts, respectively. We next formalize these three levels of distributional uncertainty and construct a hierarchical ambiguity set for robust preference optimization.

\paragraph{Global group ambiguity set.}
At the global level, motivated by group DRO~\citep{SagawaKHL20_groupdro}, we model group-level shift by an ambiguity set over the group mixture weights, centered at the training group proportions $\boldsymbol{\alpha}$:
$\mathcal{U}^{\omega}= \bigl\{\boldsymbol{\omega}\in\Delta^{m-1}:\mathcal{D}_{\omega}(\boldsymbol{\omega},\boldsymbol{\alpha})\le\rho^{\omega}\bigr\}$,
where \(\mathcal{D}_{\omega}\) is a discrepancy measure on the group simplex and \(\rho^{\omega}\ge 0\) is the group-level radius.

\paragraph{Local rectangular ambiguity sets.}
At the local level, within-group shift arises from two sources: the prompt marginal distribution and the conditional preference distribution. For each group $g$, the shifted joint distribution factorizes as $Q_g(U)=Q_g^X(X)\,P_g^V(V\mid X)\,Q_g^S(S\mid X,V)$, where $Q_g^X$ and $Q_g^S$ are each constrained by their own ambiguity set. Since the candidate response pairs are fixed in the offline preference dataset~\citep{RafailovSMMEF23_DPO}, we do not introduce an ambiguity set for $P_g^V$ and keep $P_g^V(V\mid X)$ unchanged. We first constrain the prompt marginal distribution by $\mathcal{U}_g^X=\{Q_g^X:\mathcal{D}_X(Q_g^X,P_g^X)\le\rho_g^X\}$, where \(\mathcal{D}_X\) is a discrepancy measure on the prompt (embedding) space and \(\rho_g^X\ge 0\) is the prompt-level radius.

For each prompt and response pair \((\boldsymbol{x},\boldsymbol{v})\), we constrain the conditional preference distribution by
\begin{equation}
\label{eq:preference-ambiguity}
\mathcal{U}_{g,\boldsymbol{x},\boldsymbol{v}}^S
=
\Bigl\{
Q_g^S(\cdot\mid\boldsymbol{x},\boldsymbol{v}):
\mathcal{D}_S\bigl(
Q_g^S(\cdot\mid\boldsymbol{x},\boldsymbol{v}),
P_g^S(\cdot\mid\boldsymbol{x},\boldsymbol{v})
\bigr)
\le\rho_g^S
\Bigr\},
\end{equation}
where $\mathcal{D}_S$ measures discrepancies between distributions over binary preference labels, and $\rho_g^S\in[0,1]$ is the preference-level radius. Jointly, these constraints induce the local ambiguity set
\begin{equation}
\label{eq:local-ambiguity-set}
\mathcal{U}_g
=
\Bigl\{
Q_g:
Q_g^X\in\mathcal{U}_g^X,\;
Q_g^S(\cdot\mid\boldsymbol{x},\boldsymbol{v})
\in\mathcal{U}_{g,\boldsymbol{x},\boldsymbol{v}}^S
\ \text{for } Q_g^X P_g^V\text{-a.e. }(\boldsymbol{x},\boldsymbol{v})
\Bigr\}.
\end{equation}
Note that \(\mathcal{U}_g\) is \emph{rectangular} with respect to this factorization: the prompt marginal and the conditional preference distribution can be chosen separately under their respective constraints. Under standard measurability conditions, this rectangularity allows the adversary to first select a worst-case preference distribution for each fixed \((X,V)\), and then a worst-case prompt marginal distribution~\citep{Shapiro2016Rectangular,LiShapiro2025Rectangularity}, which yields the nested worst-case optimization in Section~\ref{sec:tractable-objective}.

\paragraph{Hierarchical ambiguity set.}
Combining the global and local constraints, the full ambiguity set is
\begin{equation}
\label{eq:structural-ambiguity}
\begin{array}{l}
\mathcal{U}
=
\Bigl\{
Q=\textstyle\sum_{g=1}^m \omega_g Q_g:
\boldsymbol{\omega}\in\mathcal{U}^{\omega},\;
Q_g\in\mathcal{U}_g\ \text{for all }g\in\mathcal{G}
\Bigr\}.
\end{array}
\end{equation}
Compared with group robustness methods that only consider group-level shift, our formulation more faithfully characterizes preference distribution shift by explicitly accounting for within-group variation in both the prompt marginal and the conditional preference distribution.

\subsection{Tri-level Distributionally Robust Preference Optimization}

Based on the hierarchical ambiguity set \(\mathcal{U}\) in Eq.~\eqref{eq:structural-ambiguity}, we now formulate the robust preference optimization problem. Since the true distribution is unknown but assumed to lie in \(\mathcal{U}\), we follow standard DRO principles~\citep{Wiesemann2014DistributionallyRC,Esfahani2015DatadrivenDR,Sinha2018Certifying} and minimize the worst-case risk over \(\mathcal{U}\):
\begin{equation}
\inf_{\boldsymbol{\theta}\in\Theta}
\sup_{\boldsymbol{\omega}\in\mathcal U^{\omega}}
\sum\nolimits_{g=1}^m
\omega_g
\sup_{Q_g\in\mathcal U_g}
\mathbb E_{U\sim Q_g}\big[\mathcal L(U;\boldsymbol{\theta})\big].
\label{eq:hpo-nested}
\end{equation}
The outer maximization over \(\boldsymbol{\omega}\) captures global group-level shift, and the inner maximization over \(Q_g\) captures local prompt- and preference-level shifts. To solve the inner maximization, we first define the worst-case preference loss of group \(g\) at a fixed prompt and response pair \((\boldsymbol{x},\boldsymbol{v})\) as
\begin{equation}
\psi_g(\boldsymbol{x},\boldsymbol{v};\boldsymbol{\theta})
:=
\sup_{Q_g^S\in\mathcal U_{g,\boldsymbol{x},\boldsymbol{v}}^S}
\mathbb E_{S\sim Q_g^S}
\big[
\mathcal L(\boldsymbol{x},\boldsymbol{v},S;\boldsymbol{\theta})
\big].
\label{eq:conditional-robust-loss}
\end{equation}
Since \(\mathcal U_g\) is rectangular, the adversarial choices of the prompt marginal \(Q_g^X\) and the conditional preference kernel \(Q_g^S(\cdot\mid X,V)\) are decoupled, which enables a nested decomposition of the worst-case expectation over \(Q_g\)~\citep{Shapiro2016Rectangular,LiShapiro2025Rectangularity}. Consequently, Eq.~\eqref{eq:hpo-nested} can be rewritten as
\begin{equation}
\label{eq:hpo-final}
\inf_{\boldsymbol{\theta}\in\Theta}
\sup_{\boldsymbol{\omega}\in\mathcal{U}^{\omega}}
\sum\nolimits_{g=1}^{m}
\omega_g
\sup_{Q_g^X\in\mathcal{U}_g^X}
\mathbb{E}_{X\sim Q_g^X,\; V\sim P_g^V(\cdot\mid X)}
\bigl[
\psi_g(X,V;\boldsymbol{\theta})
\bigr].
\end{equation}
The above nested optimization captures the three
sources of distribution shift described in
Section~\ref{sec:hierarchical-shifts}. It generalizes Group DRO by accounting for within-group
distributional uncertainty: when $\rho_g^X=\rho_g^S=0$
for all groups, the formulation reduces to a Group DRO
objective over $\mathcal{U}^{\omega}$.
While nested optimization structures have been studied
in bilevel learning~\citep{franceschi2018bilevel,liu2025nested,jiao2022asynchronous,ji2021bilevel} and trilevel learning~\citep{jiaodtzo,jian2024trilevel,yang2026stcor},
our formulation arises from hierarchical distributional uncertainty in preference alignment.

\subsection{Finite-Sample Guarantees}
We establish a finite-sample convergence guarantee for the fixed-penalty Wasserstein dual objective of \method{}, whose derivation is presented in Section~\ref{sec:tractable-objective}. The analysis relies on the following two assumptions. 

\begin{assumption}[Log-linear policy class~\citep{xu2025neurips-robust_wdpo} and boundedness~\citep{jiao2024provably}]
\label{ass:loglinear-bounded}
Let \(\phi:\mathcal H\times\mathcal Y\to\mathbb R^d\) be a feature map satisfying \(\sup_{\boldsymbol h\in\mathcal H,\,\boldsymbol y\in\mathcal Y}\|\phi(\boldsymbol h,\boldsymbol y)\|_2\le C_\phi\), where \(\mathcal H\) includes both clean and perturbed prompt embeddings. Let \(\Theta=\{\boldsymbol \theta\in\mathbb R^d:\|\boldsymbol \theta\|_2\le B\}\), and \(\pi_{\mathrm{ref}}=\pi_{\boldsymbol\theta_{\rm ref}}\) satisfy \(\|\boldsymbol \theta_{\rm ref}\|_2\le B\). We consider the log-linear policy class
\[
\Pi
=
\left\{
\pi_{\boldsymbol \theta}:
\pi_{\boldsymbol \theta}(y\mid \boldsymbol h)
=
\frac{\exp(\boldsymbol \theta^\top\phi(\boldsymbol h,\boldsymbol y))}
{\sum_{\boldsymbol{y'}\in\mathcal Y}\exp(\boldsymbol \theta^\top\phi(\boldsymbol h,\boldsymbol y'))},
\ \boldsymbol \theta\in\Theta
\right\}.
\]
\end{assumption}

\begin{assumption}[Uniform covariance coverage~\citep{zhu2023principled}]
\label{ass:uniform-coverage}
Let $z(\boldsymbol h,\boldsymbol y^+\!,\boldsymbol y^-)\!=\!\phi(\boldsymbol h,\boldsymbol y^+)-\phi(\boldsymbol h,\boldsymbol y^-)$.
For each group $g$ and $\boldsymbol\theta\in\Theta$, assume the inner maximization in Eq.~\eqref{eq:dual-prompt-perturb} admits a measurable maximizer $\boldsymbol\delta_{g,\boldsymbol\theta}^\star(\boldsymbol h,\boldsymbol y^+,\boldsymbol y^-)$, and let $\tilde{\boldsymbol h}=\boldsymbol h+\boldsymbol\delta_{g,\boldsymbol\theta}^\star(\boldsymbol h,\boldsymbol y^+,\boldsymbol y^-)$.
There exists $\lambda>0$ such that, for all $\boldsymbol\theta\in\Theta$, $\boldsymbol\omega\in\mathcal U^\omega$, and all such maximizers,
$
\sum_{g=1}^m \omega_g\,
\mathbb E_{P_g}\!\left[
z(\tilde{\boldsymbol h},\boldsymbol y^+,\boldsymbol y^-)\,
z(\tilde{\boldsymbol h},\boldsymbol y^+,\boldsymbol y^-)^\top
\right]
\succeq \lambda I_d.
$
\end{assumption}
For each group $g$, we fix $\gamma_g>0$ in Eq.~\eqref{eq:dual-local-risk} and define the empirical risk as $\widehat{\mathcal R}_g(\boldsymbol\theta)=\gamma_g\rho_g^X+\frac{1}{n_g}\sum_{i\in I_g}\ell_{\boldsymbol\theta}^g(\boldsymbol z_i;\gamma_g)$, where $\ell_{\boldsymbol\theta}^g$ is given by Eq.~\eqref{eq:dual-prompt-perturb}. The corresponding population risk $\mathcal R_g(\boldsymbol\theta)$ replaces the empirical average with $\mathbb E_{P_g}$. The hierarchical population and empirical objectives are $\mathcal R_{\mathrm{\method{}}}(\boldsymbol\theta)=\sup_{\boldsymbol\omega\in\mathcal U^\omega}\sum_{g=1}^m\omega_g\mathcal R_g(\boldsymbol\theta)$ and $\widehat{\mathcal R}_{\mathrm{\method{}}}(\boldsymbol\theta)=\sup_{\boldsymbol\omega\in\mathcal U^\omega}\sum_{g=1}^m\omega_g\widehat{\mathcal R}_g(\boldsymbol\theta)$, respectively. Let $\boldsymbol\theta^\star\in\arg\min_{\boldsymbol\theta\in\Theta}\mathcal R_{\mathrm{\method{}}}(\boldsymbol\theta)$ and $\widehat{\boldsymbol\theta}\in\arg\min_{\boldsymbol\theta\in\Theta}\widehat{\mathcal R}_{\mathrm{\method{}}}(\boldsymbol\theta)$ denote the population and empirical minimizers, respectively. The following theorem establishes a convergence rate governed by the smallest group sample size $n_{\min}=\min_{g\in[m]}n_g$.
\begin{restatable}[Finite-sample convergence of \method{}]{theorem}{HPOconvergence}
\label{thm:hpo-main}
Let \(M_\beta = 4\beta B C_\phi\) and \(\kappa_\beta = \frac{\beta^2 e^{M_\beta}}{(1+e^{M_\beta})^2}\). Suppose Assumptions~\ref{ass:loglinear-bounded} and~\ref{ass:uniform-coverage} hold. With probability at least \(1-\delta\), we have
\[
\|\widehat{\boldsymbol \theta}-\boldsymbol \theta^\star\|_2
\le
2
\sqrt{
\frac{
\max_{g\in[m]}
\left[
2A_g\sqrt{\frac{d}{n_g}}
+
K_g
\sqrt{\frac{\log(2m/\delta)}{2n_g}}
\right]
}{
\kappa_\beta\lambda
}
},
\]
where $K_g=\log(1+e^{M_\beta})+\rho_g^S M_\beta$,
$L_g=2\beta C_\phi(1+\rho_g^S)$,
$A_g=C K_g\sqrt{\log\!\big(e+3BL_g/K_g\big)}$,
and $C>0$ is a universal constant. The proof is in Appendix~\ref{app:proof-theorem1}.
\end{restatable}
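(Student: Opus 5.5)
The plan has two pieces: a uniform deviation bound between $\widehat{\mathcal R}_{\method{}}$ and $\mathcal R_{\method{}}$, and a strong-convexity (quadratic growth) lower bound for $\mathcal R_{\method{}}$ around $\boldsymbol\theta^\star$. Combined, they give
\[
\tfrac{\kappa_\beta\lambda}{2}\|\widehat{\boldsymbol\theta}-\boldsymbol\theta^\star\|_2^2
\le \mathcal R_{\method{}}(\widehat{\boldsymbol\theta})-\mathcal R_{\method{}}(\boldsymbol\theta^\star)
\le 2\sup_{\boldsymbol\theta\in\Theta}\bigl|\widehat{\mathcal R}_{\method{}}(\boldsymbol\theta)-\mathcal R_{\method{}}(\boldsymbol\theta)\bigr|.
\]
The right-hand inequality follows by adding and subtracting $\widehat{\mathcal R}_{\method{}}$ and using $\widehat{\mathcal R}_{\method{}}(\widehat{\boldsymbol\theta})\le\widehat{\mathcal R}_{\method{}}(\boldsymbol\theta^\star)$. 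Taking square roots produces the stated form $2\sqrt{\max_g[\cdot]/(\kappa_\beta\lambda)}$, provided the uniform deviation is at most $\max_g[2A_g\sqrt{d/n_g}+K_g\sqrt{\log(2m/\delta)/(2n_g)}]$. The constant bookkeeping (a factor 2 from the deviation, a factor $1/2$ from the growth) should be tuned so that exactly this $2\sqrt{\cdot}$ survives.

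\textbf{Uniform deviation.} Since $\sup_{\boldsymbol\omega}$ is $1$-Lipschitz in the vector $(\mathcal R_g)_g$ under $\ell_\infty$ and $\boldsymbol\omega$ lies in the simplex, we have $\sup_{\boldsymbol\theta}|\widehat{\mathcal R}_{\method{}}-\mathcal R_{\method{}}|\le\max_g\sup_{\boldsymbol\theta}|\widehat{\mathcal R}_g-\mathcal R_g|$. The $\gamma_g\rho_g^X$ terms cancel. For each group I will bound the loss class $\{\ell^g_{\boldsymbol\theta}(\cdot;\gamma_g):\boldsymbol\theta\in\Theta\}$ as follows.
\begin{enumerate}[leftmargin=*]
\item \emph{Range $K_g$.} Under Assumption~\ref{ass:loglinear-bounded}, any margin, even at a perturbed embedding, is bounded by $|m|\le 2\beta\|\boldsymbol\theta-\boldsymbol\theta_{\mathrm{ref}}\|C_\phi\le M_\beta$. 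The preference-level worst case adds at most a $\rho_g^S$-weighted margin term, and the transport penalty is nonpositive while $\boldsymbol\delta=0$ is feasible. So $\ell^g_{\boldsymbol\theta}\in[0,\log(1+e^{M_\beta})+\rho_g^SM_\beta]=[0,K_g]$.
\item \emph{Lipschitz constant $L_g$ in $\boldsymbol\theta$.} A supremum over $\boldsymbol\delta$ of functions that are $L$-Lipschitz in $\boldsymbol\theta$ is again $L$-Lipschitz. Each margin has $\boldsymbol\theta$-gradient bounded by $2\beta C_\phi$, $\varphi$ is $1$-Lipschitz, and the preference part contributes the factor $(1+\rho_g^S)$. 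This gives $L_g$.
\item \emph{Concentration.} An $\varepsilon$-net of the ball of radius $B$ has $\log N\le d\log(3B/\varepsilon)$. Dudley's entropy integral, or equivalently a Rademacher bound with chaining, gives expected deviation $\le 2CK_g\sqrt{\log(e+3BL_g/K_g)}\sqrt{d/n_g}=2A_g\sqrt{d/n_g}$. McDiarmid's inequality with bounded differences $K_g/n_g$ adds $K_g\sqrt{\log(2m/\delta)/(2n_g)}$. A union bound over the $m$ groups, allotting $\delta/m$ to each with two-sided tails, yields the $\log(2m/\delta)$ factor.
\end{enumerate}

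\textbf{Quadratic growth.} This is the step I expect to be the main obstacle. The plan is to lower-bound the curvature of each $\mathcal R_g$ along the segment between $\boldsymbol\theta^\star$ and $\widehat{\boldsymbol\theta}$.
\begin{enumerate}[leftmargin=*]
\item For fixed $\boldsymbol\delta$, the map $\boldsymbol\theta\mapsto\varphi(\beta\boldsymbol\theta^\top z-\text{const})$ has Hessian $\varphi''\beta^2zz^\top$. Since $|\text{margin}|\le M_\beta$, we get $\varphi''\ge e^{M_\beta}/(1+e^{M_\beta})^2$, so the Hessian dominates $\kappa_\beta zz^\top$.
\item By Danskin's theorem the envelope $\ell^g_{\boldsymbol\theta}$ has subgradient $\nabla_{\boldsymbol\theta}$ evaluated at $\boldsymbol\delta^\star_{g,\boldsymbol\theta}$. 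I will then argue a first-order growth inequality:
\[
\ell^g_{\boldsymbol\theta'}\ge\ell^g_{\boldsymbol\theta}+\langle\nabla,\boldsymbol\theta'-\boldsymbol\theta\rangle+\tfrac{\kappa_\beta}{2}\bigl((\boldsymbol\theta'-\boldsymbol\theta)^\top\tilde z\bigr)^2.
\]
This uses that the supremum at $\boldsymbol\theta'$ dominates the value at $\boldsymbol\delta^\star_{g,\boldsymbol\theta}$, so the curvature is evaluated at the maximizer required by Assumption~\ref{ass:uniform-coverage}.
\item The preference-level worst case is handled by treating it as a convex combination of $\varphi(\pm m)$ with weight at least $1-\rho_g^S$ on the observed label. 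The cleanest route is to absorb it as a linear margin penalty, so that the curvature comes from $\varphi(m)$ alone.
\item Take expectations, weight by the worst-case $\boldsymbol\omega$ attained at $\boldsymbol\theta^\star$, and apply Assumption~\ref{ass:uniform-coverage} to obtain $\mathbb E[\tilde z\tilde z^\top]\succeq\lambda I$. Combining with first-order optimality of $\boldsymbol\theta^\star$ over the convex set $\Theta$ for the convex function $\mathcal R_{\method{}}$ (a supremum of convex functions) gives
\[
\mathcal R_{\method{}}(\widehat{\boldsymbol\theta})-\mathcal R_{\method{}}(\boldsymbol\theta^\star)\ge\tfrac{\kappa_\beta\lambda}{2}\|\widehat{\boldsymbol\theta}-\boldsymbol\theta^\star\|_2^2.
\]
\end{enumerate}
If convexity of the envelope in $\boldsymbol\theta$ fails, I would fall back on using the growth inequality at $\boldsymbol\theta^\star$ directly. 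That route needs only the maximizer at $\boldsymbol\theta^\star$, together with the first-order condition expressed through the Danskin subgradient.
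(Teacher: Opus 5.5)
Everything outside the final step of your quadratic-growth argument matches the paper's proof: the range $K_g$, the Lipschitz constant $L_g$ of the envelope, the net-plus-Dudley bound $\mathfrak R_{n_g}\le A_g\sqrt{d/n_g}$, McDiarmid with a union bound, the reduction to $\max_g\epsilon_g$, and $\frac{\kappa_\beta\lambda}{2}\|\widehat{\boldsymbol\theta}-\boldsymbol\theta^\star\|_2^2\le 2\epsilon_{\rm H}$, which yields the constant $2\sqrt{\cdot}$ exactly.

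The gap is in step 4. Integrate your growth inequality with a worst-case $\boldsymbol\omega^\star$ and perturbations $\boldsymbol\delta^\star_{g,\boldsymbol\theta^\star}$ selected at $\boldsymbol\theta^\star$. This gives
\[
\mathcal R_{\method{}}(\widehat{\boldsymbol\theta})\ge\mathcal R_{\method{}}(\boldsymbol\theta^\star)+\langle\bar{\boldsymbol g},\widehat{\boldsymbol\theta}-\boldsymbol\theta^\star\rangle+\tfrac{\kappa_\beta\lambda}{2}\|\widehat{\boldsymbol\theta}-\boldsymbol\theta^\star\|_2^2,
\]
where $\bar{\boldsymbol g}$ is the subgradient generated by that particular selection. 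To drop the linear term you need $\langle\bar{\boldsymbol g},\widehat{\boldsymbol\theta}-\boldsymbol\theta^\star\rangle\ge0$. First-order optimality over $\Theta$ only guarantees that \emph{some} $\boldsymbol g\in\partial\mathcal R_{\method{}}(\boldsymbol\theta^\star)$ satisfies the variational inequality on $\Theta$.

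When the maximizing $(\boldsymbol\omega,\boldsymbol\delta)$ is not unique, that certificate is generally a convex combination, or a limit, of gradients from different selections. The gradient of the one you picked can then point the wrong way. For example, take $\max\{(\theta-1)^2,(\theta+1)^2\}$ at $\theta^\star=0$: the active piece $(\theta-1)^2$ has gradient $-2$, which fails the inequality for every $\widehat\theta>0$. This is not a corner case. At a group-robust optimum the group risks typically tie, so the worst-case $\boldsymbol\omega$ is non-unique and the envelope is kinked exactly at $\boldsymbol\theta^\star$. Your fallback has the same defect. Repairing it through a Danskin-type description of $\partial\mathcal R_{\method{}}(\boldsymbol\theta^\star)$ would need regularity beyond the stated assumptions.

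The missing idea is where to anchor the minorant, and your own growth inequality suffices once you move it. Proceed as follows:
\begin{enumerate}
\item For $\boldsymbol\theta_t=t\boldsymbol\theta_1+(1-t)\boldsymbol\theta_2$, choose worst-case weights and perturbations at $\boldsymbol\theta_t$. Assumption~\ref{ass:uniform-coverage} covers every such choice.
\item Apply the inequality with base point $\boldsymbol\theta_t$ at $\boldsymbol\theta'=\boldsymbol\theta_1$ and at $\boldsymbol\theta'=\boldsymbol\theta_2$, and average with weights $t$ and $1-t$.
\item The linear terms cancel because $t(\boldsymbol\theta_1-\boldsymbol\theta_t)+(1-t)(\boldsymbol\theta_2-\boldsymbol\theta_t)=0$. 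You are left with $t\mathcal R_{\method{}}(\boldsymbol\theta_1)+(1-t)\mathcal R_{\method{}}(\boldsymbol\theta_2)\ge\mathcal R_{\method{}}(\boldsymbol\theta_t)+\frac{\kappa_\beta\lambda}{2}t(1-t)\|\boldsymbol\theta_1-\boldsymbol\theta_2\|_2^2$, i.e.\ genuine $\kappa_\beta\lambda$-strong convexity of $\mathcal R_{\method{}}$.
\end{enumerate}
This is precisely the paper's minorant $F_t$, which satisfies $F_t\le\mathcal R_{\method{}}$ and $F_t(\boldsymbol\theta_t)=\mathcal R_{\method{}}(\boldsymbol\theta_t)$. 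Quadratic growth at the constrained minimizer then needs no subgradient calculus. Set $\boldsymbol\theta_1=\widehat{\boldsymbol\theta}$ and $\boldsymbol\theta_2=\boldsymbol\theta^\star$, use $\mathcal R_{\method{}}(\boldsymbol\theta_t)\ge\mathcal R_{\method{}}(\boldsymbol\theta^\star)$, divide by $t$, and let $t\to0$.
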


\section{Implementation Algorithm}
\subsection{Wasserstein Instantiation}

To obtain a tractable learning objective, we instantiate the discrepancy measures $\mathcal{D}_{\omega}$, $\mathcal{D}_X$, and $\mathcal{D}_S$ with the Wasserstein distance~\citep{Wiesemann2014DistributionallyRC,Esfahani2015DatadrivenDR,Sinha2018Certifying}.

\begin{definition}[Wasserstein Distance]
Let $P$ and $Q$ be probability measures on a measurable space $\mathcal{Z}$ equipped with a cost function $c:\mathcal{Z}\times\mathcal{Z}\to[0,\infty)$.
The Wasserstein distance induced by $c$ is
\begin{equation}
\label{eq:wasserstein-distance}
\mathcal{W}_c(P,Q)
:=
\inf_{\xi\in\Gamma(P,Q)}
\int_{\mathcal{Z}\times\mathcal{Z}} c(u,u')\,\mathrm{d}\xi(u,u'),
\end{equation}
where $\Gamma(P,Q)$ denotes the set of all couplings of $P$ and $Q$.
\end{definition}

At the global level, we model uncertainty in the group mixture weights over the finite set $\mathcal{G}=\{1,\dots,m\}$. Since the groups have no natural geometry, we equip $\mathcal{G}$ with the discrete cost $d_G(g,g')=\mathbf{1}[g\neq g']$. Under this cost, the Wasserstein distance reduces to the total variation distance, i.e., $\mathcal{W}_{d_G}(\boldsymbol{\omega},\boldsymbol{\alpha})=\frac{1}{2}\|\boldsymbol{\omega}-\boldsymbol{\alpha}\|_1$.
At the local level, we model within-group uncertainty through Wasserstein balls:
\[
\mathcal W_{c_X}(Q_g^X,P_g^X)\le \rho_g^X,
\qquad
\mathcal W_{c_S}\!\left(
Q_g^S(\cdot\mid\boldsymbol{x},\boldsymbol{v}),
P_g^S(\cdot\mid\boldsymbol{x},\boldsymbol{v})
\right)\le \rho_g^S.
\]
Here, \(c_X\) is the transport cost on the prompt space and \(c_S\) is the label-flip cost on \(\{+1,-1\}\). The radii \(\rho_g^X\) and \(\rho_g^S\) independently control prompt-level and preference-level robustness within group \(g\).

\subsection{Tractable Objective}
\label{sec:tractable-objective}

In this section, we derive a tractable reformulation by successively robustifying the conditional preference distribution, the prompt marginal distribution, and the distribution over group weights.
\paragraph{Preference-level robustification.}
Recall that, for group \(g\), the conditional distribution factorizes as $P_g(V,S\mid X)=P_g^V(V\mid X)\,P_g^S(S\mid X,V)$. Since the candidate response pair \(V\) is observed and fixed in the offline dataset, the remaining preference-level uncertainty lies in the conditional distribution over binary labels \(P_g^S(S\mid X,V)\). Thus, we model preference-level shift as a perturbation of this binary label distribution, holding the response pair fixed. In standard DPO data, each comparison is encoded with $S=+1$, so $P_g^S(\cdot\mid\boldsymbol{x},\boldsymbol{v})=\delta_{+1}$. For each group \(g\) and each prompt and response pair \((\boldsymbol{x},\boldsymbol{v})=(\boldsymbol{x},\boldsymbol{y}_1,\boldsymbol{y}_2)\), we define
\begin{equation}
\label{eq:pref-label-ball}
\mathcal{U}_{g,\boldsymbol{x},\boldsymbol{v}}^S
=
\Bigl\{q\in\Delta(\{+1,-1\}):
\mathcal{W}_{c_S}(q,\delta_{+1})\le\rho_g^S\Bigr\},
\end{equation}
with the label-flip cost $c_S(s,s')=\mathbf{1}[s\neq s']$. Under this cost, $\mathcal{W}_{c_S}(q,\delta_{+1})$ equals the probability of $s=-1$ under $q$, so the ball in Eq.~\eqref{eq:pref-label-ball} allows at most a fraction $\rho_g^S$ of the label mass to be flipped from $+1$ to $-1$. The worst-case preference-level robust loss is then
\begin{equation}
\label{eq:pref-robust-sample-loss}
\bar{\mathcal{L}}_g(\boldsymbol{x},\boldsymbol{y}_1,\boldsymbol{y}_2;\boldsymbol{\theta})
:=
\sup_{q\in\mathcal{U}_{g,\boldsymbol{x},\boldsymbol{v}}^S}
\mathbb{E}_{S\sim q}\bigl[\mathcal{L}(\boldsymbol{x},\boldsymbol{y}_1,\boldsymbol{y}_2,S;\boldsymbol{\theta})\bigr].
\end{equation}

\begin{proposition}[Closed-form preference-level robust loss]
\label{prop:closed-form-robust-loss}
For any group $g$, prompt and candidate response pair $(\boldsymbol{x},\boldsymbol{y}_1,\boldsymbol{y}_2)$, we have 
\begin{equation}
\label{eq:closed-form-psi}
\bar{\mathcal{L}}_g(\boldsymbol{x},\boldsymbol{y}_1,\boldsymbol{y}_2;\boldsymbol{\theta})
=
\varphi\bigl(m(\boldsymbol{x},\boldsymbol{y}_1,\boldsymbol{y}_2;\boldsymbol{\theta})\bigr)
+
\rho_g^S\bigl[m(\boldsymbol{x},\boldsymbol{y}_1,\boldsymbol{y}_2;\boldsymbol{\theta})\bigr]_+,
\end{equation}
where $[t]_+=\max\{t,0\}$. The proof is provided in Appendix~\ref{app:proof-prop1}.
\end{proposition}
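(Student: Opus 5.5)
We reduce the supremum in Eq.~\eqref{eq:pref-robust-sample-loss} to a one-dimensional linear program over the flip probability. Write $m:=m(\boldsymbol{x},\boldsymbol{y}_1,\boldsymbol{y}_2;\boldsymbol{\theta})$ for brevity.

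First, parametrize the binary-label distributions as $q=(1-p)\delta_{+1}+p\,\delta_{-1}$ with $p\in[0,1]$. Any coupling of $q$ with $\delta_{+1}$ is forced to be $q\otimes\delta_{+1}$, since the second marginal is a point mass. Hence the label-flip cost gives
\[
\mathcal{W}_{c_S}(q,\delta_{+1})=\mathbb{E}_{S\sim q}\bigl[\mathbf 1[S\neq +1]\bigr]=p.
\]
The ball in Eq.~\eqref{eq:pref-label-ball} is therefore exactly $\{p\in[0,\rho_g^S]\}$, using $\rho_g^S\in[0,1]$.

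Second, the expectation is affine in $p$:
\[
\mathbb{E}_{S\sim q}[\varphi(Sm)]=(1-p)\varphi(m)+p\,\varphi(-m)=\varphi(m)+p\bigl(\varphi(-m)-\varphi(m)\bigr).
\]
The key identity is $\varphi(-m)-\varphi(m)=\log\frac{1+e^{m}}{1+e^{-m}}=m$, because $1+e^{m}=e^{m}(1+e^{-m})$.

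Third, maximize the affine function $\varphi(m)+p\,m$ over $p\in[0,\rho_g^S]$. If $m>0$ the maximum is attained at $p=\rho_g^S$, giving $\varphi(m)+\rho_g^S m$. If $m\le 0$ it is attained at $p=0$, giving $\varphi(m)$. In both cases the value equals $\varphi(m)+\rho_g^S[m]_+$, which is Eq.~\eqref{eq:closed-form-psi}.

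The only delicate step is the first: identifying the Wasserstein ball with an interval for $p$, which relies on the triviality of couplings with a Dirac mass. Everything after that is elementary.
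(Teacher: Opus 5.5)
Your proposal is correct and follows the paper's proof: you parametrize $q$ by the flip mass, identify the Wasserstein ball with the interval $[0,\rho_g^S]$, use the identity $\varphi(-m)-\varphi(m)=m$, and maximize the resulting affine function of the flip mass. The one addition is your justification that any coupling with $\delta_{+1}$ must be the product measure, which makes explicit a step the paper simply asserts.
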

\begin{proposition}[Optimal robust preference margin]
\label{prop:optimal-margin}
For any group \(g\), let \(M=m(\boldsymbol{x},\boldsymbol{y}_1,\boldsymbol{y}_2;\boldsymbol{\theta})\) denote the DPO margin defined in Eq.~\eqref{eq:prelim-dpo-margin}. Then $\mathcal{L}_g(M)=\varphi(M)+\rho_g^S[M]_+$ is minimized at
\begin{equation}
\label{eq:optimal-margin}
M_g^\star
=
\begin{cases}
+\infty, & \rho_g^S=0,\\[4pt]
\log\dfrac{1-\rho_g^S}{\rho_g^S}, & 0<\rho_g^S<\tfrac{1}{2},\\[8pt]
0, & \rho_g^S\ge\tfrac{1}{2}.
\end{cases}
\end{equation}
The proof is provided in Appendix~\ref{app:proof-prop2}.
\end{proposition}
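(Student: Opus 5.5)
The plan is to treat $\mathcal L_g(M)=\varphi(M)+\rho_g^S[M]_+$ as a one-dimensional convex function of the scalar margin $M\in\mathbb R$ and characterize its minimizer by the first-order (subgradient) optimality condition. Convexity is immediate. $\varphi(t)=\log(1+e^{-t})$ is convex, since $\varphi''(t)=e^{t}/(1+e^{t})^2>0$. The map $[t]_+$ is convex, and $\rho_g^S\ge 0$. Hence a point $M^\star$ is a global minimizer iff $0\in\partial\mathcal L_g(M^\star)$. We will use the derivative $\varphi'(t)=-1/(1+e^{t})$, which is negative, strictly increasing, and tends to $0$ as $t\to\infty$ and to $-1$ as $t\to-\infty$.

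We split $\mathbb R$ into three regions.
\begin{itemize}
\item On $M<0$, the penalty vanishes and $\mathcal L_g'(M)=\varphi'(M)<0$. So $\mathcal L_g$ is strictly decreasing there, and no minimizer lies in $(-\infty,0)$.
\item On $M>0$, we have $\mathcal L_g'(M)=\rho_g^S-1/(1+e^{M})$. Setting this to zero gives $1+e^{M}=1/\rho_g^S$, i.e.\ $M=\log\frac{1-\rho_g^S}{\rho_g^S}$. This is strictly positive exactly when $0<\rho_g^S<\tfrac12$. Since $\mathcal L_g'$ is strictly increasing on $(0,\infty)$, it is the unique stationary point there.
\item At the kink $M=0$, the subdifferential is $\partial\mathcal L_g(0)=\varphi'(0)+\rho_g^S[0,1]=[-\tfrac12,\,-\tfrac12+\rho_g^S]$. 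It contains $0$ iff $\rho_g^S\ge\tfrac12$.
\end{itemize}

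Assembling the cases:
\begin{itemize}
\item If $\rho_g^S\ge\tfrac12$, then $M^\star=0$ satisfies the optimality condition. Moreover, for $M>0$ we have $\mathcal L_g'(M)\ge\tfrac12-1/(1+e^M)>0$, so $0$ is the (unique) minimizer. At $\rho_g^S=\tfrac12$ the interior formula also gives $\log 1=0$, so the cases agree.
\item If $0<\rho_g^S<\tfrac12$, then $0\notin\partial\mathcal L_g(0)$, and the interior stationary point $\log\frac{1-\rho_g^S}{\rho_g^S}$ is the minimizer by convexity.
\end{itemize}

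The only delicate step is the degenerate case $\rho_g^S=0$. Here $\mathcal L_g=\varphi$ is strictly decreasing on all of $\mathbb R$ with infimum $0$ that is not attained. The statement $M_g^\star=+\infty$ must therefore be read as ``the infimum is approached only as $M\to+\infty$.'' I will state this explicitly, and note consistency with the limit $\log\frac{1-\rho}{\rho}\to+\infty$ as $\rho\downarrow 0$. Beyond this interpretive point, the argument is a routine convex subgradient calculation.
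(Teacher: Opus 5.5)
Your proposal is correct and follows essentially the same route as the paper's proof: the same piecewise derivative analysis on $M<0$ and $M>0$, the same stationary point $\log\frac{1-\rho_g^S}{\rho_g^S}$ with positivity exactly when $\rho_g^S<\tfrac12$, and the same reading of $M_g^\star=+\infty$ as an unattained infimum when $\rho_g^S=0$. The only cosmetic difference is how global optimality is certified. You use convexity and the subdifferential $[-\tfrac12,-\tfrac12+\rho_g^S]$ at the kink, whereas the paper uses monotonicity on each side of $0$; the two arguments amount to the same thing.
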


\begin{remark}
Proposition~\ref{prop:closed-form-robust-loss} shows that preference-level Wasserstein robustness yields a robustified loss that appends a one-sided margin penalty scaled by $\rho_g^S$.
Proposition~\ref{prop:optimal-margin} indicates that, unlike standard DPO, preference-level Wasserstein robustness does not drive the margin to grow unboundedly. Instead, it yields a finite optimal target margin. Moreover, this target margin decreases as $\rho_g^S$ increases, implying that stronger uncertainty in the conditional preference distribution yields a more conservative optimum and reduces overconfidence in weak or potentially misaligned preferences.
\end{remark}

\paragraph{Prompt-level robustification.}
We next robustify the prompt marginal distribution. For each group \(g\), the local robust risk is
\begin{equation}
\label{eq:group-local-risk}
\mathcal{R}_g(\boldsymbol{\theta})
:=
\sup_{Q_g^X\in\mathcal{U}_g^X}
\mathbb{E}_{X\sim Q_g^X,\; V\sim P_g^V(\cdot\mid X)}
\bigl[\psi_g(X,V;\boldsymbol{\theta})\bigr],
\end{equation}
where \(\psi_g(\boldsymbol{x},\boldsymbol{v};\boldsymbol{\theta})\) is the worst-case preference loss in Eq.~\eqref{eq:conditional-robust-loss}. Under the label-flip Wasserstein ball in Eq.~\eqref{eq:pref-label-ball}, \(\psi_g\) admits the closed form \(\bar{\mathcal{L}}_g(\boldsymbol{x},\boldsymbol{y}_1,\boldsymbol{y}_2;\boldsymbol{\theta})\) given in Proposition~\ref{prop:closed-form-robust-loss}.

To obtain a tractable empirical formulation, we follow the robust NLP fine-tuning literature and perturb prompts in the embedding space rather than the discrete token space~\citep{miyato2017adversarial,zhu2020freelb,jiang-etal-2020-smart}. Let $\boldsymbol{h}=\mathcal{E}(\boldsymbol{x})\in\mathbb{R}^d$ denote the prompt embedding and $\boldsymbol z_i=(\boldsymbol{h}_i,\boldsymbol{y}_i^+,\boldsymbol{y}_i^-)$ the $i$-th embedded observation. For group $g$, the empirical distribution over embedded preference observations is $\widehat P_g=\frac{1}{n_g}\sum_{i\in I_g}\delta_{\boldsymbol z_i}$.

Following Wasserstein DRO with local input perturbations~\citep{kwon20a} and embedding-space adversarial training in NLP~\citep{zhu2020freelb,miyato2017adversarial,jiang-etal-2020-smart}, we perturb the prompt embedding of each preference observation while keeping the associated response pair fixed. 
Accordingly, we define the transport cost
\begin{equation}
\label{eq:prompt-transport-cost}
c_X\bigl((\boldsymbol{h},\boldsymbol{y}^+,\boldsymbol{y}^-),
          (\boldsymbol{h}',\tilde{\boldsymbol{y}}^+,\tilde{\boldsymbol{y}}^-)\bigr)
=
\begin{cases}
\frac{1}{2}\|\boldsymbol{h}-\boldsymbol{h}'\|_2^2,
  & (\boldsymbol{y}^+,\boldsymbol{y}^-)=(\tilde{\boldsymbol{y}}^+,\tilde{\boldsymbol{y}}^-),\\
+\infty, & \text{otherwise.}
\end{cases}
\end{equation}
For this empirical approximation, we define the local robust risk as
\begin{equation}
\label{eq:empirical-local-risk}
\widehat{\mathcal{R}}_g(\boldsymbol{\theta})
=
\sup_{\widetilde{Q}_g^X:\,\mathcal{W}_{c_X}(\widetilde{Q}_g^X,\widehat{P}_g)\le\rho_g^X}
\mathbb{E}_{(\boldsymbol{h},\boldsymbol{y}^+,\boldsymbol{y}^-)\sim\widetilde{Q}_g^X}
\bigl[\bar{\mathcal{L}}_g(\boldsymbol{h},\boldsymbol{y}^+,\boldsymbol{y}^-;\boldsymbol{\theta})\bigr].
\end{equation}
\vspace{-0.5em}
\begin{lemma}[Wasserstein DRO duality~\citep{Sinha2018Certifying,ZhangYangGao2025}]
\label{lem:wasserstein-duality}
Under standard regularity conditions for Wasserstein DRO duality, for any reference distribution $Q$, radius $\rho\ge0$, loss $\ell$, and transport cost $c$, we have
\begin{equation}
\label{eq:wasserstein-duality}
\sup_{P:\,\mathcal{W}_c(P,Q)\le\rho}
\mathbb{E}_P[\ell(Z)]
=
\inf_{\gamma\ge 0}
\Bigl\{
\gamma\rho
+\mathbb{E}_Q\bigl[\ell_p(Z;\gamma)\bigr]
\Bigr\},
\end{equation}
where $\ell_p(u_0;\gamma)=\sup_{u}\{\ell(u)-\gamma\,c(u,u_0)\}$ is the Wasserstein dual surrogate loss.
\end{lemma}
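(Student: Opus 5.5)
This is the Wasserstein DRO strong duality result (Lemma~\ref{lem:wasserstein-duality}). I would prove it in two parts: an easy weak-duality inequality ($\le$), then the reverse inequality via a Lagrangian/interchange argument. This follows \citet{Sinha2018Certifying} and the general strong-duality theory of Blanchet--Murthy and Gao--Kleywegt.

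\textbf{Weak duality.} Fix $\gamma\ge0$ and any $P$ with $\mathcal W_c(P,Q)\le\rho$. Let $\xi\in\Gamma(P,Q)$ be an (approximately) optimal coupling. By definition of $\ell_p$, for every pair $(u,u_0)$ we have $\ell(u)\le \ell_p(u_0;\gamma)+\gamma c(u,u_0)$. Integrating against $\xi$ gives
\[
\mathbb E_P[\ell]\le \mathbb E_Q[\ell_p(Z;\gamma)]+\gamma\int c\,d\xi\le \mathbb E_Q[\ell_p(Z;\gamma)]+\gamma\rho+\gamma\epsilon .
\]
Letting $\epsilon\to0$, taking the supremum over $P$ and then the infimum over $\gamma$ yields the $\le$ direction.

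\textbf{Strong duality.} First rewrite the primal as a supremum over couplings $\xi$ whose second marginal is $Q$, subject to $\int c\,d\xi\le\rho$. The objective $\int \ell(u)\,d\xi$ and the constraint are both linear in $\xi$, so this is an infinite-dimensional linear program with a single moment constraint. Dualizing that constraint with a multiplier $\gamma\ge0$ gives
\[
\sup_{\xi}\inf_{\gamma\ge0}\Bigl\{\gamma\rho+\int\bigl(\ell(u)-\gamma c(u,u_0)\bigr)\,d\xi\Bigr\}.
\]
To exchange sup and inf, I would apply a convex-analysis argument (Sion's theorem, or the Lagrangian duality theorem of Luenberger for a single inequality constraint). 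This needs a Slater point, which exists when $\rho>0$: $\xi=(\mathrm{id},\mathrm{id})_\#Q$ has cost $0<\rho$. The case $\rho=0$ can be handled directly, or by a limiting argument, under the lower-semicontinuity assumptions included in the "standard regularity conditions". After the exchange, the inner supremum over couplings with fixed second marginal $Q$ separates pointwise in $u_0$. It equals $\mathbb E_Q\bigl[\sup_u\{\ell(u)-\gamma c(u,u_0)\}\bigr]=\mathbb E_Q[\ell_p(Z;\gamma)]$.

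\textbf{Main obstacle.} The delicate step is this last interchange of the supremum with the expectation. It requires choosing a near-maximizer $u^\star(u_0)$ measurably, and hence needs a measurable-selection theorem. This is where the regularity conditions enter: $\ell$ upper semicontinuous, $c$ lower semicontinuous, and the space Polish. With these, one builds $\xi=(u^\star,\mathrm{id})_\#Q$, approximating to within $\epsilon$ and handling possibly infinite values of $\ell_p$ through truncation. The remaining step, verifying that the constraint qualification holds, is routine given the Slater point above.
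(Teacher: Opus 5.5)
The paper gives no proof of this lemma; it is imported from the cited works. Your outline is the standard argument behind those results and is sound for $\rho>0$: weak duality by integrating $\ell(u)\le\ell_p(u_0;\gamma)+\gamma c(u,u_0)$ against a near-optimal coupling, Lagrangian duality for the single linear constraint over couplings with second marginal $Q$ using the Slater point $(\mathrm{id},\mathrm{id})_\#Q$, and the interchangeability principle via $\epsilon$-optimal measurable selectors.

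\textbf{The gap is $\rho=0$.} The statement includes this case, and your claim that it follows ``directly, or by a limiting argument, under the lower-semicontinuity assumptions'' is false. Take $\mathcal Z=\mathbb R$, $Q=\delta_0$, $\ell(u)=\min\{1,|u|\}$ and $c(u,u_0)=|u-u_0|/(1+|u-u_0|^2)$. Both are continuous, $\ell$ is bounded, and $c\ge0$ vanishes only on the diagonal. At $\rho=0$ the only feasible $P$ is $\delta_0$, so the left-hand side is $0$. But $\ell_p(0;\gamma)\ge\sup_{n\in\mathbb N}\{1-\gamma n/(1+n^2)\}=1$ for every $\gamma\ge0$, so the right-hand side is $1$. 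For every $\rho>0$ both sides equal $1$, via $P=\delta_n$ with $n$ large.

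The structural reason is as follows. Let $D(\rho)$ denote the dual value. You have shown $D(\rho)=v(\rho)$ for $\rho>0$, where $v$ is the primal value. $D$ is nondecreasing and upper semicontinuous, being an infimum of nondecreasing affine functions of $\rho$, so $D(0)=\lim_{\rho\downarrow0}v(\rho)$. Hence strong duality at $\rho=0$ holds exactly when $v$ is right-continuous at $0$. Semicontinuity of $\ell$ and $c$ on a non-compact Polish space does not give this: in the example, cheap transport to far-away points makes $v$ jump at $0$.

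\textbf{How to repair it.} Either state the lemma for $\rho>0$ only, which is how the cited result of Sinha et al.\ is stated, or add a hypothesis forcing $\mathbb E_Q[\ell_p(Z;\gamma)]\to\mathbb E_Q[\ell(Z)]$ as $\gamma\to\infty$. A convenient sufficient condition is that $\ell\in L^1(Q)$ is upper semicontinuous and bounded above, and $c(u,u_0)\ge\kappa\,d(u,u_0)^2$. The argument then runs as follows:
\begin{itemize}
\item Given $\epsilon>0$, choose $r>0$ with $\ell\le\ell(u_0)+\epsilon$ on the $r$-ball around $u_0$.
\item For $\gamma\ge(\sup\ell-\ell(u_0))/(\kappa r^2)$, points outside the ball contribute at most $\ell(u_0)$, so $\ell_p(u_0;\gamma)\downarrow\ell(u_0)$ pointwise.
\item Dominated convergence, using $\ell(u_0)\le\ell_p(u_0;\gamma)\le\sup\ell$, gives convergence of the expectations.
\item The primal at $\rho=0$ equals $\mathbb E_Q[\ell]$, since a zero-cost coupling lives on the diagonal.
\end{itemize}
This covers the setting where the paper actually applies the lemma, including the degenerate radii $\rho_g^X=0$ discussed after Eq.~\eqref{eq:hpo-final}. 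That setting is the quadratic embedding cost $c_X$, which is $+\infty$ off the fixed response pair, together with $0\le\bar{\mathcal L}_g\le K_g$, provided $\bar{\mathcal L}_g$ is upper semicontinuous in $\boldsymbol h$.
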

Applying Lemma~\ref{lem:wasserstein-duality} to Eq.~\eqref{eq:empirical-local-risk} with $\ell=\bar{\mathcal{L}}_g$ and $c=c_X$ yields the dual form
\begin{equation}
\label{eq:dual-local-risk}
\widehat{\mathcal{R}}_g(\boldsymbol{\theta})
=
\inf_{\gamma_g\ge 0}
\left\{
\gamma_g\rho_g^X
+\frac{1}{n_g}\sum_{i\in I_g}
\ell_{\boldsymbol{\theta}}^{g}(\boldsymbol z_i;\gamma_g)
\right\},
\end{equation}
where \(\boldsymbol z_i=(\boldsymbol{h}_i,\boldsymbol{y}_i^+,\boldsymbol{y}_i^-)\). Under the transport cost in Eq.~\eqref{eq:prompt-transport-cost}, changing the response pair incurs infinite cost, so the inner maximization only searches over perturbations of the prompt embedding:
\begin{equation}
\label{eq:dual-prompt-perturb}
\ell_{\boldsymbol{\theta}}^{g}(\boldsymbol z_i;\gamma_g)
=
\sup_{\boldsymbol{\delta}\in\mathbb{R}^d}
\left\{
\bar{\mathcal{L}}_g(\boldsymbol{h}_i+\boldsymbol{\delta},
\boldsymbol{y}_i^+,\boldsymbol{y}_i^-;\boldsymbol{\theta})
-\frac{\gamma_g}{2}\|\boldsymbol{\delta}\|_2^2
\right\}.
\end{equation}

\paragraph{Final empirical objective.}
Combining the global Wasserstein reweighting with the dual form of the local risks, the final empirical \method{} objective is
\begin{equation}
\label{eq:empirical-hpo}
\min_{\boldsymbol{\theta}\in\Theta}
\sup_{\boldsymbol{\omega}\in\mathcal{U}^{\omega}}
\sum\nolimits_{g=1}^{m}
\omega_g\,\widehat{\mathcal{R}}_g(\boldsymbol{\theta}),
\end{equation}
where $\widehat{\mathcal{R}}_g(\boldsymbol{\theta})$ is given in Eq.~\eqref{eq:dual-local-risk} and $\bar{\mathcal{L}}_g$ in Eq.~\eqref{eq:closed-form-psi}. Eq.~\eqref{eq:empirical-hpo} reduces the abstract hierarchical Wasserstein robust objective to a concrete min-max training problem.

\subsection{Optimization Algorithm}

While the dual reformulation in Eq.~\eqref{eq:dual-prompt-perturb} is exact,
explicitly optimizing over $\gamma_g$ at each training step is computationally
prohibitive. Following standard practice in Wasserstein robust
optimization~\citep{Sinha2018Certifying,ZhangYYXGJ19,kwon20a,BaiHJO25}, we treat
$\gamma_g$ as a fixed hyperparameter and additionally constrain each perturbation to
$\mathcal{B}_g^X=\{\boldsymbol{\delta}:
\|\boldsymbol{\delta}\|_2^2\le 2\rho_g^X\}$. For each group $g$, the resulting
penalty-based surrogate objective is
\begin{equation}
\label{eq:practical-group-risk}
\widetilde{\mathcal{R}}_g(\boldsymbol{\theta};\gamma_g)
:=
\gamma_g\rho_g^X +
\frac{1}{n_g}\sum\nolimits_{i\in I_g}
\sup_{\boldsymbol{\delta}\in\mathcal{B}_g^X}
\left[
\bar{\mathcal{L}}_g(
\boldsymbol{h}_i+\boldsymbol{\delta},
\boldsymbol{y}_i^+,\boldsymbol{y}_i^-;
\boldsymbol{\theta})
-\frac{\gamma_g}{2}\|\boldsymbol{\delta}\|_2^2
\right].
\end{equation}
The full training procedure is summarized in Algorithm~\ref{alg:hdrpo}.
Following alternating optimization strategies in adversarial training~\citep{MadryMSTV18,Sinha2018Certifying,jiao2025prattack}
and robust group reweighting~\citep{SagawaKHL20_groupdro}, each iteration proceeds in three stages.
\textbf{(i) Adversarial Perturbation.}
For each embedded preference observation
$(\boldsymbol{h}_i,\boldsymbol{y}_i^+,\boldsymbol{y}_i^-)$ in the mini-batch,
we approximate the prompt-level supremum in
Eq.~\eqref{eq:practical-group-risk} by running $K$ steps of projected gradient ascent on
$\boldsymbol{\delta}$ with the fixed quadratic penalty.
\textbf{(ii) Group-weight update.}
Using the perturbed embeddings from stage (i), we compute group-wise stochastic
estimates $\widetilde{\mathcal{R}}_g^{(t)}$ and perform one step of exponentiated
gradient ascent projected onto $\mathcal{U}^{\omega}$, which implements the
worst-case group reweighting over the $L_1$-constrained simplex.
\textbf{(iii) Policy update.} We update the policy parameters $\boldsymbol{\theta}$
by taking a gradient descent step on the weighted
surrogate objective
$\sum_{g=1}^m \omega_g^{(t)}
\widetilde{\mathcal{R}}_g^{(t)}$,
with the adversarial perturbations and group weights fixed. 

\begin{algorithm}[t]
\caption{ Alternating Optimization for \method{}}
\label{alg:hdrpo}

\begin{algorithmic}[1]
\State \textbf{Input:} Step sizes $\eta_{\theta},\eta_{\omega},\eta_{\delta}$;
         penalty parameters $\gamma_g$, 
         inner steps $K$, and total iterations $T$.
\For{$t=1,\dots,T$}
    \State Sample group mini-batches
           $\mathcal{B}^{(t)}=\{\mathcal{B}_g^{(t)}\}_{g=1}^m$,
           $\mathcal{B}_g^{(t)}\subseteq\mathcal{D}_g$
    \For{each $g\in\mathcal{G}$ and
         $(\boldsymbol{h}_i,\boldsymbol{y}_i^+,\boldsymbol{y}_i^-)\in\mathcal{B}_g^{(t)}$}
        \State Initialize $\boldsymbol{\delta}_{g,i}^{(0)}\gets\boldsymbol{0}$
        \For{$k=0,\dots,K-1$}
            \State $\boldsymbol{\delta}_{g,i}^{(k+1)}
            \gets \Pi_{\mathcal{B}_g^X}\!\left(
            \boldsymbol{\delta}_{g,i}^{(k)}
            +\eta_{\delta}
            \nabla_{\boldsymbol{\delta}}\!\left[
            \bar{\mathcal{L}}_g\!\left(
            \boldsymbol{h}_i+\boldsymbol{\delta},
            \boldsymbol{y}_i^+,\boldsymbol{y}_i^-;
            \boldsymbol{\theta}^{(t-1)}
            \right)
            -\tfrac{\gamma_g}{2}\|\boldsymbol{\delta}\|_2^2
            \right]_{\boldsymbol{\delta}=\boldsymbol{\delta}_{g,i}^{(k)}}
            \right)$
        \EndFor
    \EndFor
    \For{each $g\in\mathcal{G}$}
        \State $\widetilde{\mathcal{R}}_g^{(t)}(\boldsymbol{\theta})\gets
               \gamma_g\rho_g^X
               +\dfrac{1}{|\mathcal{B}_g^{(t)}|}
               \displaystyle\sum\nolimits_{i\in\mathcal{B}_g^{(t)}}
               \!\left[
               \bar{\mathcal{L}}_g\!\left(\boldsymbol{h}_i+\boldsymbol{\delta}_{g,i}^{(K)},
               \boldsymbol{y}_i^+,\boldsymbol{y}_i^-;\boldsymbol{\theta}\right)
               -\tfrac{\gamma_g}{2}\|\boldsymbol{\delta}_{g,i}^{(K)}\|_2^2
               \right]$
    \EndFor
    \State $\boldsymbol{\omega}^{(t)}\gets
           \Pi_{\mathcal{U}^{\omega}}\!\left(
           \boldsymbol{\omega}^{(t-1)}\odot
           \exp\bigl(\eta_{\omega}\widetilde{\mathcal{R}}^{(t)}(\boldsymbol{\theta}^{(t-1)}\bigr)
           \right)$
    \State $\boldsymbol{\theta}^{(t)}\gets\boldsymbol{\theta}^{(t-1)}
           -\eta_{\theta}\,
           \left.
           \nabla_{\boldsymbol{\theta}}\!
           \left(\sum_{g=1}^m\omega_g^{(t)}\widetilde{\mathcal{R}}_g^{(t)}(\boldsymbol{\theta})\right)
           \right|_{\boldsymbol{\theta}=\boldsymbol{\theta}^{(t-1)}}$
\EndFor
\State \textbf{Return:} $\boldsymbol{\theta}^{(T)}$
\end{algorithmic}
\end{algorithm}

\begin{table*}[t]
\centering
\small
\caption{Average and worst-group preference accuracy (\%) under different OOD shifts. The best result is in \textbf{bold} and the second-best is \underline{underlined}.}
\label{tab:emotion_avg_worst_std_maintext}

\centering

\resizebox{\linewidth}{!}{
\begin{tabular}{lccccccccccc}
\toprule
\multirow{2}{*}{Qwen-2.5-0.5B}
& \multicolumn{2}{c}{ID}
& \multicolumn{2}{c}{Group}
& \multicolumn{2}{c}{Prompt}
& \multicolumn{2}{c}{Preference}
& \multicolumn{2}{c}{Combined}
& \multirow{2}{*}{Avg. Rank} \\
\cmidrule(lr){2-3}
\cmidrule(lr){4-5}
\cmidrule(lr){6-7}
\cmidrule(lr){8-9}
\cmidrule(lr){10-11}
& Avg Acc & Worst Acc
& Avg Acc & Worst Acc
& Avg Acc & Worst Acc
& Avg Acc & Worst Acc
& Avg Acc & Worst Acc & \\
\midrule
DPO
& \sE{\underline{86.87}} & \sA{77.78}
& \sA{81.80} & \sA{76.15}
& \sA{83.17} & \sA{76.98}
& \sB{82.93} & \sA{78.16}
& \sB{82.60} & \sC{76.40} & \sB{5.30} \\

Dr.DPO
& \sA{84.81} & \sC{81.36} 
& \sA{81.65} & \sB{77.06}
& \sA{82.75} & \sA{76.84}
& \sA{82.24} & \sA{75.86}
& \sA{81.27} & \sA{75.22} & \sA{6.50} \\

WDPO
& \sA{84.87} & \sE{\underline{84.13}}
& \sB{82.72} & \sD{78.90}
& \sE{\underline{85.08}} & \sE{\underline{82.20}}
& \sD{84.01} & \sD{81.46}
& \sD{84.21} & \sD{78.76} & \sD{3.20} \\

\method{} (Group)
& \sB{84.90} & \sA{77.27}
& \sE{\underline{82.96}} & \sA{76.76}
& \sB{83.46} & \sB{78.28}
& \sA{82.31} & \sB{79.24}
& \sA{82.06} & \sB{76.11} & \sB{5.25} \\

\method{} (Prompt)
& \sD{85.93} & \sD{83.77}
& \sD{82.84} & \sE{\underline{80.43}}
& \sD{84.69} & \sD{81.05}
& \sE{\underline{85.02}} & \sF{\textbf{83.03}}
& \sE{\underline{84.37}} & \sE{\underline{79.65}} & \sE{\underline{2.40}} \\

\method{} (Preference)
& \sC{85.64} & \sB{78.28}
& \sC{82.77} & \sC{77.98}
& \sC{84.30} & \sC{79.75}
& \sC{83.28} & \sC{79.99}
& \sC{83.24} & \sB{76.11} & \sC{4.25} \\

\method{}
& \sF{\textbf{86.95}} & \sF{\textbf{85.12}}
& \sF{\textbf{84.61}} & \sF{\textbf{81.65}}
& \sF{\textbf{85.47}} & \sF{\textbf{82.56}}
& \sF{\textbf{85.44}} & \sE{\underline{82.34}}
& \sF{\textbf{85.25}} & \sF{\textbf{79.94}} & \sF{\textbf{1.10}} \\
\bottomrule
\end{tabular}
}

\begin{minipage}{\linewidth}
\centering

\resizebox{\linewidth}{!}{
\begin{tabular}{lccccccccccc}
\toprule
\multirow{2}{*}{LLaMA-3.2-1B}
& \multicolumn{2}{c}{ID}
& \multicolumn{2}{c}{Group}
& \multicolumn{2}{c}{Prompt}
& \multicolumn{2}{c}{Preference}
& \multicolumn{2}{c}{Combined} 
& \multirow{2}{*}{Avg. Rank} \\
\cmidrule(lr){2-3}
\cmidrule(lr){4-5}
\cmidrule(lr){6-7}
\cmidrule(lr){8-9}
\cmidrule(lr){10-11}
& Avg Acc & Worst Acc
& Avg Acc & Worst Acc
& Avg Acc & Worst Acc
& Avg Acc & Worst Acc
& Avg Acc & Worst Acc & \\
\midrule
DPO
& \sB{88.75} & \sC{86.19}
& \sB{88.23} & \sE\underline{83.44}
& \sB{87.81} & \sB{85.43}
& \sB{86.91} & \sF{\textbf{81.42}}
& \sB{87.44} & \sD{82.61} & \sB{4.15}\\

Dr.DPO
& \sA{84.16} & \sA{81.00}
& \sA{84.65} & \sB{79.87}
& \sA{83.73} & \sA{80.98}
& \sA{82.87} & \sA{78.87}
& \sA{85.51} & \sA{79.13} & \sA{7.00}\\

WDPO
& \sA{87.53} & \sB{84.17}
& \sA{87.24} & \sC{80.86}
& \sA{87.78} & \sC{85.82}
& \sA{85.28} & \sB{79.73}
& \sA{86.71} & \sB{81.16} & \sA{5.80}\\

\method{} (Group)
& \sC{89.03} & \sE{\underline{87.02}}
& \sD{88.61} & \sE{83.39}
& \sC{88.32} & \sD{85.86}
& \sC{86.94} & \sD{81.22}
& \sC{87.54} & \sE{\underline{82.90}} & \sD{3.50}\\

\method{} (Prompt)
& \sE{\underline{89.78}} & \sD{86.37}
& \sC{88.50} & \sD{81.19}
& \sD{88.71} & \sC{85.41}
& \sD{87.30} & \sE{\underline{81.31}}
& \sD{87.63} & \sC{82.03} & \sC{3.75}\\

\method{} (Preference)
& \sD{89.75} & \sF{\textbf{87.55}}
& \sE{\underline{89.00}} & \sE{83.17}
& \sE{\underline{89.00}} & \sE{\underline{86.23}}
& \sE{\underline{87.40}} & \sE{81.24}
& \sE{\underline{87.87}} & \sD{82.61} & \sE{\underline{2.60}}\\

\method{}
& \sF{\textbf{90.57}} & \sF{\textbf{87.55}}
& \sF{\textbf{89.11}} & \sF{\textbf{83.50}}
& \sF{\textbf{89.14}} & \sF{\textbf{86.70}}
& \sF{\textbf{88.25}} & \sE{\underline{81.31}}
& \sF{\textbf{88.31}} & \sF{\textbf{83.19}} & \sF{\textbf{1.20}}\\
\bottomrule
\end{tabular}

}
\end{minipage}
\vspace{-1em}
\end{table*}

\section{Experiments}
\label{sec:experiment}

\subsection{Experimental Setup}
We evaluate \method{} on three tasks: OOD preference alignment, preference poisoning, and downstream generalization. For OOD preference alignment and downstream generalization, we compare against standard DPO and several recent robust baselines: GRPO~\citep{ramesh2024group_grpo}, Dr.DPO~\citep{wu2025iclr-robust_drdpo}, and WDPO~\citep{xu2025neurips-robust_wdpo}. We also include three variants of \method{}: \method{} (Group), \method{} (Prompt), and \method{} (Preference), which activate only the group-level, prompt-level, and preference-level robustness components, respectively. Note that \method{} (Group) reduces to GRPO~\citep{ramesh2024group_grpo} as a special case. For preference poisoning, we compare against rDPO~\citep{chowdhury24a_rdpo} and Dr.DPO~\citep{wu2025iclr-robust_drdpo}, two robust DPO baselines designed to mitigate noisy preferences.

\paragraph{OOD Preference Alignment.}
Following prior emotion-based evaluations of preference shift~\citep{xu2025neurips-robust_wdpo,sahu2025online}, we construct a benchmark from the Emotion dataset~\citep{saravia-etal-2018-carer} with four shift environments: \textbf{(1) Group-level shift:} we partition samples into six latent groups via $K$-means clustering on prompt and response embeddings, and change the group mixture weights at test time; \textbf{(2) Prompt-level shift:} we apply synonym substitution to prompts to simulate prompt shift; \textbf{(3) Preference-level shift:} we construct two reward functions $r_1$ and $r_2$ and simulate preference shift via the convex interpolation $r_\alpha = \alpha r_1 + (1{-}\alpha)r_2$, which shifts the conditional preference distribution $P_g^S(S\mid V,X)$; and \textbf{(4) Combined shift:} all three shifts are applied simultaneously. We use Qwen2.5-0.5B-Instruct~\citep{qwen2025qwen25technicalreport} and LLaMA-3.2-1B-Instruct~\citep{grattafiori2024llama} as base models, and report average and worst-group preference accuracy~\citep{wu2025iclr-robust_drdpo} to measure overall performance and worst-case robustness, respectively. Further details on reward model training, data construction, OOD shift settings, and hyperparameter selection are provided in Appendix~\ref{app:emotion_details}.

\paragraph{Preference Poisoning Attack Robustness.} 
To evaluate robustness against training-time preference poisoning, we follow established poisoning protocols in RLHF~\citep{wang2024rlhfpoison,pathmanathan2025poisoning,kusaka2026cost} and construct poisoned training sets by flipping the preference labels (i.e., swapping the preferred and rejected responses) of a fraction of the training pairs, while keeping the test set clean. We consider two attack strategies: \textbf{(1) Random preference flip attack}~\citep{pathmanathan2025poisoning}, where poisoned pairs are selected uniformly at random; and \textbf{(2) Margin-based preference flip attack}~\citep{pathmanathan2025poisoning}, where the adversary targets the pairs with the largest margins computed by the reward model used for preference construction. We use LLaMA-3.2-1B-Instruct~\citep{grattafiori2024llama} as the base model and evaluate all methods on clean test preferences. Further details on poisoned data construction, hyperparameter selection, and evaluation settings are provided in Appendix~\ref{app:poison_attack_details}.

\paragraph{Downstream Generalization.}
To assess the downstream generalization of \method{} under potential distribution shift, we evaluate on the Open LLM Leaderboard v2~\citep{fourrier2024openllmleaderboardv2}. We fine-tune LLaMA-3.2-1B-Instruct, LLaMA-3.2-3B-Instruct, and LLaMA-3.1-8B-Instruct for one epoch. Following the pipeline of~\citet{meng2024simpo}, we collect on-policy data annotated by the ArmoRM reward model~\citep{wang-etal-2024-interpretable}. Full training and evaluation details are provided in Appendix~\ref{app:ultrafeedback_details}.

\begin{table*}[t]
\centering
\small
\caption{Performance comparison on Open LLM Leaderboard v2, which evaluates
LLMs across six tasks: MMLU-Pro, BBH, GPQA, MATH-Hard, IFEval, and MuSR.}
\label{tab:downstream_generalization}

\resizebox{0.98\linewidth}{!}{%
\begin{tabular}{l cccc cccc cccc}
\toprule
&
\multicolumn{4}{c}{LLaMA-3.2-1B} &
\multicolumn{4}{c}{LLaMA-3.2-3B} &
\multicolumn{4}{c}{LLaMA-3.1-8B} \\
\cmidrule(lr){2-5}
\cmidrule(lr){6-9}
\cmidrule(lr){10-13}

\multicolumn{1}{c}{\multirow{2}{*}{Task}}
& \multirow{2}{*}{DPO}
& \multicolumn{3}{c}{TPO}
& \multirow{2}{*}{DPO}
& \multicolumn{3}{c}{TPO}
& \multirow{2}{*}{DPO}
& \multicolumn{3}{c}{TPO} \\
\cmidrule(lr){3-5}
\cmidrule(lr){7-9}
\cmidrule(lr){11-13}

&
& $\rho_g^S=0$
& $\rho_g^S=0.01$
& $\rho_g^S=0.05$
&
& $\rho_g^S=0$
& $\rho_g^S=0.01$
& $\rho_g^S=0.05$
&
& $\rho_g^S=0$
& $\rho_g^S=0.005$
& $\rho_g^S=0.01$ \\
\midrule

MMLU-Pro
& \sC{20.0} & \sF{20.5} & \sF{20.5} & \sD{20.2}
& \sC{31.2} & \sE{31.5} & \sE{31.5} & \sF{31.7}
& \sC{36.8} & \sC{36.7} & \sF{37.1} & \sE{37.1} \\

BBH
& \sC{35.6} & \sE{35.8} & \sF{35.9} & \sC{35.6}
& \sC{45.2} & \sD{45.6} & \sF{45.9} & \sE{45.7}
& \sB{51.1} & \sC{50.9} & \sF{51.7} & \sE{51.3} \\

GPQA
& \sC{23.2} & \sE{23.5} & \sF{23.7} & \sD{23.3}
& \sC{28.0} & \sF{28.4} & \sE{28.3} & \sD{28.1}
& \sC{30.0} & \sD{30.2} & \sF{30.8} & \sD{30.1} \\

MATH-Hard
& \sC{8.1} & \sF{9.4} & \sE{8.3} & \sD{8.2}
& \sF{21.4} & \sE{21.1} & \sC{20.1} & \sD{21.0}
& \sC{19.5} & \sF{20.9} & \sE{20.7} & \sC{19.3} \\

IFEval
& \sD{51.3} & \sC{50.8} & \sE{53.0} & \sF{53.1}
& \sC{61.5} & \sD{63.5} & \sE{63.6} & \sF{65.6}
& \sC{75.4} & \sD{77.6} & \sF{79.6} & \sE{77.5} \\

MuSR
& \sF{31.6} & \sF{31.6} & \sD{31.2} & \sC{31.0}
& \sE{37.5} & \sF{37.7} & \sD{37.4} & \sC{37.1}
& \sD{38.0} & \sF{38.1} & \sD{38.0} & \sC{37.4} \\

\bottomrule
\end{tabular}%
}

\vspace{-1em}
\end{table*}

\subsection{Experimental Results}

\paragraph{OOD Preference Alignment Results.}
As shown in Table~\ref{tab:emotion_avg_worst_std_maintext}, \method{} achieves the best average accuracy across all evaluation environments on both backbones, while attaining competitive  worst-group accuracy in most settings. While the single-level variants \method{} (Group), \method{} (Prompt), and \method{} (Preference) each provide partial gains, their combination in \method{} yields stronger overall performance across both metrics and nearly all shift environments, suggesting the complementary benefits of the three robustness levels.

\paragraph{Preference Poisoning Attack Robustness Results.}
\begin{wrapfigure}{r}{0.45\linewidth}
\vspace{-1.0em}
\centering
\includegraphics[width=1.0\linewidth]{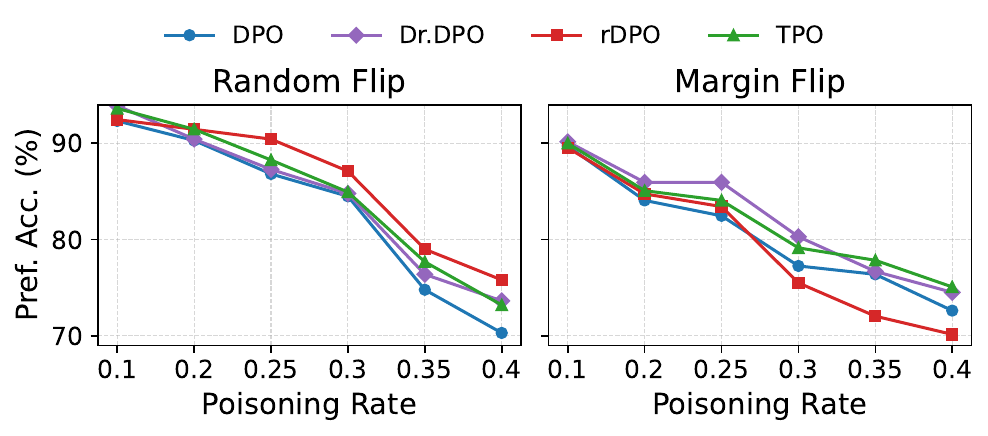}
\vspace{-2.0em}
\caption{Preference accuracy under preference poisoning.}
\label{fig:poison_attack}
\vspace{-1.0em}
\end{wrapfigure}
As shown in Figure~\ref{fig:poison_attack}, all methods degrade as the poisoning rate increases. Under random preference flip attacks, rDPO performs best as expected, given its explicit debiasing design for random label flips. \method{} is not specifically designed for this setting, yet generally outperforms DPO and remains competitive with Dr.DPO. Under margin-based flip attacks, Dr.DPO performs strongly at low poisoning rates, but \method{} surpasses all baselines at higher poisoning rates. Unlike rDPO and Dr.DPO, \method{} explicitly penalizes large positive preference margins, which may mitigate overconfidence under preference uncertainty and contribute to its stronger performance against margin-based poisoning at higher poisoning rates.

\paragraph{Downstream Generalization Results.}
As shown in Table~\ref{tab:downstream_generalization}, \method{} achieves comparable or better performance over DPO on most benchmarks across the three model scales, suggesting that the proposed robustness mechanisms largely preserve general capabilities. Gains are most pronounced on IFEval and also appear on BBH, while performance on GPQA and MMLU-Pro remains largely stable. We also observe that different preference-level robustness radii lead to varying trade-offs across tasks, which is expected given the heterogeneous nature of the leaderboard benchmarks. Interestingly, the improvements tend to become larger on IFEval as model scale increases, suggesting that larger models may benefit more from the proposed robustness mechanisms on instruction-following tasks. Overall, these results suggest that TPO can improve downstream generalization while largely retaining broad task performance.

\section{Related Work}

\paragraph{Robust Preference Optimization.} DPO simplifies preference alignment by directly optimizing over pairwise data~\citep{RafailovSMMEF23_DPO,AzarGPMRVC24}, but remains vulnerable to distribution shift at deployment~\citep{ramesh2024group_grpo,wu2025iclr-robust_drdpo}. Recent robust DPO methods address this from two angles: WDPO and KLDPO~\citep{xu2025neurips-robust_wdpo} robustify DPO using ambiguity sets over the overall preference distribution, while rDPO~\citep{chowdhury24a_rdpo} and Dr.DPO~\citep{wu2025iclr-robust_drdpo} target preference noise. Another line of work focuses on instability in preference optimization, including likelihood displacement~\citep{luo2026sharpness,razin2025unintentional,chen2024noise}, likelihood underdetermination~\citep{guo2025proximalized}, and catastrophic preference shift~\citep{jian2026_spo}. Group robust methods such as GRPO~\citep{ramesh2024group_grpo} and pluralistic alignment work~\citep{SorensenMFGMRLJCD24_pluralistic,YaoCCYJYH25_gdpo,TaoVBK24_cultural} motivate modeling preference heterogeneity across diverse user populations. However, none of these methods provide a unified robust objective that simultaneously captures group-level shift, prompt-level shift, and conditional preference shift.

\paragraph{Distributionally Robust Optimization.} DRO provides a min-max framework for learning under distribution shift~\citep{Jiao2022Distributed,jiao2025aspire,Esfahani2015DatadrivenDR,Wiesemann2014DistributionallyRC}. Wasserstein DRO~\citep{Sinha2018Certifying,Esfahani2015DatadrivenDR} admits tractable dual reformulations and connects naturally to adversarial perturbations in embedding spaces~\citep{MadryMSTV18,zhu2020freelb,jiang-etal-2020-smart}. Group DRO~\citep{SagawaKHL20_groupdro} optimizes worst-case group performance but treats each group distribution as fixed, leaving within-group shifts unaddressed~\citep{KohSMXZBHYPGLDS21,chowdhury24a_rdpo}. Recent work on hierarchical ambiguity sets~\citep{jo2025hierarchical} argues that group-level robustness alone is insufficient. Our work brings this perspective to preference optimization via a hierarchically structured rectangular ambiguity set that further decomposes intra-group uncertainty into prompt-level and preference-level shifts, thereby providing a more complete and interpretable robust alignment framework.

\section{Conclusion}
We proposed tri-level distributionally robust preference optimization (\method{}), which addresses OOD preference alignment by jointly modeling three levels of distribution shift through a hierarchically structured rectangular ambiguity set. By instantiating it with Wasserstein distances, we obtain an objective that combines worst-case group reweighting, prompt-level adversarial perturbation, and a closed-form preference-level margin penalty. Theoretical analysis establishes finite-sample convergence guarantees, and comprehensive experiments demonstrate the effectiveness of our method.

\begin{ack}

The work of Chengtao Jian and Kai Yang was supported in part by NSFC under Grant 12371519 and in part by Shenzhen Loop Area Institute under Project FP202602. The authors declare no competing interests.

\end{ack}

\bibliographystyle{plainnat}
\bibliography{neurips_2026}


\clearpage
\appendix

\section{Proofs}

\subsection{Proof of Proposition~\ref{prop:closed-form-robust-loss}}
\label{app:proof-prop1}

Recall that, for a fixed group \(g\) and a fixed prompt and response pair
\((\boldsymbol{x},\boldsymbol{y}_1,\boldsymbol{y}_2)\), the preference-level robust loss is defined as
\begin{equation}
\bar{\mathcal L}_g(\boldsymbol{x},\boldsymbol{y}_1,\boldsymbol{y}_2;\boldsymbol{\theta})
=
\sup_{q\in\mathcal U_g^S}
\mathbb E_{S\sim q}\big[
\mathcal L(\boldsymbol{x},\boldsymbol{y}_1,\boldsymbol{y}_2,S;\boldsymbol{\theta})
\big],
\end{equation}
where
\begin{equation}
\mathcal U_g^S
=
\left\{
q\in\Delta(\{+1,-1\}):
\mathcal W_{c_S}(q,\delta_{+1})\le \rho_g^S
\right\},
\qquad
c_S(s,s')=\mathbf 1[s\neq s'].
\end{equation}
Here, \(\delta_{+1}\) corresponds to the canonicalized observed preference label
\(S=+1\), obtained by ordering the candidate pair as
\((\boldsymbol{y}_1,\boldsymbol{y}_2)=(\boldsymbol{y}^+,\boldsymbol{y}^-)\).

Since the label space contains only two points, any
\(q\in\Delta(\{+1,-1\})\) can be parameterized as
$
q(+1)=1-\varepsilon, 
$
where $q(-1)=\varepsilon$ for some \(\varepsilon\in[0,1]\). Under the discrete label-flip cost \(c_S\), the
Wasserstein distance from \(q\) to \(\delta_{+1}\) is exactly the mass assigned to
the flipped label $\mathcal W_{c_S}(q,\delta_{+1})=\varepsilon$.
Therefore, assuming \(0\le \rho_g^S\le 1\), the Wasserstein constraint is equivalent to $0\le \varepsilon\le \rho_g^S$.

Let $M=m(\boldsymbol{x},\boldsymbol{y}_1,\boldsymbol{y}_2;\boldsymbol{\theta})$. Using the labeled DPO loss $\mathcal L(\boldsymbol{x},\boldsymbol{y}_1,\boldsymbol{y}_2,S;\boldsymbol{\theta})=\varphi(S\cdot M)$, we have
\begin{equation}
\begin{aligned}
\bar{\mathcal L}_g(\boldsymbol{x},\boldsymbol{y}_1,\boldsymbol{y}_2;\boldsymbol{\theta})
&=
\sup_{0\le \varepsilon\le \rho_g^S}
\Big[
(1-\varepsilon)\varphi(M)+\varepsilon\,\varphi(-M)
\Big] \\
&=
\varphi(M)
+
\sup_{0\le \varepsilon\le \rho_g^S}
\varepsilon\bigl(\varphi(-M)-\varphi(M)\bigr).
\end{aligned}
\end{equation}
For \(\varphi(t)=\log(1+e^{-t})\), we have $\varphi(-t)-\varphi(t)=t$. Thus, we can rewrite the preference-level robust loss as
\begin{equation}
\bar{\mathcal L}_g(\boldsymbol{x},\boldsymbol{y}_1,\boldsymbol{y}_2;\boldsymbol{\theta})
=
\varphi(M)
+
\sup_{0\le \varepsilon\le \rho_g^S}
\varepsilon M.
\end{equation}
Since the last term is linear in \(\varepsilon\), its supremum is attained at
\(\varepsilon=\rho_g^S\) when \(M\ge 0\), and at \(\varepsilon=0\) when \(M<0\).
Therefore, we have
\begin{equation}
\sup_{0\le \varepsilon\le \rho_g^S}
\varepsilon M=\rho_g^S[M]_+.
\end{equation}
Substituting this back, we obtain
\begin{equation}
\bar{\mathcal L}_g(\boldsymbol{x},\boldsymbol{y}_1,\boldsymbol{y}_2;\boldsymbol{\theta})
=
\varphi\bigl(m(\boldsymbol{x},\boldsymbol{y}_1,\boldsymbol{y}_2;\boldsymbol{\theta})\bigr)
+
\rho_g^S
\bigl[
m(\boldsymbol{x},\boldsymbol{y}_1,\boldsymbol{y}_2;\boldsymbol{\theta})
\bigr]_+,
\end{equation}
which completes the proof of Proposition~\ref{prop:closed-form-robust-loss}.

\subsection{Proof of Proposition~\ref{prop:optimal-margin}}
\label{app:proof-prop2}

Let $\mathcal L_g(M)=\varphi(M)+\rho_g^S[M]_+$ and $\varphi(M)=\log(1+e^{-M})$. We analyze the minimizer over \(M\in\mathbb R\). First, when \(\rho_g^S=0\), the objective reduces to $\mathcal L_g(M)=\varphi(M)$. Since $\varphi'(M)=-\frac{1}{1+e^M}<0$, \(\varphi(M)\) is strictly decreasing on \(\mathbb R\).  Therefore, its infimum is approached as \(M\to+\infty\), and the optimal target margin is \(M_g^\star=+\infty\).

We now consider the case \(\rho_g^S>0\). Since \([M]_+\) is piecewise linear, we
analyze the cases \(M<0\) and \(M>0\) separately. For \(M<0\), we have \([M]_+=0\), hence, we obtain $\mathcal L_g(M)=\varphi(M)$.  Its derivative satisfies $\mathcal L_g'(M)=\varphi'(M)=-\frac{1}{1+e^{M}}<0$. Therefore, \(\mathcal L_g(M)\) is strictly decreasing on \((-\infty,0)\). 

For \(M>0\), we have \([M]_+=M\), and thus
\begin{equation}
\mathcal L_g(M)=\varphi(M)+\rho_g^S M.
\end{equation}
Its derivative is
\begin{equation}
\mathcal L_g'(M)
=
-\frac{1}{1+e^{M}}+\rho_g^S.
\end{equation}
Setting \(\mathcal L_g'(M)=0\) yields
\begin{equation}
\frac{1}{1+e^{M}}=\rho_g^S
\quad\Longleftrightarrow\quad
e^{M}=\frac{1-\rho_g^S}{\rho_g^S},
\end{equation}
so any stationary point on \((0,\infty)\) must satisfy
\begin{equation}
M=\log\frac{1-\rho_g^S}{\rho_g^S}.
\end{equation}
This stationary point is positive if and only if
\(0<\rho_g^S<\tfrac12\). 
When \(0<\rho_g^S<\tfrac12\), the derivative
\(\mathcal L_g'(M)\) changes sign from negative to positive at
\begin{equation}
M_g^\star=\log\frac{1-\rho_g^S}{\rho_g^S},
\end{equation}
so this point is the global minimizer.
When \(\rho_g^S\ge\tfrac12\), the stationary point above is non-positive.
Moreover, for all \(M>0\), we have
\begin{equation}
\mathcal L_g'(M)
=
-\frac{1}{1+e^{M}}+\rho_g^S
>0,
\end{equation}
so \(\mathcal L_g(M)\) is increasing on \((0,\infty)\). Since it is decreasing on
\((-\infty,0)\), the global minimizer is attained at the boundary \(M=0\).

Therefore, we have shown that the optimal target margin is
\begin{equation}
M_g^\star
\in
\arg\min_{M\in\mathbb R}\mathcal L_g(M)
=
\begin{cases}
+\infty, & \rho_g^S=0,\\[4pt]
\log\dfrac{1-\rho_g^S}{\rho_g^S}, & 0<\rho_g^S<\frac12,\\[8pt]
0, & \rho_g^S\ge \frac12,
\end{cases}
\end{equation}
which proves Proposition~\ref{prop:optimal-margin}.

\subsection{Proof of Theorem~\ref{thm:hpo-main}}
\label{app:proof-theorem1}

Theorem~\ref{thm:hpo-main} analyzes the population and empirical
minimizers of the fixed-penalty Wasserstein dual objective, with each
penalty parameter \(\gamma_g>0\) held fixed. Throughout this subsection,
\(\mathcal R_g\) and \(\mathcal R_{\rm \method{}}\) refer to this
fixed-penalty objective, without minimization over \(\gamma_g\).
For a sample, the DPO margin is defined as
\begin{equation}
m(\boldsymbol h,\boldsymbol y^+,\boldsymbol y^-;\boldsymbol \theta)
=
\beta
\left(
\log\frac{\pi_{\boldsymbol \theta}(\boldsymbol y^+\mid \boldsymbol h)}{\pi_{\rm ref}(\boldsymbol y^+\mid \boldsymbol h)}
-
\log\frac{\pi_{\boldsymbol \theta}(\boldsymbol y^-\mid \boldsymbol h)}{\pi_{\rm ref}(\boldsymbol y^-\mid \boldsymbol h)}
\right).
\end{equation}
The logistic loss is denoted by \(\varphi(t)=\log(1+e^{-t})\). The
preference-level robust loss for group \(g\) is
\begin{equation}
\bar{\mathcal L}_g(\boldsymbol h,\boldsymbol y^+,\boldsymbol y^-;\boldsymbol \theta)
=
\varphi(m(\boldsymbol h,\boldsymbol y^+,\boldsymbol y^-;\boldsymbol \theta))
+
\rho_g^S[m(\boldsymbol h,\boldsymbol y^+,\boldsymbol y^-;\boldsymbol \theta)]_+.
\end{equation}
The fixed-penalty local robust sample loss is
\begin{equation}
\ell_{\boldsymbol\theta}^g(U;\gamma_g)
=
\sup_{\boldsymbol \delta\in\mathbb R^d}
\left[
\bar{\mathcal L}_g(\boldsymbol h+\boldsymbol \delta,\boldsymbol y^+,\boldsymbol y^-;\boldsymbol \theta)
-
\frac{\gamma_g}{2}\|\boldsymbol \delta\|_2^2
\right].
\end{equation}
The population and empirical local robust risk for group \(g\) are
\begin{equation}
\mathcal R_g(\boldsymbol\theta)
=
\gamma_g\rho_g^X
+
\mathbb E_{P_g}\big[\ell_{\boldsymbol\theta}^g(U;\gamma_g)\big],
\quad
\widehat{\mathcal R}_g(\boldsymbol\theta)
=
\gamma_g\rho_g^X
+
\frac1{n_g}
\sum_{i=1}^{n_g}
\ell_{\boldsymbol\theta}^g(U_{g,i};\gamma_g).
\end{equation}
The hierarchical population and empirical \method{} objectives are defined as
\begin{equation}
\mathcal R_{\rm \method{}}(\boldsymbol\theta)
=
\sup_{\boldsymbol\omega\in\mathcal U^\omega}
\sum_{g=1}^m
\omega_g\mathcal R_g(\boldsymbol\theta),
\quad
\widehat{\mathcal R}_{\rm \method{}}(\boldsymbol\theta)
=
\sup_{\boldsymbol\omega\in\mathcal U^\omega}
\sum_{g=1}^m
\omega_g\widehat{\mathcal R}_g(\boldsymbol\theta).
\end{equation}
Throughout the proof, we assume that samples are independent within each group, \(U_{g,1},\ldots,U_{g,n_g}\overset{\rm i.i.d.}{\sim}P_g\).

\begin{lemma}[Boundedness and Lipschitzness of the DPO margin]
\label{lem:margin-bounded-lipschitz}
Suppose Assumption~\ref{ass:loglinear-bounded} holds. Then, for each original
or adversarially perturbed prompt embedding \(\tilde{\boldsymbol{h}}\), each response pair \((\boldsymbol y^+,\boldsymbol y^-)\), and each \(\boldsymbol\theta,\boldsymbol\theta'\in\Theta\),
\begin{equation}
|m(\tilde {\boldsymbol{h}},\boldsymbol y^+,\boldsymbol y^-;\boldsymbol\theta)|
\le
M_\beta,
\qquad
M_\beta:=4\beta B C_\phi,
\end{equation}
and
\begin{equation}
|m(\tilde {\boldsymbol{h}},\boldsymbol y^+,\boldsymbol y^-;\boldsymbol\theta)
-
m(\tilde {\boldsymbol{h}},\boldsymbol y^+,\boldsymbol y^-;\boldsymbol\theta')|
\le
2\beta C_\phi
\|\boldsymbol\theta-\boldsymbol\theta'\|_2.
\end{equation}
\end{lemma}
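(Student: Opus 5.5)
The plan is to write the margin explicitly in terms of the log-linear parameterization of Assumption~\ref{ass:loglinear-bounded}, so that the log-partition functions cancel. For $\pi_{\boldsymbol\theta}(\boldsymbol y\mid\tilde{\boldsymbol h})=\exp(\boldsymbol\theta^\top\phi(\tilde{\boldsymbol h},\boldsymbol y))/Z_{\boldsymbol\theta}(\tilde{\boldsymbol h})$, we have
\[
\log\pi_{\boldsymbol\theta}(\boldsymbol y^+\mid\tilde{\boldsymbol h})-\log\pi_{\boldsymbol\theta}(\boldsymbol y^-\mid\tilde{\boldsymbol h})
=\boldsymbol\theta^\top z(\tilde{\boldsymbol h},\boldsymbol y^+,\boldsymbol y^-),
\]
because $Z_{\boldsymbol\theta}(\tilde{\boldsymbol h})$ does not depend on the response. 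The same identity holds for $\pi_{\mathrm{ref}}=\pi_{\boldsymbol\theta_{\rm ref}}$. Regrouping the four log terms in the definition of $m$ then gives the closed form
\[
m(\tilde{\boldsymbol h},\boldsymbol y^+,\boldsymbol y^-;\boldsymbol\theta)=\beta(\boldsymbol\theta-\boldsymbol\theta_{\rm ref})^\top z(\tilde{\boldsymbol h},\boldsymbol y^+,\boldsymbol y^-).
\]
So the margin is affine in $\boldsymbol\theta$.

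Next I bound the feature difference. Assumption~\ref{ass:loglinear-bounded} explicitly includes perturbed embeddings in $\mathcal H$, so the bound $\|\phi\|_2\le C_\phi$ applies at $\tilde{\boldsymbol h}$ as well. The triangle inequality then gives $\|z(\tilde{\boldsymbol h},\boldsymbol y^+,\boldsymbol y^-)\|_2\le 2C_\phi$.

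For the boundedness claim, I apply Cauchy--Schwarz together with $\|\boldsymbol\theta-\boldsymbol\theta_{\rm ref}\|_2\le\|\boldsymbol\theta\|_2+\|\boldsymbol\theta_{\rm ref}\|_2\le 2B$. This yields $|m|\le\beta\cdot 2B\cdot 2C_\phi=4\beta BC_\phi=M_\beta$.

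For the Lipschitz claim, I use the affine form: the reference terms cancel in the difference, so
\[
m(\cdot;\boldsymbol\theta)-m(\cdot;\boldsymbol\theta')=\beta(\boldsymbol\theta-\boldsymbol\theta')^\top z .
\]
Cauchy--Schwarz with $\|z\|_2\le 2C_\phi$ then gives the constant $2\beta C_\phi$.

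There is no real obstacle here. The only point needing care is the cancellation of the normalizers, which relies on the policy and the reference being evaluated at the same (possibly perturbed) embedding $\tilde{\boldsymbol h}$. Granting that, the covering of $\mathcal H$ by Assumption~\ref{ass:loglinear-bounded} makes the bounds uniform over all perturbations.
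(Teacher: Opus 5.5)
Your proposal is correct and follows the paper's proof essentially step for step. The paper also cancels the log-partition functions to get $m=\beta\langle z(\tilde{\boldsymbol h},\boldsymbol y^+,\boldsymbol y^-),\boldsymbol\theta-\boldsymbol\theta_{\rm ref}\rangle$, bounds $\|z\|_2\le 2C_\phi$ and $\|\boldsymbol\theta-\boldsymbol\theta_{\rm ref}\|_2\le 2B$, and applies Cauchy--Schwarz to obtain both $M_\beta$ and the Lipschitz constant $2\beta C_\phi$. One small point of phrasing: each normalizer already cancels within its own log-ratio, so the shared embedding $\tilde{\boldsymbol h}$ is what lets the two log-ratios combine into a single feature difference $z$, not what makes the normalizers cancel.
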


\textit{Proof.}
Under the log-linear policy class, we have
\begin{equation}
\log \pi_{\boldsymbol\theta}(\boldsymbol y^+\mid \tilde {\boldsymbol{h}})
-
\log \pi_{\boldsymbol\theta}(\boldsymbol y^-\mid \tilde {\boldsymbol{h}})
=
\boldsymbol\theta^\top
\big(
\phi(\tilde {\boldsymbol{h}},\boldsymbol y^+)-\phi(\tilde {\boldsymbol{h}},\boldsymbol y^-)
\big).
\end{equation}
Since \(\pi_{\rm ref}\) belongs to the same class with parameter \(\boldsymbol\theta_{\rm ref}\), we have
\begin{equation}
m(\tilde {\boldsymbol{h}},\boldsymbol y^+,\boldsymbol y^-;\boldsymbol\theta)
=
\beta
\left\langle
\phi(\tilde {\boldsymbol{h}},\boldsymbol y^+)-\phi(\tilde {\boldsymbol{h}},\boldsymbol y^-),
\boldsymbol\theta-\boldsymbol\theta_{\rm ref}
\right\rangle.
\end{equation}
By Assumption~\ref{ass:loglinear-bounded}, we have
\begin{equation}
\|\phi(\tilde {\boldsymbol{h}},\boldsymbol y^+)-\phi(\tilde {\boldsymbol{h}},\boldsymbol y^-)\|_2\le 2C_\phi,
\qquad
\|\boldsymbol\theta-\boldsymbol\theta_{\rm ref}\|_2\le 2B.
\end{equation}
Therefore,
\begin{equation}
|m(\tilde {\boldsymbol{h}},\boldsymbol y^+,\boldsymbol y^-;\boldsymbol\theta)|
\le
\beta\cdot 2C_\phi\cdot 2B
=
4\beta B C_\phi
=
M_\beta.
\end{equation}
So we can obtain
\begin{equation}
\begin{aligned}
&|m(\tilde {\boldsymbol{h}},\boldsymbol y^+,\boldsymbol y^-;\boldsymbol\theta)
-
m(\tilde {\boldsymbol{h}},\boldsymbol y^+,\boldsymbol y^-;\boldsymbol\theta')|\\
& =
\beta
\left|
\left\langle
\phi(\tilde {\boldsymbol{h}},\boldsymbol y^+)-\phi(\tilde {\boldsymbol{h}},\boldsymbol y^-),
\boldsymbol\theta-\boldsymbol\theta'
\right\rangle
\right|\\
& \le
\beta
\|\phi(\tilde {\boldsymbol{h}},\boldsymbol y^+)-\phi(\tilde {\boldsymbol{h}},\boldsymbol y^-)\|_2
\|\boldsymbol\theta-\boldsymbol\theta'\|_2\\
&\le
2\beta C_\phi
\|\boldsymbol\theta-\boldsymbol\theta'\|_2,
\end{aligned}
\end{equation}
which completes the proof.

\begin{lemma}[Boundedness and Lipschitzness of $\bar{\mathcal L}_g$]
\label{lem:robust-loss-bounded-lipschitz}
Suppose Assumption~\ref{ass:loglinear-bounded} holds. Then, for each group \(g\), each original or adversarially perturbed prompt embedding \(\tilde{\boldsymbol{h}}\), each response pair \((\boldsymbol y^+,\boldsymbol y^-)\), and each \(\boldsymbol\theta,\boldsymbol\theta'\in\Theta\), we have
\[
0\le
\bar{\mathcal L}_g(\tilde {\boldsymbol{h}},\boldsymbol y^+,\boldsymbol y^-;\boldsymbol\theta)
\le
K_g,
\]
where $K_g=\log(1+e^{M_\beta})+\rho_g^S M_\beta$ and $M_\beta=4\beta B C_\phi$.
Moreover, $\bar{\mathcal L}_g(\tilde {\boldsymbol{h}},\boldsymbol y^+,\boldsymbol y^-;\boldsymbol\theta)$ is $L_g$-Lipschitz, and $L_g=2\beta C_\phi(1+\rho_g^S)$.
\end{lemma}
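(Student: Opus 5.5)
The plan is to deduce both claims from Lemma~\ref{lem:margin-bounded-lipschitz}. We view $\bar{\mathcal L}_g$ as a scalar function $F_g(M)=\varphi(M)+\rho_g^S[M]_+$ of the margin $M=m(\tilde{\boldsymbol h},\boldsymbol y^+,\boldsymbol y^-;\boldsymbol\theta)$. By Lemma~\ref{lem:margin-bounded-lipschitz}, this margin always lies in $[-M_\beta,M_\beta]$, whatever the (clean or perturbed) embedding and whichever $\boldsymbol\theta\in\Theta$ is used. The lemma therefore reduces to two elementary facts about $F_g$ on this interval: a range bound and a derivative bound. These are then composed with the Lipschitz property of the margin in $\boldsymbol\theta$.

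\textbf{Boundedness.} Nonnegativity is immediate, since $\varphi(t)=\log(1+e^{-t})>0$, $[M]_+\ge 0$ and $\rho_g^S\ge0$. For the upper bound, note that $\varphi$ is decreasing, so on $[-M_\beta,M_\beta]$ it is maximized at $M=-M_\beta$, where it equals $\log(1+e^{M_\beta})$. The term $[M]_+$ is increasing and at most $M_\beta$. Bounding each summand separately by its own maximum gives $F_g(M)\le \log(1+e^{M_\beta})+\rho_g^S M_\beta=K_g$. This bound is not tight, since the two maxima occur at opposite endpoints, but it is exactly the stated constant.

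\textbf{Lipschitzness.} The function $F_g$ is Lipschitz in $M$ with constant at most $1+\rho_g^S$. Indeed, $|\varphi'(t)|=1/(1+e^{t})<1$, and $t\mapsto[t]_+$ is $1$-Lipschitz. One therefore has $|F_g(M)-F_g(M')|\le |\varphi(M)-\varphi(M')|+\rho_g^S|[M]_+-[M']_+|\le(1+\rho_g^S)|M-M'|$. We fix $\tilde{\boldsymbol h}$ and the response pair, and take $M,M'$ to be the margins at $\boldsymbol\theta,\boldsymbol\theta'$. The second bound of Lemma~\ref{lem:margin-bounded-lipschitz} then gives $|M-M'|\le 2\beta C_\phi\|\boldsymbol\theta-\boldsymbol\theta'\|_2$. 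Composing the two estimates yields the constant $L_g=2\beta C_\phi(1+\rho_g^S)$ with respect to $\boldsymbol\theta$.

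There is no real obstacle in this argument. The only point needing care is that the statement covers adversarially perturbed embeddings. This is handled by Assumption~\ref{ass:loglinear-bounded}, which requires $\mathcal H$ to contain perturbed embeddings, so the feature bound $C_\phi$, and hence Lemma~\ref{lem:margin-bounded-lipschitz}, applies uniformly. The Lipschitz claim is understood in $\boldsymbol\theta$, uniformly over $(\tilde{\boldsymbol h},\boldsymbol y^+,\boldsymbol y^-)$, which is the form needed later for the covering-number/Dudley bound in the proof of Theorem~\ref{thm:hpo-main}.
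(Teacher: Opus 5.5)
Your proposal is correct and follows the paper's proof essentially step for step. In both, nonnegativity follows from the nonnegativity of each summand, and the bound $K_g$ comes from bounding $\varphi$ and $[\cdot]_+$ separately using $|m|\le M_\beta$. Lipschitzness in $\boldsymbol\theta$ then follows by composing the $(1+\rho_g^S)$-Lipschitz scalar map with the $2\beta C_\phi$-Lipschitz margin from Lemma~\ref{lem:margin-bounded-lipschitz}.
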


\textit{Proof. }
By definition,
\begin{equation}
\bar{\mathcal L}_g(\tilde {\boldsymbol{h}},\boldsymbol y^+,\boldsymbol y^-;\theta)
=
\varphi(m(\tilde {\boldsymbol{h}},\boldsymbol y^+,\boldsymbol y^-;\theta))
+
\rho_g^S[m(\tilde {\boldsymbol{h}},\boldsymbol y^+,\boldsymbol y^-;\theta)]_+.
\end{equation}
Since \(\varphi(t)\ge0\), \([t]_+\ge0\), and \(\rho_g^S\ge0\), we have $\bar{\mathcal L}_g(\tilde {\boldsymbol{h}},\boldsymbol y^+,\boldsymbol y^-;\boldsymbol\theta)\ge0$.

According to Lemma~\ref{lem:margin-bounded-lipschitz}, we have $|m(\tilde {\boldsymbol{h}},\boldsymbol y^+,\boldsymbol y^-;\boldsymbol\theta)|\le M_\beta$. Hence, we obtain
\begin{align}
\varphi\!\left(m(\tilde{\boldsymbol{h}},\boldsymbol y^+,\boldsymbol y^-;\boldsymbol\theta)\right)
&\le
\log(1+e^{M_\beta}),
\label{eq:logistic-bound}\\
\left[m(\tilde{\boldsymbol{h}},\boldsymbol y^+,\boldsymbol y^-;\boldsymbol\theta)\right]_+
&\le
M_\beta.
\label{eq:positive-part-bound}
\end{align}
Combining \eqref{eq:logistic-bound} and \eqref{eq:positive-part-bound}, we get
\begin{equation}
\bar{\mathcal L}_g(\tilde {\boldsymbol{h}},\boldsymbol y^+,\boldsymbol y^-;\boldsymbol\theta)
\le
\log(1+e^{M_\beta})
+
\rho_g^S M_\beta
=
K_g.
\end{equation}

For Lipschitzness, \(\varphi(t)=\log(1+e^{-t})\) is \(1\)-Lipschitz because $|\varphi'(t)|=\frac{1}{1+e^t}\le1$. And \(t\mapsto[t]_+\) is also \(1\)-Lipschitz. Therefore,
\(f(t)= \varphi(t)+\rho_g^S[t]_+\) is
\((1+\rho_g^S)\)-Lipschitz. Combining this with
Lemma~\ref{lem:margin-bounded-lipschitz}, we have
\begin{equation}
\begin{aligned}
&\left|
\bar{\mathcal L}_g(\tilde {\boldsymbol{h}},\boldsymbol y^+,\boldsymbol y^-;\boldsymbol\theta)
-
\bar{\mathcal L}_g(\tilde {\boldsymbol{h}},\boldsymbol y^+,\boldsymbol y^-;\boldsymbol\theta')
\right|\\
& \le
(1+\rho_g^S)
\left|
m(\tilde {\boldsymbol{h}},\boldsymbol y^+,\boldsymbol y^-;\boldsymbol\theta)
-
m(\tilde {\boldsymbol{h}},\boldsymbol y^+,\boldsymbol y^-;\boldsymbol\theta')
\right|\\
& \le
2\beta C_\phi(1+\rho_g^S)
\|\boldsymbol\theta-\boldsymbol\theta'\|_2, 
\end{aligned}
\end{equation}
which completes the proof.

\begin{lemma}[Rademacher complexity of the local robust loss class]
\label{lem:local-robust-rademacher}
Let $\mathcal F_g=\{\ell_{\boldsymbol\theta}^g(\cdot;\gamma_g):\boldsymbol\theta\in\Theta\}$, where \(\gamma_g>0\) is fixed. Under Assumption~\ref{ass:loglinear-bounded},
the class \(\mathcal F_g\) satisfies
\[
\mathfrak R_{n_g}(\mathcal F_g)
\le
A_g\sqrt{\frac{d}{n_g}},
\]
where $A_g=C K_g \sqrt{ \log\left( e+\frac{3BL_g}{K_g} \right) } $, \(C>0\) is a universal numerical constant, $K_g=\log(1+e^{M_\beta})+\rho_g^S M_\beta$, $M_\beta=4\beta B C_\phi$, and $L_g=2\beta C_\phi(1+\rho_g^S).$
\end{lemma}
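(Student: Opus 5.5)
The plan is to treat $\mathcal F_g$ as a parametric class indexed by the Euclidean ball $\Theta\subset\mathbb R^d$ and bound its empirical Rademacher complexity with Dudley's entropy integral. Two properties are needed, both holding uniformly over data points $U=(\boldsymbol h,\boldsymbol y^+,\boldsymbol y^-)$:
\begin{itemize}[leftmargin=*]
\item every $\ell^g_{\boldsymbol\theta}(U;\gamma_g)$ takes values in $[0,K_g]$;
\item the map $\boldsymbol\theta\mapsto\ell^g_{\boldsymbol\theta}(U;\gamma_g)$ is $L_g$-Lipschitz with respect to $\|\cdot\|_2$.
\end{itemize}
With these in hand, the sup-norm covering numbers of $\mathcal F_g$ are controlled by the covering numbers of $\Theta$, and the rest is a standard entropy computation.

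\textbf{Step 1 (boundedness).} For the lower bound, take $\boldsymbol\delta=\boldsymbol 0$ in the supremum defining $\ell^g_{\boldsymbol\theta}$. This gives $\ell^g_{\boldsymbol\theta}(U;\gamma_g)\ge\bar{\mathcal L}_g(\boldsymbol h,\boldsymbol y^+,\boldsymbol y^-;\boldsymbol\theta)\ge 0$. For the upper bound, the penalty $-\frac{\gamma_g}{2}\|\boldsymbol\delta\|_2^2$ is nonpositive, so
\[
\ell^g_{\boldsymbol\theta}(U;\gamma_g)\le\sup_{\boldsymbol\delta}\bar{\mathcal L}_g(\boldsymbol h+\boldsymbol\delta,\boldsymbol y^+,\boldsymbol y^-;\boldsymbol\theta)\le K_g .
\]
The last inequality is Lemma~\ref{lem:robust-loss-bounded-lipschitz}, which applies because Assumption~\ref{ass:loglinear-bounded} puts perturbed embeddings in $\mathcal H$.

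\textbf{Step 2 (Lipschitzness in $\boldsymbol\theta$).} Fix $U$ and $\boldsymbol\delta$. By Lemma~\ref{lem:robust-loss-bounded-lipschitz}, the map $\boldsymbol\theta\mapsto\bar{\mathcal L}_g(\boldsymbol h+\boldsymbol\delta,\cdot;\boldsymbol\theta)-\frac{\gamma_g}{2}\|\boldsymbol\delta\|_2^2$ is $L_g$-Lipschitz, with a constant that does not depend on $\boldsymbol\delta$. A pointwise supremum of a family of $L_g$-Lipschitz functions is again $L_g$-Lipschitz wherever it is finite, and Step 1 shows it is finite. Hence
\[
\|\ell^g_{\boldsymbol\theta}-\ell^g_{\boldsymbol\theta'}\|_\infty\le L_g\|\boldsymbol\theta-\boldsymbol\theta'\|_2 .
\]
This is the step I regard as the crux: it lets us avoid analysing the adversarial maximizer $\boldsymbol\delta^\star$ altogether. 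It relies on the Lipschitz constant in Lemma~\ref{lem:robust-loss-bounded-lipschitz} being uniform over all embeddings in $\mathcal H$.

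\textbf{Step 3 (covering and Dudley).} A standard volumetric argument shows that the ball of radius $B$ in $\mathbb R^d$ has an $r$-net of size at most $(1+2B/r)^d\le(3B/r)^d$ for $r\le B$. Setting $r=\epsilon/L_g$ and using Step 2 gives
\[
\log N(\epsilon,\mathcal F_g,\|\cdot\|_\infty)\le d\log\bigl(1+3BL_g/\epsilon\bigr).
\]
Since $\mathcal F_g$ has diameter at most $K_g$ by Step 1, Dudley's bound yields
\[
\mathfrak R_{n_g}(\mathcal F_g)\le\frac{12}{\sqrt{n_g}}\int_0^{K_g}\sqrt{d\log\bigl(1+3BL_g/\epsilon\bigr)}\,d\epsilon .
\]
To evaluate the integral, substitute $\epsilon=K_g t$ and use $1+a/t\le(e+a)/t$ for $t\in(0,1]$ with $a=3BL_g/K_g$. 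Then $\log(1+a/t)\le\log(e+a)+\log(1/t)$, and the inequality $\sqrt{u+v}\le\sqrt u+\sqrt v$ splits the integrand. The result is
\[
\int_0^1\sqrt{\log(1+a/t)}\,dt\le\sqrt{\log(e+a)}+\int_0^1\sqrt{\log(1/t)}\,dt .
\]
The second integral equals $\Gamma(3/2)=\sqrt\pi/2$. Because $\log(e+a)\ge1$, this constant is absorbed into a multiple of $\sqrt{\log(e+a)}$. Collecting terms gives $\mathfrak R_{n_g}(\mathcal F_g)\le CK_g\sqrt{\log(e+3BL_g/K_g)}\sqrt{d/n_g}=A_g\sqrt{d/n_g}$, as claimed.

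The only delicate points are the uniform Lipschitz argument through the supremum in Step 2 and the routine constant-tracking in the entropy integral in Step 3. Measurability of the supremum is inherited from Assumption~\ref{ass:uniform-coverage}, or alternatively from continuity in $\boldsymbol\theta$, which allows the supremum over $\Theta$ to be reduced to a countable dense subset.
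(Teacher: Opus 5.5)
Your proposal is correct and follows essentially the same route as the paper: boundedness in $[0,K_g]$ via $\boldsymbol\delta=\boldsymbol 0$ and the nonpositive penalty, and $L_g$-Lipschitzness passed through the supremum uniformly in the data. The remaining steps also match the paper's: a sup-norm cover of $\mathcal F_g$ induced by a volumetric cover of $\Theta$, Dudley's entropy integral up to $K_g$, and the same substitution, splitting with $\sqrt{u+v}\le\sqrt u+\sqrt v$, and absorption of $\Gamma(3/2)$ using $\log(e+a)\ge 1$.
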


\textit{Proof. }
We first prove that the fixed-penalty local robust loss
\(\ell_{\boldsymbol\theta}^g(U;\gamma_g)\) is bounded and Lipschitz in \(\boldsymbol\theta\).
By Lemma~\ref{lem:robust-loss-bounded-lipschitz}, for every admissible
perturbation \(\boldsymbol\delta\), we have $0\le\bar{\mathcal L}_g(\boldsymbol h+\boldsymbol\delta,\boldsymbol y^+,\boldsymbol y^-;\boldsymbol\theta)\le K_g$.

Since \(-\frac{\gamma_g}{2}\|\boldsymbol\delta\|^2\le0\), we can obtain
\begin{equation}
\ell_{\boldsymbol\theta}^g(U;\gamma_g)
=
\sup_{\boldsymbol\delta}
\left[
\bar{\mathcal L}_g(\boldsymbol h+\boldsymbol\delta,\boldsymbol y^+,\boldsymbol y^-;\boldsymbol\theta)
-
\frac{\gamma_g}{2}\|\boldsymbol\delta\|_2^2
\right]
\le
K_g.
\end{equation}
Moreover, because \(\boldsymbol\delta=0\) is admissible, $\ell_{\boldsymbol\theta}^g(U;\gamma_g) \ge \bar{\mathcal L}_g(\boldsymbol h,\boldsymbol y^+,\boldsymbol y^-;\boldsymbol\theta)\ge 0$. Thus, we have \(0\le\ell_{\boldsymbol\theta}^g(U;\gamma_g)\le K_g\).

For Lipschitzness, we define
\begin{equation}
\mathcal J_{\boldsymbol{\theta}}(\boldsymbol{\delta})
=
\bar{\mathcal L}_g(\boldsymbol h+\boldsymbol\delta,\boldsymbol y^+,\boldsymbol y^-;\boldsymbol\theta)
-
\frac{\gamma_g}{2}\|\boldsymbol\delta\|_2^2.
\end{equation}
Then, we have
\begin{equation}
\begin{aligned}
|\ell_{\boldsymbol\theta}^g(U;\gamma_g)-\ell_{\boldsymbol\theta'}^g(U;\gamma_g)|
&=
\left|
\sup_{\boldsymbol\delta} \mathcal J_{\boldsymbol{\theta}}(\boldsymbol{\delta})
-
\sup_{\boldsymbol\delta} \mathcal J_{\boldsymbol{\theta'}}(\boldsymbol{\delta})
\right|\\
&\le
\sup_{\boldsymbol\delta}
|\mathcal J_{\boldsymbol{\theta}}(\boldsymbol{\delta})-\mathcal J_{\boldsymbol{\theta'}}(\boldsymbol{\delta})|\\
&=
\sup_{\boldsymbol\delta}
\left|
\bar{\mathcal L}_g(\boldsymbol h+\boldsymbol\delta,\boldsymbol y^+,\boldsymbol y^-;\boldsymbol\theta)
-
\bar{\mathcal L}_g(\boldsymbol h+\boldsymbol\delta,\boldsymbol y^+,\boldsymbol y^-;\boldsymbol\theta')
\right|\\
&\le
L_g\|\boldsymbol\theta-\boldsymbol\theta'\|_2.
\end{aligned}
\end{equation}
Hence, $\ell_{\boldsymbol\theta}^g$ is \(L_g\)-Lipschitz in \(\boldsymbol\theta\) with \(L_g=2\beta C_\phi(1+\rho_g^S)\).

Thus, an \(\frac{\varepsilon}{L_g}\)-cover of \(\Theta\) under
\(\|\cdot\|_2\) induces an \(\varepsilon\)-cover of \(\mathcal F_g\) under
\(\|\cdot\|_\infty\). Indeed, for any \(\boldsymbol\theta\in\Theta\), choose
\(\boldsymbol\theta_j\) such that
\(\|\boldsymbol\theta-\boldsymbol\theta_j\|_2\le \frac{\varepsilon}{L_g}\). Then
\begin{equation}
\|\ell_{\boldsymbol\theta}^g(\cdot;\gamma_g)
-
\ell_{\boldsymbol\theta_j}^g(\cdot;\gamma_g)\|_\infty
\le
L_g\|\boldsymbol\theta-\boldsymbol\theta_j\|_2
\le
\varepsilon.
\end{equation}
Hence, we can get
\begin{equation}
N(\varepsilon,\mathcal F_g,\|\cdot\|_\infty)
\le
N(\frac{\varepsilon}{L_g},\Theta,\|\cdot\|_2).
\end{equation}

By Dudley's entropy integral for Rademacher complexity~\citep{Wainwright2019}, the entropy bound above gives
\begin{equation}
\label{eq:local-entropy-bound}
\mathfrak R_{n_g}(\mathcal F_g)
\le
\frac{C}{\sqrt{n_g}}
\int_0^{K_g}
\sqrt{
\log N(\varepsilon,\mathcal F_g,\|\cdot\|_\infty)
}
\,d\varepsilon .
\end{equation}

By the standard volumetric covering bound~\citep{Vershynin2018,Wainwright2019}, we have that
\begin{equation}
\log N(\varepsilon,\mathcal F_g,\|\cdot\|_\infty)
\le
d\log\left(1+\frac{3BL_g}{\varepsilon}\right) 
\le d\log\left(e+\frac{3BL_g}{\varepsilon}\right) 
\end{equation}
So we obtain 
\begin{equation}
\label{eq:local-dudley-bound}
\mathfrak R_{n_g}(\mathcal F_g)
\le
\frac{C}{\sqrt{n_g}}
\int_0^{K_g}
\sqrt{
d\log\left(e+\frac{3BL_g}{\varepsilon}\right)
}
\,d\varepsilon .
\end{equation}

Let \(R_g=3BL_g\). It remains to bound the following integral:
\begin{equation}
I_g=\int_0^{K_g}\sqrt{\log\left(e+\frac{R_g}{\varepsilon}\right)}\,d\varepsilon .
\end{equation}
By setting \(u=\frac{\varepsilon}{K_g}\) and \(a_g=\frac{R_g}{K_g}\), we can rewrite this integral as
\begin{equation}
I_g
=
K_g\int_0^1\sqrt{\log\left(e+\frac{a_g}{u}\right)}\,du.
\end{equation}
For \(u\in(0,1]\), we have \(e+\frac{a_g}{u}\le \frac{e+a_g}{u}\), and hence
\begin{equation}
\log\left(e+\frac{a_g}{u}\right)
\le
\log(e+a_g)+\log(1/u).
\end{equation}
Using \(\sqrt{x+y}\le \sqrt{x}+\sqrt{y}\), we obtain
\begin{equation}
I_g
\le
K_g\sqrt{\log(e+a_g)}
+
K_g\int_0^1\sqrt{\log(1/u)}\,du.
\end{equation}
Moreover,
\begin{equation}
\int_0^1\sqrt{\log(1/u)}\,du
=
\int_0^\infty s^{1/2}e^{-s}\,ds
=
\Gamma(3/2)
=
\frac{\sqrt{\pi}}{2}.
\end{equation}
Since \(\log(e+a_g)\ge 1\), the fixed numerical term
\(\frac{\sqrt{\pi}}{2}\) can be absorbed into a universal constant. Therefore,
\begin{equation}
I_g
\le
C K_g\sqrt{\log(e+a_g)}
=
C K_g
\sqrt{
\log\left(e+\frac{3BL_g}{K_g}\right)
}.
\end{equation}
Substituting this into Eq.~\eqref{eq:local-dudley-bound}, we obtain
\begin{equation}
\mathfrak R_{n_g}(\mathcal F_g)
\le
C K_g
\sqrt{
\log\left(e+\frac{3BL_g}{K_g}\right)
}
\sqrt{\frac{d}{n_g}}.
\end{equation}
By the definition of $A_g=C K_g\sqrt{\log\left(e+\frac{3BL_g}{K_g}\right)}$, we obtain
\begin{equation}
\mathfrak R_{n_g}(\mathcal F_g)
\le
A_g\sqrt{\frac{d}{n_g}}.
\end{equation}
This completes the proof of Lemma~\ref{lem:local-robust-rademacher}.

\begin{lemma}[Group-wise local uniform convergence]
\label{lem:local-uniform-convergence}
Suppose Assumption~\ref{ass:loglinear-bounded} holds. Then, with probability at least \(1-\delta\), for
all \(g\in[m]\), we have
\[
\sup_{\boldsymbol\theta\in\Theta}
\left|
\mathcal R_g(\boldsymbol\theta)
-
\widehat{\mathcal R}_g(\boldsymbol\theta)
\right|
\le
\epsilon_g,
\]
where $\epsilon_g=2A_g\sqrt{\frac{d}{n_g}}+K_g\sqrt{\frac{\log(2m/\delta)}{2n_g}}$.
\end{lemma}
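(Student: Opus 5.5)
The plan is the standard symmetrization-plus-concentration argument, applied to each group separately and combined by a union bound. Fix a group $g$. The constant $\gamma_g\rho_g^X$ appears in both $\mathcal R_g$ and $\widehat{\mathcal R}_g$, so
\[
\sup_{\boldsymbol\theta\in\Theta}\bigl|\mathcal R_g(\boldsymbol\theta)-\widehat{\mathcal R}_g(\boldsymbol\theta)\bigr|
=\sup_{f\in\mathcal F_g}\Bigl|\mathbb E_{P_g}[f(U)]-\frac1{n_g}\sum_{i=1}^{n_g}f(U_{g,i})\Bigr|=:\Phi_g,
\]
where $\mathcal F_g$ is the class from Lemma~\ref{lem:local-robust-rademacher}. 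From the proof of that lemma, every $f\in\mathcal F_g$ takes values in $[0,K_g]$, and the samples $U_{g,1},\dots,U_{g,n_g}$ are i.i.d.\ from $P_g$.

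\emph{Step 1 (concentration).} Replacing one sample changes $\Phi_g$ by at most $K_g/n_g$. McDiarmid's bounded-differences inequality then gives, with probability at least $1-\delta'$,
\[
\Phi_g\le\mathbb E[\Phi_g]+K_g\sqrt{\frac{\log(1/\delta')}{2n_g}}.
\]

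\emph{Step 2 (symmetrization).} Introduce a ghost sample and Rademacher signs to get $\mathbb E[\Phi_g]\le 2\mathfrak R_{n_g}(\mathcal F_g)$. The absolute value inside $\Phi_g$ needs some care. One option is to handle $\mathcal F_g$ and $-\mathcal F_g$ separately; these have the same Rademacher complexity. This route costs a factor of $2$ inside the logarithm and requires $m$ groups times two one-sided bounds. The alternative is to apply symmetrization directly to the absolute supremum, using the convention that $\mathfrak R_{n_g}$ is defined with an absolute value. Dudley's bound in Lemma~\ref{lem:local-robust-rademacher} is insensitive to this choice. I will take the second route, and then invoke Lemma~\ref{lem:local-robust-rademacher} to obtain $\mathbb E[\Phi_g]\le 2A_g\sqrt{d/n_g}$.

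\emph{Step 3 (union bound).} Set $\delta'=\delta/(2m)$, which allots a budget of $\delta/(2m)$ to each group; the two-sided convention accounts for the $2$. A union bound over the $m$ groups shows that, with probability at least $1-\delta$, every $g\in[m]$ simultaneously satisfies $\Phi_g\le 2A_g\sqrt{d/n_g}+K_g\sqrt{\log(2m/\delta)/(2n_g)}=\epsilon_g$. If instead one splits into two one-sided McDiarmid events per group, each with budget $\delta/(2m)$, the same $\log(2m/\delta)$ appears. Either route yields exactly the stated $\epsilon_g$.

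I expect the main obstacle to be bookkeeping rather than mathematics: checking that the sup over $\boldsymbol\theta$ inside $\Phi_g$ is measurable, which holds because $\ell^g_{\boldsymbol\theta}$ is Lipschitz in $\boldsymbol\theta$ and $\Theta$ is separable, so the sup can be taken over a countable dense subset; and making the two-sided and union-bound constants match $\log(2m/\delta)$. Independence across groups is not needed, since only the union bound is used.
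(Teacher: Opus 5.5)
Your proposal is correct and follows the paper's proof: the paper cancels $\gamma_g\rho_g^X$, cites the standard two-sided Rademacher uniform-convergence bound $2\mathfrak R_{n_g}(\mathcal F_g)+K_g\sqrt{\log(2/\delta_g)/(2n_g)}$ (which is your McDiarmid-plus-symmetrization argument in its one-sided-split form), inserts Lemma~\ref{lem:local-robust-rademacher}, and takes $\delta_g=\delta/m$ with a union bound over groups. Two small remarks: on your direct two-sided McDiarmid route, $\delta'=\delta/m$ already suffices, so your choice $\delta'=\delta/(2m)$ only leaves slack and does not affect correctness; and using the absolute-value Rademacher complexity adds a $K_g/\sqrt{n_g}$ term to Dudley's bound, which is harmlessly absorbed into the universal constant $C$ in $A_g$.
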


\textit{Proof. }
For a fixed group \(g\), the constant term \(\gamma_g\rho_g^X\) appears in
both \(\mathcal R_g(\boldsymbol\theta)\) and \(\widehat{\mathcal R}_g(\boldsymbol\theta)\), so it
cancels and we have
\begin{equation}
\mathcal R_g(\boldsymbol\theta)-\widehat{\mathcal R}_g(\boldsymbol\theta)
=
\mathbb E_{P_g}[\ell_{\boldsymbol\theta}^g(U;\gamma_g)]
-
\frac1{n_g}\sum_{i=1}^{n_g}\ell_{\boldsymbol\theta}^g(U_{g,i};\gamma_g).
\end{equation}
By the standard Rademacher uniform convergence bound~\citep{BartlettMendelson2002} for functions bounded in
\([0,K_g]\), with probability at least \(1-\delta_g\),
\begin{equation}
\sup_{\boldsymbol\theta\in\Theta}
\left|
\mathcal R_g(\boldsymbol\theta)-\widehat{\mathcal R}_g(\boldsymbol\theta)
\right|
\le
2\mathfrak R_{n_g}(\mathcal F_g)
+
K_g
\sqrt{
\frac{\log(2/\delta_g)}{2n_g}
}.
\end{equation}
Using Lemma~\ref{lem:local-robust-rademacher},
\begin{equation}
\mathfrak R_{n_g}(\mathcal F_g)
\le
A_g\sqrt{\frac{d}{n_g}}.
\end{equation}
Setting \(\delta_g=\frac{\delta}{m}\) and applying a union bound over
\(g=1,\ldots,m\) gives the result.

\begin{lemma}[From local convergence to hierarchical convergence]
\label{lem:hierarchical-uniform-convergence}
Suppose the event in Lemma~\ref{lem:local-uniform-convergence} holds and
\(\mathcal U^\omega\subseteq\Delta^{m-1}\). Then
\[
\sup_{\boldsymbol\theta \in \Theta}
\left|
\mathcal R_{\rm TPO}(\boldsymbol\theta)
-
\widehat{\mathcal R}_{\rm TPO}(\boldsymbol\theta)
\right|
\le
\epsilon_{\rm H},
\]
where $\epsilon_{\rm H}=\sup_{\boldsymbol\omega\in\mathcal U^\omega}\sum_{g=1}^m\omega_g\epsilon_g$. In particular, $\epsilon_{\rm H}\le\max_{g\in[m]}\epsilon_g$. 
\end{lemma}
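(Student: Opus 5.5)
The plan is to use the elementary inequality that a difference of suprema is bounded by the supremum of the differences, applied to the outer maximization over group weights. Fix an arbitrary \(\boldsymbol\theta\in\Theta\) and write \(F(\boldsymbol\omega)=\sum_{g=1}^m\omega_g\mathcal R_g(\boldsymbol\theta)\) and \(\widehat F(\boldsymbol\omega)=\sum_{g=1}^m\omega_g\widehat{\mathcal R}_g(\boldsymbol\theta)\). Then \(\mathcal R_{\rm TPO}(\boldsymbol\theta)=\sup_{\boldsymbol\omega\in\mathcal U^\omega}F(\boldsymbol\omega)\) and \(\widehat{\mathcal R}_{\rm TPO}(\boldsymbol\theta)=\sup_{\boldsymbol\omega\in\mathcal U^\omega}\widehat F(\boldsymbol\omega)\). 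Both suprema are finite because each \(\mathcal R_g,\widehat{\mathcal R}_g\) is bounded: by Lemma~\ref{lem:local-robust-rademacher} each lies in \([\gamma_g\rho_g^X,\gamma_g\rho_g^X+K_g]\), and \(\boldsymbol\omega\) ranges over a subset of the simplex. We therefore have
\[
\Bigl|\sup_{\boldsymbol\omega\in\mathcal U^\omega}F(\boldsymbol\omega)-\sup_{\boldsymbol\omega\in\mathcal U^\omega}\widehat F(\boldsymbol\omega)\Bigr|\le\sup_{\boldsymbol\omega\in\mathcal U^\omega}\bigl|F(\boldsymbol\omega)-\widehat F(\boldsymbol\omega)\bigr|.
\]
To justify this, for any \(\eta>0\) pick \(\boldsymbol\omega_\eta\) with \(F(\boldsymbol\omega_\eta)\ge\sup F-\eta\). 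Then \(\sup F-\sup\widehat F\le F(\boldsymbol\omega_\eta)-\widehat F(\boldsymbol\omega_\eta)+\eta\). Letting \(\eta\to0\) and applying the symmetric argument gives the inequality, with no attainment assumption needed.

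Next, on the event of Lemma~\ref{lem:local-uniform-convergence}, we bound the inner difference termwise. Since \(\omega_g\ge0\),
\[
\bigl|F(\boldsymbol\omega)-\widehat F(\boldsymbol\omega)\bigr|\le\sum_{g=1}^m\omega_g\bigl|\mathcal R_g(\boldsymbol\theta)-\widehat{\mathcal R}_g(\boldsymbol\theta)\bigr|\le\sum_{g=1}^m\omega_g\epsilon_g,
\]
and this holds for every \(\boldsymbol\theta\) simultaneously, because the event already controls \(\sup_{\boldsymbol\theta}\) for every group. Taking the supremum over \(\boldsymbol\omega\in\mathcal U^\omega\) gives \(\epsilon_{\rm H}\), and since the right-hand side does not depend on \(\boldsymbol\theta\), taking \(\sup_{\boldsymbol\theta\in\Theta}\) on the left yields the claim.

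Finally, for the comparison with \(\max_g\epsilon_g\), use \(\boldsymbol\omega\in\Delta^{m-1}\): the weighted sum is a convex combination, so \(\sum_g\omega_g\epsilon_g\le\max_g\epsilon_g\sum_g\omega_g=\max_g\epsilon_g\). No step here is a real obstacle. The only points needing care are the sup-difference inequality when the suprema may not be attained (handled by the \(\eta\)-approximation above) and ensuring the bound is uniform in \(\boldsymbol\theta\), which follows directly from the form of the event in Lemma~\ref{lem:local-uniform-convergence}.
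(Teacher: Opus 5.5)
Your proposal is correct and follows the paper's proof essentially step for step: you bound the difference of suprema over \(\boldsymbol\omega\in\mathcal U^\omega\) by the supremum of the differences, bound each difference termwise using \(\omega_g\ge 0\) on the uniform-in-\(\boldsymbol\theta\) event of Lemma~\ref{lem:local-uniform-convergence}, and finish with the convex-combination bound \(\sum_g\omega_g\epsilon_g\le\max_g\epsilon_g\). Your \(\eta\)-approximation argument for suprema that may not be attained, and your finiteness check via \(0\le\ell_{\boldsymbol\theta}^g\le K_g\), are small additions in rigor that the paper takes for granted.
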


\textit{Proof. }
Fix \(\boldsymbol\theta\in\Theta\). Using
\begin{equation}
\left|
\sup_x \mathcal A_{\boldsymbol{\theta}}(x)-\sup_x \mathcal A_{\boldsymbol{\theta'}}(x)
\right|
\le
\sup_x |\mathcal A_{\boldsymbol{\theta}}(x)-\mathcal A_{\boldsymbol{\theta'}}(x)|,
\end{equation}
we obtain
\begin{equation}
\begin{aligned}
&
\left|
\mathcal R_{\rm \method{}}(\boldsymbol\theta)
-
\widehat{\mathcal R}_{\rm \method{}}(\boldsymbol\theta)
\right|\\
&=
\left|
\sup_{\omega\in\mathcal U^\omega}
\sum_{g=1}^m\omega_g\mathcal R_g(\boldsymbol\theta)
-
\sup_{\omega\in\mathcal U^\omega}
\sum_{g=1}^m\omega_g\widehat{\mathcal R}_g(\boldsymbol\theta)
\right|\\
&\le
\sup_{\omega\in\mathcal U^\omega}
\left|
\sum_{g=1}^m
\omega_g
\left(
\mathcal R_g(\boldsymbol\theta)
-
\widehat{\mathcal R}_g(\boldsymbol\theta)
\right)
\right|\\
&\le
\sup_{\omega\in\mathcal U^\omega}
\sum_{g=1}^m
\omega_g
\left|
\mathcal R_g(\boldsymbol\theta)
-
\widehat{\mathcal R}_g(\boldsymbol\theta)
\right|.
\end{aligned}
\end{equation}
By Lemma~\ref{lem:local-uniform-convergence}, for all $g\in[m]$ and $\boldsymbol\theta\in\Theta$, we have
\begin{equation}
\left|
\mathcal R_g(\boldsymbol\theta)
-
\widehat{\mathcal R}_g(\boldsymbol\theta)
\right|
\le
\epsilon_g
\end{equation}
Therefore, we can get 
\begin{equation}
\left|
\mathcal R_{\rm \method{}}(\boldsymbol\theta)
-
\widehat{\mathcal R}_{\rm \method{}}(\boldsymbol\theta)
\right|
\le
\sup_{\omega\in\mathcal U^\omega}
\sum_{g=1}^m
\omega_g\epsilon_g
=
\epsilon_{\rm H}.
\end{equation}
Since \(\mathcal U^\omega\subseteq\Delta^{m-1}\), we have
\(\omega_g\ge0\) and \(\sum_g\omega_g=1\). Hence,
\begin{equation}
\sup_{\omega\in\mathcal U^\omega}
\sum_{g=1}^m
\omega_g\epsilon_g
\le
\max_{g\in[m]}\epsilon_g.
\end{equation}

\begin{lemma}[Strong convexity of the population \method{} objective]
\label{lem:strong-convexity-coverage}
Suppose Assumptions~\ref{ass:loglinear-bounded} and
\ref{ass:uniform-coverage} hold. Let $M_\beta=4\beta B C_\phi$ and $\kappa_\beta=\frac{\beta^2 e^{M_\beta}}{(1+e^{M_\beta})^2}$. Then the population risk \(\mathcal R_{\rm TPO}\) is
\(\kappa_\beta\lambda\)-strongly convex over \(\Theta\). Consequently, for $\boldsymbol\theta^\star\in\arg\min_{\boldsymbol\theta\in\Theta}\mathcal R_{TPO}(\boldsymbol\theta)$, we have
\[
\mathcal R_{\rm TPO}(\boldsymbol\theta)
-
\mathcal R_{\rm TPO}(\boldsymbol\theta^\star)
\ge
\frac{\kappa_\beta\lambda}{2}
\|\boldsymbol\theta-\boldsymbol\theta^\star\|_2^2
\]
\end{lemma}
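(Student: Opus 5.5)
The plan is to prove strong convexity through \emph{lower quadratic minorants}. I will show that at every $\boldsymbol\theta\in\Theta$ there is a vector $\boldsymbol G_{\boldsymbol\theta}$ such that, for all $\boldsymbol\theta'\in\Theta$,
\begin{equation*}
\mathcal R_{\rm TPO}(\boldsymbol\theta')\ge \mathcal R_{\rm TPO}(\boldsymbol\theta)+\langle \boldsymbol G_{\boldsymbol\theta},\boldsymbol\theta'-\boldsymbol\theta\rangle+\tfrac{\kappa_\beta\lambda}{2}\|\boldsymbol\theta'-\boldsymbol\theta\|_2^2 .
\end{equation*}
Such a family of minorants at every point implies that $\mathcal R_{\rm TPO}-\frac{\kappa_\beta\lambda}{2}\|\cdot\|_2^2$ is a supremum of affine functions, hence convex. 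So $\mathcal R_{\rm TPO}$ is $\kappa_\beta\lambda$-strongly convex on $\Theta$. Quadratic growth at $\boldsymbol\theta^\star$ then follows by the standard argument: first-order optimality over the convex set $\Theta$ gives a subgradient $\boldsymbol G$ with $\langle \boldsymbol G,\boldsymbol\theta-\boldsymbol\theta^\star\rangle\ge0$ for all $\boldsymbol\theta\in\Theta$.

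\textbf{Step 1 (per-perturbation curvature).} Fix $g$, a sample $(\boldsymbol h,\boldsymbol y^+,\boldsymbol y^-)$ and a perturbation $\boldsymbol\delta$. By the proof of Lemma~\ref{lem:margin-bounded-lipschitz}, the margin is affine in $\boldsymbol\theta$:
\begin{equation*}
m=\beta\langle z(\boldsymbol h+\boldsymbol\delta,\boldsymbol y^+,\boldsymbol y^-),\boldsymbol\theta-\boldsymbol\theta_{\rm ref}\rangle,\qquad |m|\le M_\beta .
\end{equation*}
Since $\varphi''(t)=e^t/(1+e^t)^2$ is even and decreasing in $|t|$, we have $\varphi''\ge e^{M_\beta}/(1+e^{M_\beta})^2$ on $[-M_\beta,M_\beta]$. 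The term $\rho_g^S[m]_+$ is convex in $\boldsymbol\theta$. Hence $\boldsymbol\theta\mapsto \mathcal J_{\boldsymbol\theta}(\boldsymbol\delta)$ satisfies, for any subgradient $\boldsymbol s$ at $\boldsymbol\theta$,
\begin{equation*}
\mathcal J_{\boldsymbol\theta'}(\boldsymbol\delta)\ge \mathcal J_{\boldsymbol\theta}(\boldsymbol\delta)+\langle \boldsymbol s,\boldsymbol\theta'-\boldsymbol\theta\rangle+\tfrac{\kappa_\beta}{2}\big(z^\top(\boldsymbol\theta'-\boldsymbol\theta)\big)^2 .
\end{equation*}
This comes from Taylor's theorem with integral remainder on the segment, which stays inside $\Theta$ where $|m|\le M_\beta$. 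The quadratic penalty $-\frac{\gamma_g}{2}\|\boldsymbol\delta\|_2^2$ does not depend on $\boldsymbol\theta$.

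\textbf{Step 2 (pass through the suprema, anchored at $\boldsymbol\theta$).} Take the measurable maximizer $\boldsymbol\delta^\star_{g,\boldsymbol\theta}$ from Assumption~\ref{ass:uniform-coverage}. Then
\begin{equation*}
\ell^g_{\boldsymbol\theta'}\ge \mathcal J_{\boldsymbol\theta'}(\boldsymbol\delta^\star_{g,\boldsymbol\theta})\quad\text{and}\quad \ell^g_{\boldsymbol\theta}=\mathcal J_{\boldsymbol\theta}(\boldsymbol\delta^\star_{g,\boldsymbol\theta}).
\end{equation*}
Plugging these into Step 1 gives a minorant of $\ell^g_{\boldsymbol\theta'}$ whose curvature term uses $\tilde{\boldsymbol h}=\boldsymbol h+\boldsymbol\delta^\star_{g,\boldsymbol\theta}$. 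Taking $\mathbb E_{P_g}$, which is legitimate since everything is bounded by Lemma~\ref{lem:robust-loss-bounded-lipschitz} and a measurable selection of subgradients exists, gives the analogous inequality for $\mathcal R_g$. For the outer supremum, $\mathcal U^\omega$ is compact (a TV ball intersected with the simplex) and the objective is linear in $\boldsymbol\omega$, so a maximizer $\boldsymbol\omega^\star(\boldsymbol\theta)$ exists. Then
\begin{equation*}
\mathcal R_{\rm TPO}(\boldsymbol\theta')\ge\sum_g\omega^\star_g\mathcal R_g(\boldsymbol\theta')\quad\text{and}\quad \mathcal R_{\rm TPO}(\boldsymbol\theta)=\sum_g\omega^\star_g\mathcal R_g(\boldsymbol\theta).
\end{equation*}
The curvature term becomes
\begin{equation*}
\frac{\kappa_\beta}{2}(\boldsymbol\theta'-\boldsymbol\theta)^\top\Big[\sum_g\omega^\star_g\mathbb E_{P_g}zz^\top\Big](\boldsymbol\theta'-\boldsymbol\theta).
\end{equation*}
Assumption~\ref{ass:uniform-coverage}, applied at this $\boldsymbol\theta$ and $\boldsymbol\omega^\star$, bounds it below by $\frac{\kappa_\beta\lambda}{2}\|\boldsymbol\theta'-\boldsymbol\theta\|_2^2$.

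\textbf{Main obstacle.} The delicate point is that strong convexity does not survive a supremum over a $\boldsymbol\theta$-dependent family in general. The perturbed embeddings, and hence the covariance, change with $\boldsymbol\theta$. The anchoring trick above avoids this: freeze both $\boldsymbol\delta^\star$ and $\boldsymbol\omega^\star$ at the base point $\boldsymbol\theta$ and use only lower bounds at $\boldsymbol\theta'$. This works because Assumption~\ref{ass:uniform-coverage} is imposed uniformly over all $\boldsymbol\theta$, all $\boldsymbol\omega\in\mathcal U^\omega$, and all maximizers.

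I also need to check that the chosen vectors $\boldsymbol G_{\boldsymbol\theta}=\sum_g\omega^\star_g\mathbb E[\boldsymbol s]$ are genuine subgradients of $\mathcal R_{\rm TPO}$. This follows from the same inequality, since the curvature term is nonnegative. That is what allows the first-order optimality condition at $\boldsymbol\theta^\star$ to be used in the final quadratic-growth step.
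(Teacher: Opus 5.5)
Your proposal is correct and is essentially the paper's argument. Both proofs freeze the worst-case group weights and the measurable maximizing perturbations at an anchor point, note that the resulting frozen objective lower-bounds $\mathcal R_{\rm TPO}$ and touches it at that point, and apply Assumption~\ref{ass:uniform-coverage} exactly there. The only difference is packaging: you anchor at the base point and build tangent quadratic minorants, then use the supremum-of-affine-functions characterization, whereas the paper anchors at $\boldsymbol\theta_t=t\boldsymbol\theta_1+(1-t)\boldsymbol\theta_2$ and applies the three-point strong-convexity inequality to the frozen surrogate $F_t$, which avoids subgradient selections altogether.
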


\textit{Proof. }
For a fixed sample \(U=(\boldsymbol h,\boldsymbol y^+,\boldsymbol y^-)\), let $z(U)=\phi(\boldsymbol h,\boldsymbol y^+)-\phi(\boldsymbol h,\boldsymbol y^-)$. Under the log-linear policy class, we have
\begin{equation}
m(\boldsymbol h,\boldsymbol y^+,\boldsymbol y^-;\boldsymbol\theta)
=
\beta\langle z(U),\boldsymbol\theta-\boldsymbol\theta_{\rm ref}\rangle.
\end{equation}
The Hessian of the DPO logistic loss \(\varphi(m(\boldsymbol h,\boldsymbol y^+,\boldsymbol y^-;\boldsymbol\theta))\) with
respect to \(\boldsymbol\theta\) is
\begin{equation}
\nabla_{\boldsymbol\theta}^2 \varphi(m(\boldsymbol h,\boldsymbol y^+,\boldsymbol y^-;\boldsymbol\theta))
=
\beta^2
\sigma(m(\boldsymbol h,\boldsymbol y^+,\boldsymbol y^-;\boldsymbol\theta))
\sigma(-m(\boldsymbol h,\boldsymbol y^+,\boldsymbol y^-;\boldsymbol\theta))
z(U)z(U)^\top.
\end{equation}
By Lemma~\ref{lem:margin-bounded-lipschitz}, we have
\begin{equation}
|m(\boldsymbol h, \boldsymbol y^+,\boldsymbol y^-;\boldsymbol\theta)|\le M_\beta.
\end{equation}
Since
\begin{equation}
\sigma(t)\sigma(-t)=\frac{e^t}{(1+e^t)^2},
\end{equation}
for all \(|t|\le M_\beta\), we can obtain
\begin{equation}
\sigma(t)\sigma(-t)
\ge
\frac{e^{M_\beta}}{(1+e^{M_\beta})^2}.
\end{equation}
Therefore, we have
\begin{equation}
\nabla_{\boldsymbol\theta} ^2 \varphi(m(\boldsymbol h,\boldsymbol y^+,\boldsymbol y^-;\boldsymbol \theta))
\succeq
\kappa_\beta z(U)z(U)^\top.
\end{equation}

We now establish strong convexity of the hierarchical
fixed-penalty objective. For $\boldsymbol\theta_1,\boldsymbol\theta_2
\in\Theta$ and $t\in[0,1]$, let
$
\boldsymbol\theta_t
=
t\boldsymbol\theta_1+
(1-t)\boldsymbol\theta_2.
$
Choose worst-case group weights
$\boldsymbol\omega_t\in\mathcal U^\omega$
and measurable optimal perturbations
$T_{g,\boldsymbol\theta_t}(U)$
at $\boldsymbol\theta_t$.

We define
\begin{equation}
\begin{aligned}
F_t(\boldsymbol\theta)
=
\sum_{g=1}^m\omega_{t,g}
\Bigg[
&\gamma_g\rho_g^X
+\mathbb E_{P_g}
\Big[
\bar{\mathcal L}_g(
\boldsymbol h+T_{g,\boldsymbol\theta_t}(U),
\boldsymbol y^+,\boldsymbol y^-;
\boldsymbol\theta)
-
\frac{\gamma_g}{2}
\|T_{g,\boldsymbol\theta_t}(U)\|_2^2
\Big]
\Bigg].
\end{aligned}
\end{equation}

By the preceding Hessian bound and
Assumption~\ref{ass:uniform-coverage},
the logistic-loss component of $F_t$
has curvature at least $\kappa_\beta\lambda$.
The preference-level penalty
$\rho_g^S[m]_+$ is convex in
$\boldsymbol\theta$, and the perturbation
penalty is independent of $\boldsymbol\theta$.
Thus, $F_t$ is
$\kappa_\beta\lambda$-strongly convex.

By optimality of the selected weights
and perturbations,
we have
$
F_t(\boldsymbol\theta_t)
=
\mathcal R_{\rm \method{}}
(\boldsymbol\theta_t),
$
and 
\[
F_t(\boldsymbol\theta)
\le
\mathcal R_{\rm \method{}}
(\boldsymbol\theta)
\quad\text{for all }\boldsymbol\theta\in\Theta.
\]
Consequently,
\begin{equation}
\begin{aligned}
\mathcal R_{\rm \method{}}
(\boldsymbol\theta_t)
&\le
t\mathcal R_{\rm \method{}}
(\boldsymbol\theta_1)
+
(1-t)\mathcal R_{\rm \method{}}
(\boldsymbol\theta_2)\\
&\quad-
\frac{\kappa_\beta\lambda}{2}
t(1-t)
\|\boldsymbol\theta_1-\boldsymbol\theta_2\|_2^2.
\end{aligned}
\end{equation}

Hence, $\mathcal R_{\rm \method{}}$
is $\kappa_\beta\lambda$-strongly convex.
Since $\boldsymbol\theta^\star$ minimizes
this objective over the convex set $\Theta$, we have
\begin{equation}
\mathcal R_{\rm \method{}}
(\boldsymbol\theta)
-
\mathcal R_{\rm \method{}}
(\boldsymbol\theta^\star)
\ge
\frac{\kappa_\beta\lambda}{2}
\|\boldsymbol\theta-\boldsymbol\theta^\star\|_2^2.
\end{equation}

\HPOconvergence*

\textit{Proof. }
Now we are ready to prove Theorem~\ref{thm:hpo-main}.
By Lemma~\ref{lem:local-uniform-convergence}, with probability at least
\(1-\delta\), simultaneously for all \(g\in[m]\),
\begin{equation}
\sup_{\theta\in\Theta}
\left|
\mathcal R_g(\boldsymbol\theta)
-
\widehat{\mathcal R}_g(\boldsymbol\theta)
\right|
\le
\epsilon_g,
\end{equation}
where
\begin{equation}
\epsilon_g
=
2A_g\sqrt{\frac{d}{n_g}}
+
K_g
\sqrt{
\frac{\log(2m/\delta)}{2n_g}
}.
\end{equation}
By Lemma~\ref{lem:hierarchical-uniform-convergence}, we have
\begin{equation}
\sup_{\boldsymbol\theta\in\Theta}
\left|
\mathcal R_{\rm \method{}}(\boldsymbol\theta)
-
\widehat{\mathcal R}_{\rm \method{}}(\boldsymbol\theta)
\right|
\le
\epsilon_{\rm H},
\end{equation}
where
\begin{equation}
\epsilon_{\rm H}
=
\sup_{\boldsymbol\omega\in\mathcal U^\omega}
\sum_{g=1}^m
\omega_g\epsilon_g.
\end{equation}

Since \(\widehat{\boldsymbol\theta}\) minimizes the empirical \method{} objective, we have
\begin{equation}
\widehat{\mathcal R}_{\rm \method{}}(\widehat{\boldsymbol\theta})
\le
\widehat{\mathcal R}_{\rm \method{}}(\boldsymbol\theta^\star).
\end{equation}
Therefore,
\begin{equation}
\begin{aligned}
\mathcal R_{\rm \method{}}(\widehat{\boldsymbol\theta})
-
\mathcal R_{\rm \method{}}(\boldsymbol\theta^\star)
&=
\left[
\mathcal R_{\rm \method{}}(\widehat{\boldsymbol\theta})
-
\widehat{\mathcal R}_{\rm \method{}}(\widehat{\boldsymbol\theta})
\right]\\
&\quad+
\left[
\widehat{\mathcal R}_{\rm \method{}}(\widehat{\boldsymbol\theta})
-
\widehat{\mathcal R}_{\rm \method{}}(\boldsymbol\theta^\star)
\right]\\
&\quad+
\left[
\widehat{\mathcal R}_{\rm \method{}}(\boldsymbol\theta^\star)
-
\mathcal R_{\rm \method{}}(\boldsymbol\theta^\star)
\right]\\
&\le
\epsilon_{\rm H}
+
0
+
\epsilon_{\rm H}
=
2\epsilon_{\rm H}.
\end{aligned}
\end{equation}
By Lemma~\ref{lem:strong-convexity-coverage},
\begin{equation}
\mathcal R_{\rm \method{}}(\widehat{\boldsymbol\theta})
-
\mathcal R_{\rm \method{}}(\theta^\star)
\ge
\frac{\kappa_\beta\lambda}{2}
\|\widehat\theta-\theta^\star\|_2^2.
\end{equation}
Combining the two inequalities gives
\begin{equation}
\frac{\kappa_\beta\lambda}{2}
\|\widehat\theta-\theta^\star\|_2^2
\le
2\epsilon_{\rm H}.
\end{equation}
Thus,
\begin{equation}
\|\widehat\theta-\theta^\star\|_2
\le
2
\sqrt{
\frac{\epsilon_{\rm H}}{\kappa_\beta\lambda}
}.
\end{equation}
Finally, since \(\mathcal U^\omega\subseteq\Delta^{m-1}\), we have
\begin{equation}
\epsilon_{\rm H}
=
\sup_{\omega\in\mathcal U^\omega}
\sum_{g=1}^m
\omega_g\epsilon_g
\le
\max_{g\in[m]}\epsilon_g.
\end{equation}
Substituting the expression for \(\epsilon_g\) yields the stated max-form
bound.

\section{Experimental Details}
\label{app:experiment_details}

\subsection{Details for OOD preference alignment}
\label{app:emotion_details}

We construct a controlled benchmark from the \texttt{dair-ai/emotion}
dataset~\citep{saravia-etal-2018-carer}, which consists of short English social-media
texts labeled with six emotion categories. We remove the underrepresented
\texttt{surprise} class, retaining five emotions: \texttt{sadness}, \texttt{joy},
\texttt{love}, \texttt{anger}, and \texttt{fear}. The benchmark is built in three steps: we first train a multi-objective emotion reward model, then use it to construct preference pairs, and finally define latent groups and the OOD evaluation environments.

\paragraph{Multi-objective reward model.}
Following~\citet{xu2025neurips-robust_wdpo}, we convert the dataset into a multi-label reward dataset by randomly concatenating one to three source texts and merging their emotion labels into a five-dimensional binary vector $z\in\{0,1\}^5$. We fine-tune a Qwen2.5-0.5B-Instruct backbone with a linear classification head, trained with binary cross-entropy loss \[ \mathcal{L}_{\mathrm{RM}} = -\sum\nolimits_{k=1}^{5} \bigl[z_k\log\sigma(h_k(x))+(1-z_k)\log(1-\sigma(h_k(x)))\bigr], \] where $h(x)\in\mathbb{R}^5$ denotes the logits produced by the classification head applied to the hidden state of the last non-padding token. Training uses learning rate $5\times10^{-5}$, weight decay $0.01$, dropout $0.1$, and maximum sequence length $128$ for $8$ epochs.

\paragraph{Base model and preference pair construction.}
We fine-tune two base models: Qwen2.5-0.5B-Instruct and LLaMA-3.2-1B-Instruct via supervised fine-tuning (SFT) on the original emotion texts for $10$ epochs with learning rate $5\times10^{-6}$, weight decay $0.01$, max gradient norm $10$. The resulting models serve as both the initial policy and the reference model for preference optimization. For each prompt (first four tokens of each source text), we sample eight candidate completions with temperature $0.7$, top-$p=0.9$, max $64$ tokens, and score them with a scalar preference reward defined as a convex combination of two emotion rewards: $ R_{\alpha}(y) = \alpha\,r_{\texttt{anger}}(y) + (1-\alpha)\,r_{\texttt{fear}}(y). $ Training pairs are constructed under $\alpha_{\mathrm{train}}=0.2$ by selecting the highest- and lowest-scoring completions as chosen and rejected responses, respectively, and filtering out pairs with reward margin below $0.02$.

\paragraph{Group Construction and OOD Shift Settings.}
Since the Emotion dataset lacks explicit domain annotations, we construct six latent groups by applying $K$-means clustering ($K=6$) to the prompt and response embeddings extracted from Qwen2.5-0.5B-Instruct. We consider four OOD evaluation settings corresponding to the distribution shifts described in Section~\ref{sec:hierarchical-shifts}: \textbf{(i) Group-level shift:} We change the group mixture proportions from a skewed training distribution $(40\%,25\%,15\%,10\%,7\%,3\%)$ to a uniform test distribution. \textbf{(ii) Prompt-level shift:} We apply synonym substitution to at most one word per prompt from a predefined vocabulary (\texttt{feel}, \texttt{happy}, \texttt{sad}, \texttt{angry}, \texttt{afraid}, \texttt{love}). \textbf{(iii) Preference-level shift:} We change the reward mixing coefficient from $\alpha_{\mathrm{train}}=0.2$ to $\alpha_{\mathrm{test}}=0.8$, shifting the preference criterion from \texttt{fear} toward \texttt{anger}. \textbf{(iv) Combined shift:} We apply all three shifts simultaneously. Table~\ref{tab:splits} summarizes the evaluation settings.

\begin{table}[ht]
\centering
\caption{Environment construction for the Emotion preference benchmark.}
\label{tab:splits}
\begin{tabular}{lccc}
\toprule
Environment & Group Proportions & Prompt Perturbation & Reward Mixture $\alpha$ \\
\midrule
\texttt{train}           & Skewed  & None                 & $0.2$ \\
\texttt{val}             & Skewed  & None                 & $0.2$ \\
\texttt{id\_test}        & Skewed  & None                 & $0.2$ \\
\texttt{group\_test}    & Uniform & None                 & $0.2$ \\
\texttt{prompt\_test}    & Skewed  & Synonym substitution & $0.2$ \\
\texttt{preference\_test}& Skewed  & None                 & $0.8$ \\
\texttt{combined\_test}  & Uniform & Synonym substitution & $0.8$ \\
\bottomrule
\end{tabular}
\end{table}

\paragraph{Hyperparameter Search.}
We set the prompt adversarial perturbation step size $\eta_{\delta}=0.001$, the policy learning rate $\eta_{\theta}=1\times 10^{-6}$, global batch size $64$, inner adversarial steps $K=1$, and scaling coefficient $\beta=0.1$. All models are optimized with AdamW~\citep{loshchilov2019decoupled} and trained on 2 NVIDIA H100 80GB GPUs. We select the remaining hyperparameters of \method{} via grid search on the validation set over the ranges in Table~\ref{tab:hyperparam_search}, using average preference accuracy on the validation set as the selection criterion. For baselines, we use the same validation metric and tune over comparable candidate sets following their original papers when applicable. Otherwise, we adopt the reported defaults.

\begin{table}[ht]
\centering
\caption{Hyperparameter search ranges.}
\label{tab:hyperparam_search}
\renewcommand{\arraystretch}{1.05}
\begin{tabular}{ll}
\toprule
Hyperparameter & Search Range \\
\midrule
$\gamma_g$       & $\{0.1,\ 1.0\}$ \\
$\eta_{\omega}$  & $\{0.01,\ 0.03,\ 0.05,\ 0.1\}$ \\
$\rho^{\omega}$  & $\{0.2, 0.1, 0.01,\ 0.05\}$ \\
$\rho_g^X$       & $\{0.01,\ 0.001,\ 0.0005,\ 0.0001\}$ \\
$\rho_g^S$       & $\{0.2, 0.1,\ 0.01,\ 0.005,\ 0.001\}$ \\
\bottomrule
\end{tabular}
\end{table}

\subsection{Details for Preference Poisoning Attack} \label{app:poison_attack_details}

\paragraph{Benchmark construction.} For a selected poisoned example $(\boldsymbol x, \boldsymbol y^+, \boldsymbol y^-)$, poisoning replaces it with $(\boldsymbol x, \boldsymbol y^-, \boldsymbol y^+)$. We construct poisoned training sets from the controlled Emotion preference benchmark (Appendix~\ref{app:emotion_details}), swapping chosen and rejected responses for a selected subset of training pairs at poisoning rates $\{0.1,0.2,0.25,0.3,0.35,0.4\}$, while validation and test splits remain clean. We consider two attack strategies described in Section~\ref{sec:experiment}.

\paragraph{Training and evaluation setup.} All methods use learning rate $\eta_\theta=5\times10^{-6}$, batch size $64$, $\beta=0.1$, and AdamW~\citep{loshchilov2019decoupled} on 2 NVIDIA H100 80GB GPUs. To ensure a comparable tuning budget, we select method-specific robustness hyperparameters on the clean validation split. For rDPO, we tune $\epsilon\in\{0.01,0.1,0.2\}$. For Dr.DPO, we tune $\tau\in\{0.1,0.5,1.0\}$. For \method{}, we disable prompt-level and group-level robustness and tune the preference-level radius $\rho_g^S\in\{0.05,0.1,0.2\}$. All selected models are trained on poisoned preference  data and evaluated on the clean test split. For both the random and margin-based preference flip attacks, we report average preference accuracy to evaluate robustness to preference poisoning.

\subsection{Details for Downstream Generalization}
\label{app:ultrafeedback_details}

\paragraph{Dataset and preference construction.}
We use UltraFeedback~\citep{cui2023ultrafeedback} as the preference training source. It contains instruction-following prompts together with multiple model-generated responses and fine-grained quality annotations. Following \citet{meng2024simpo}, we first collect on-policy responses from the SFT model, and then construct candidate response pairs using the ArmoRM reward model~\citep{wang-etal-2024-interpretable} by selecting the highest- and lowest-scoring responses for each prompt. We use the naturally defined UltraFeedback categories as the group structure for robust preference alignment.

\paragraph{Training and evaluation setup.}
We fine-tune all models for one epoch using the AdamW optimizer~\citep{loshchilov2019decoupled} with learning rate $\eta_\theta=5\times10^{-6}$, global batch size $64$, and $\beta=0.1$, on 2 NVIDIA H100 80GB GPUs. For \method{}, the preference-level radius is selected from $\rho_g^S\in\{0.005, 0.01,0.05, 0.1, 0.2\}$. We evaluate on Open LLM Leaderboard v2~\citep{fourrier2024openllmleaderboardv2}, which comprises six benchmark families: IFEval, BBH, MATH-Hard, GPQA, MuSR, and MMLU-Pro.

\subsection{Additional Implementation Details}
\label{app:implementation_details}

Following embedding-space adversarial training~\citep{zhu2020freelb}, we apply the prompt perturbation $\boldsymbol{\delta}$ to the prompt-token embedding matrix, leaving response tokens unchanged. The same perturbation is shared across the chosen, rejected, and reference branches, with padding positions masked out. The reference policy is evaluated on the perturbed prompts with its parameters frozen. We use the quadratic penalty $\frac{\gamma_g}{2}\|\boldsymbol{\delta}\|_F^2$ and project perturbations onto $\|\boldsymbol{\delta}\|_F^2\le 2\rho_g^X$, where $\|\cdot\|_F$ denotes the Frobenius norm.

For group-weight updates, we adopt exponentiated gradient ascent~\citep{SagawaKHL20_groupdro,ramesh2024group_grpo}, followed by a KL projection onto the group-mixture ambiguity set. Given group-wise surrogate losses $\widetilde{\mathcal R}_g^{(t)}$, we compute provisional weights
$\widetilde{\omega}_g\propto
\omega_g^{(t-1)}
\exp(\eta_\omega\widetilde{\mathcal R}_g^{(t)})$
and update
\begin{equation}
\boldsymbol{\omega}^{(t)}
=
\arg\min_{\substack{\boldsymbol{\omega}\in\Delta^{m-1}\\
\|\boldsymbol{\omega}-\boldsymbol{\alpha}\|_1\le 2\rho^\omega}}
\mathcal D_{\mathrm{KL}}
(\boldsymbol{\omega}\,\|\,\widetilde{\boldsymbol{\omega}}).
\end{equation}
The $\ell_1$ constraint follows from
$\mathcal W_1^G(\boldsymbol{\omega},\boldsymbol{\alpha})
=\frac12\|\boldsymbol{\omega}-\boldsymbol{\alpha}\|_1$
under the discrete group transport cost.
We solve this convex projection using
\texttt{CVXPY}~\citep{diamond2016cvxpy}.

\begin{table*}[t]
\centering
\caption{Average and worst-group accuracy (\%) on the Emotion preference benchmark. }
\label{tab:emotion_avg_worst_std_appendix}

\begin{minipage}{\linewidth}
\centering
\vspace{0.1em}

\resizebox{\linewidth}{!}{%
\begin{tabular}{lcccccccccc}
\toprule
\multirow{2}{*}{Qwen-2.5-0.5B}
& \multicolumn{2}{c}{ID}
& \multicolumn{2}{c}{Group}
& \multicolumn{2}{c}{Prompt}
& \multicolumn{2}{c}{Preference}
& \multicolumn{2}{c}{Combined} \\
\cmidrule(lr){2-3}
\cmidrule(lr){4-5}
\cmidrule(lr){6-7}
\cmidrule(lr){8-9}
\cmidrule(lr){10-11}
& Avg Acc & Worst Acc
& Avg Acc & Worst Acc
& Avg Acc & Worst Acc
& Avg Acc & Worst Acc
& Avg Acc & Worst Acc \\
\midrule
DPO
& \sE{86.87{\tiny $\pm$0.32}} & \sA{77.78{\tiny $\pm$0.87}}
& \sA{81.80{\tiny $\pm$0.15}} & \sA{76.15{\tiny $\pm$0.92}}
& \sA{83.17{\tiny $\pm$0.46}} & \sA{76.98{\tiny $\pm$1.10}}
& \sB{82.93{\tiny $\pm$0.49}} & \sA{78.16{\tiny $\pm$1.99}}
& \sB{82.60{\tiny $\pm$0.92}} & \sC{76.40{\tiny $\pm$0.51}} \\

Dr.DPO
& \sA{84.81{\tiny $\pm$0.23}} & \sC{81.36{\tiny $\pm$1.02}}
& \sA{81.65{\tiny $\pm$0.35}} & \sB{77.06{\tiny $\pm$0.78}}
& \sA{82.75{\tiny $\pm$0.69}} & \sA{76.84{\tiny $\pm$1.22}}
& \sA{82.24{\tiny $\pm$0.44}} & \sA{75.86{\tiny $\pm$1.65}}
& \sA{81.27{\tiny $\pm$0.94}} & \sA{75.22{\tiny $\pm$0.73}} \\

WDPO
& \sA{84.87{\tiny $\pm$0.29}} & \sE{84.13{\tiny $\pm$1.18}}
& \sB{82.72{\tiny $\pm$0.16}} & \sD{78.90{\tiny $\pm$0.49}}
& \sE{85.08{\tiny $\pm$0.22}} & \sE{82.20{\tiny $\pm$2.36}}
& \sD{84.01{\tiny $\pm$0.65}} & \sD{81.46{\tiny $\pm$1.83}}
& \sD{84.21{\tiny $\pm$0.31}} & \sD{78.76{\tiny $\pm$1.14}} \\

\method{} (Group)
& \sB{84.90{\tiny $\pm$0.32}} & \sA{77.27{\tiny $\pm$1.52}}
& \sE{82.96{\tiny $\pm$0.23}} & \sA{76.76{\tiny $\pm$1.40}}
& \sB{83.46{\tiny $\pm$0.32}} & \sB{78.28{\tiny $\pm$0.87}}
& \sA{82.31{\tiny $\pm$0.22}} & \sB{79.24{\tiny $\pm$0.12}}
& \sA{82.06{\tiny $\pm$0.23}} & \sB{76.11{\tiny $\pm$0.88}} \\

\method{} (Prompt)
& \sD{85.93{\tiny $\pm$0.55}} & \sD{83.77{\tiny $\pm$0.76}}
& \sD{82.84{\tiny $\pm$0.26}} & \sE{80.43{\tiny $\pm$2.12}}
& \sD{84.69{\tiny $\pm$0.70}} & \sD{81.05{\tiny $\pm$2.11}}
& \sE{85.02{\tiny $\pm$0.97}} & \sF{83.03{\tiny $\pm$1.61}}
& \sE{84.37{\tiny $\pm$0.39}} & \sE{79.65{\tiny $\pm$0.88}} \\

\method{} (Preference)
& \sC{85.64{\tiny $\pm$0.22}} & \sB{78.28{\tiny $\pm$1.75}}
& \sC{82.77{\tiny $\pm$0.09}} & \sC{77.98{\tiny $\pm$0.92}}
& \sC{84.30{\tiny $\pm$0.43}} & \sC{79.75{\tiny $\pm$0.71}}
& \sC{83.28{\tiny $\pm$0.10}} & \sC{79.99{\tiny $\pm$0.74}}
& \sC{83.24{\tiny $\pm$0.34}} & \sB{76.11{\tiny $\pm$0.88}} \\

\method{}
& \sF{86.95{\tiny $\pm$0.70}} & \sF{85.12{\tiny $\pm$0.24}}
& \sF{84.61{\tiny $\pm$0.99}} & \sF{81.65{\tiny $\pm$0.62}}
& \sF{85.47{\tiny $\pm$0.86}} & \sF{82.56{\tiny $\pm$1.32}}
& \sF{85.44{\tiny $\pm$1.13}} & \sE{82.34{\tiny $\pm$2.84}}
& \sF{85.25{\tiny $\pm$0.78}} & \sF{79.94{\tiny $\pm$1.84}} \\
\bottomrule
\end{tabular}
}
\end{minipage}

\vspace{0.8em}

\begin{minipage}{\linewidth}
\centering
\vspace{-0.7em}

\resizebox{\linewidth}{!}{
\begin{tabular}{lcccccccccc}
\toprule
\multirow{2}{*}{LLaMA-3.2-1B}
& \multicolumn{2}{c}{ID}
& \multicolumn{2}{c}{Group}
& \multicolumn{2}{c}{Prompt}
& \multicolumn{2}{c}{Preference}
& \multicolumn{2}{c}{Combined} \\
\cmidrule(lr){2-3}
\cmidrule(lr){4-5}
\cmidrule(lr){6-7}
\cmidrule(lr){8-9}
\cmidrule(lr){10-11}
& Avg Acc & Worst Acc
& Avg Acc & Worst Acc
& Avg Acc & Worst Acc
& Avg Acc & Worst Acc
& Avg Acc & Worst Acc \\
\midrule
DPO
& \sB{88.75{\tiny $\pm$0.06}} & \sC{86.19{\tiny $\pm$0.15}}
& \sB{88.23{\tiny $\pm$0.38}} & \sE{83.44{\tiny $\pm$1.92}}
& \sB{87.81{\tiny $\pm$0.27}} & \sB{85.43{\tiny $\pm$0.83}}
& \sB{86.91{\tiny $\pm$0.22}} & \sF{81.42{\tiny $\pm$2.43}}
& \sB{87.44{\tiny $\pm$0.71}} & \sD{82.61{\tiny $\pm$0.87}} \\

Dr.DPO
& \sA{84.16{\tiny $\pm$0.72}} & \sA{81.00{\tiny $\pm$1.24}}
& \sA{84.65{\tiny $\pm$0.57}} & \sB{79.87{\tiny $\pm$2.29}}
& \sA{83.73{\tiny $\pm$0.62}} & \sA{80.98{\tiny $\pm$1.31}}
& \sA{82.87{\tiny $\pm$1.26}} & \sA{78.87{\tiny $\pm$2.21}}
& \sA{85.51{\tiny $\pm$0.66}} & \sA{79.13{\tiny $\pm$1.74}} \\

WDPO
& \sA{87.53{\tiny $\pm$1.03}} & \sB{84.17{\tiny $\pm$1.77}}
& \sA{87.24{\tiny $\pm$1.05}} & \sC{80.86{\tiny $\pm$0.57}}
& \sA{87.78{\tiny $\pm$1.03}} & \sC{85.82{\tiny $\pm$1.71}}
& \sA{85.28{\tiny $\pm$0.96}} & \sB{79.73{\tiny $\pm$1.82}}
& \sA{86.71{\tiny $\pm$1.31}} & \sB{81.16{\tiny $\pm$1.00}} \\

\method{} (Group)
& \sC{89.03{\tiny $\pm$0.39}} & \sE{87.02{\tiny $\pm$0.68}}
& \sD{88.61{\tiny $\pm$0.33}} & \sE{83.39{\tiny $\pm$2.12}}
& \sC{88.32{\tiny $\pm$0.33}} & \sD{85.86{\tiny $\pm$1.10}}
& \sC{86.94{\tiny $\pm$0.21}} & \sD{81.22{\tiny $\pm$2.31}}
& \sC{87.54{\tiny $\pm$0.38}} & \sE{82.90{\tiny $\pm$0.50}} \\

\method{} (Prompt)
& \sE{89.78{\tiny $\pm$1.32}} & \sD{86.37{\tiny $\pm$1.03}}
& \sC{88.50{\tiny $\pm$1.01}} & \sD{81.19{\tiny $\pm$0.99}}
& \sD{88.71{\tiny $\pm$0.93}} & \sC{85.41{\tiny $\pm$1.00}}
& \sD{87.30{\tiny $\pm$0.52}} & \sE{81.31{\tiny $\pm$1.75}}
& \sD{87.63{\tiny $\pm$0.17}} & \sC{82.03{\tiny $\pm$1.00}} \\

\method{} (Preference)
& \sD{89.75{\tiny $\pm$0.33}} & \sF{87.55{\tiny $\pm$0.43}}
& \sE{89.00{\tiny $\pm$0.74}} & \sE{83.17{\tiny $\pm$1.71}}
& \sE{89.00{\tiny $\pm$0.16}} & \sE{86.23{\tiny $\pm$0.06}}
& \sE{87.40{\tiny $\pm$0.54}} & \sE{81.24{\tiny $\pm$2.43}}
& \sE{87.87{\tiny $\pm$0.30}} & \sD{82.61{\tiny $\pm$0.00}} \\

\method{}
& \sF{90.57{\tiny $\pm$0.27}} & \sF{87.55{\tiny $\pm$0.43}}
& \sF{89.11{\tiny $\pm$0.50}} & \sF{83.50{\tiny $\pm$1.14}}
& \sF{89.14{\tiny $\pm$0.47}} & \sF{86.70{\tiny $\pm$0.74}}
& \sF{88.25{\tiny $\pm$0.34}} & \sE{81.31{\tiny $\pm$0.87}}
& \sF{88.31{\tiny $\pm$0.30}} & \sF{83.19{\tiny $\pm$0.50}} \\
\bottomrule
\end{tabular}
}
\end{minipage}

\end{table*}

\section{Additional Experiments}
\label{app:additional_experiments}

\paragraph{Detailed results of Emotion benchmark.}
The main paper reports mean results. Here we additionally report the standard deviations across three runs with different random seeds. The results are shown in Table~\ref{tab:emotion_avg_worst_std_appendix}.

\paragraph{Hyperparameter sensitivity analysis.}

We further study the sensitivity of \method{} to the three robustness radii on the OOD preference alignment benchmark using the LLaMA-3.2-1B-Instruct model. Figures~\ref{fig:emotion_sensitivity_target} and~\ref{fig:emotion_sensitivity_combined} show the results obtained by varying one radius at a time while keeping the other robustness components fixed. The results suggest that \method{} is generally stable across a broad range of values. The group radius \(\rho^\omega\) has a relatively mild effect, while performance improves as the prompt radius \(\rho_g^X\) increases from very small values and then quickly saturates. The preference radius \(\rho_g^S\) is more influential: moderate values improve robustness, but overly large values can reduce performance, indicating possible over-regularization. These trends offer practical guidance for robustness-radius selection and motivate future work on task-adaptive tuning strategies.

\begin{figure*}[t]
    \centering
    \begin{subfigure}[t]{0.32\textwidth}
        \centering
        \includegraphics[width=\linewidth]{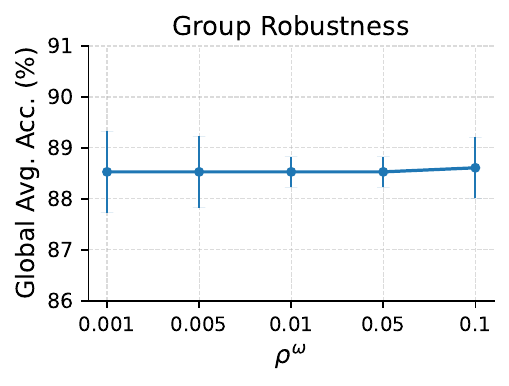}
        \caption{Group-level robustness.}
        \label{fig:emotion_sensitivity_global_target}
    \end{subfigure}
    \hfill
    \begin{subfigure}[t]{0.32\textwidth}
        \centering
        \includegraphics[width=\linewidth]{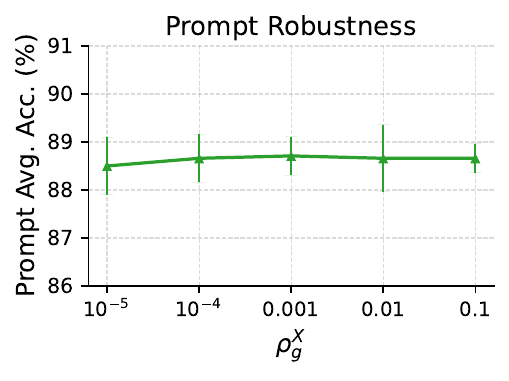}
        \caption{Prompt-level robustness.}
        \label{fig:emotion_sensitivity_prompt_target}
    \end{subfigure}
    \hfill
    \begin{subfigure}[t]{0.32\textwidth}
        \centering
        \includegraphics[width=\linewidth]{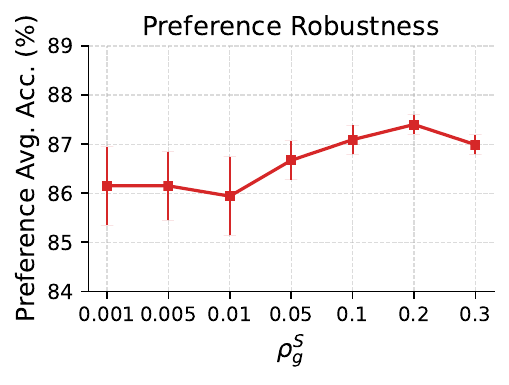}
        \caption{Preference-level robustness.}
        \label{fig:emotion_sensitivity_preference_target}
    \end{subfigure}
    \caption{
    Sensitivity of group-, prompt-, and preference-level robustness to their corresponding radii. 
    }
    \label{fig:emotion_sensitivity_target}
\end{figure*}

\begin{figure*}[t]
    \centering
    \begin{subfigure}[t]{0.32\textwidth}
        \centering
        \includegraphics[width=\linewidth]{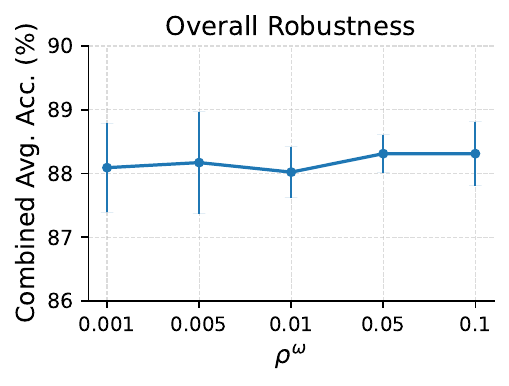}
        \caption{Group radius.}
        \label{fig:emotion_sensitivity_global_combined}
    \end{subfigure}
    \hfill
    \begin{subfigure}[t]{0.32\textwidth}
        \centering
        \includegraphics[width=\linewidth]{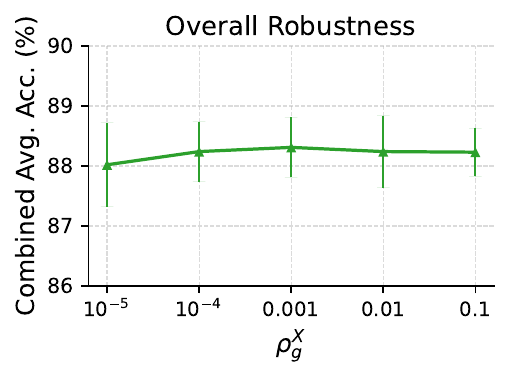}
        \caption{Prompt radius.}
        \label{fig:emotion_sensitivity_prompt_combined}
    \end{subfigure}
    \hfill
    \begin{subfigure}[t]{0.32\textwidth}
        \centering
        \includegraphics[width=\linewidth]{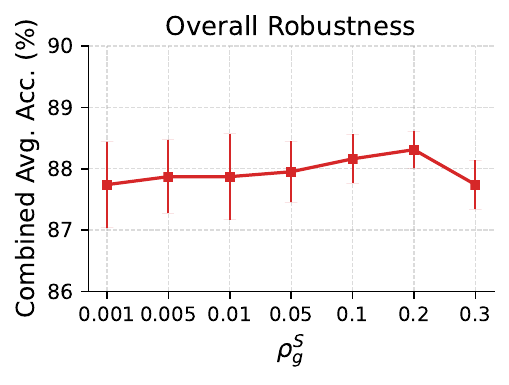}
        \caption{Preference radius.}
        \label{fig:emotion_sensitivity_preference_combined}
    \end{subfigure}
    \caption{
    Sensitivity of overall robustness to the three robustness radii.
    }
    \label{fig:emotion_sensitivity_combined}
\end{figure*}

\paragraph{Effect of Model Family and Scale.}
\label{app:model_family_scale}
To study the effect of model family and scale, we further evaluate \method{} on Qwen2.5-3B and LLaMA-3.2-3B. Together with Qwen2.5-0.5B and LLaMA-3.2-1B, this covers two model families, each evaluated at two model sizes. Table~\ref{tab:model_family_scale} reports the detailed results across ID and the four OOD environments, while Table~\ref{tab:model_family_scale_gain} summarizes performance averaged over Group, Prompt, Preference, and Combined. As shown in Table~\ref{tab:model_family_scale_gain}, \method{} improves both average and worst-group accuracy for all four models, with the largest gains observed on LLaMA-3.2-3B. Within each model family, lower DPO baseline accuracy is associated with larger gains in average accuracy, suggesting a headroom effect. In contrast, worst-group improvements show no consistent trend with model scale. Overall, these results suggest that \method{} remains effective across model families, scales, and OOD shift settings.

\begin{table*}[t]
\centering
\caption{
Preference accuracy (\%) across model families and scales.
}
\label{tab:model_family_scale}
\resizebox{\textwidth}{!}{%
\begin{tabular}{llcccccccccc}
\toprule
\multirow{2}{*}{Model}
& \multirow{2}{*}{Method}
& \multicolumn{2}{c}{ID}
& \multicolumn{2}{c}{Group}
& \multicolumn{2}{c}{Prompt}
& \multicolumn{2}{c}{Preference}
& \multicolumn{2}{c}{Combined} \\
\cmidrule(lr){3-4}
\cmidrule(lr){5-6}
\cmidrule(lr){7-8}
\cmidrule(lr){9-10}
\cmidrule(lr){11-12}
& & Avg & Worst & Avg & Worst & Avg & Worst
& Avg & Worst & Avg & Worst \\
\midrule

\multirow{2}{*}{Qwen2.5-0.5B}
& DPO
& 86.87 & 77.78
& 81.80 & 76.15
& 83.17 & 76.98
& 82.93 & 78.16
& 82.60 & 76.40 \\
& \method{}
& \textbf{86.95} & \textbf{85.12}
& \textbf{84.61} & \textbf{81.65}
& \textbf{85.47} & \textbf{82.56}
& \textbf{85.44} & \textbf{82.34}
& \textbf{85.25} & \textbf{79.94} \\
\midrule

\multirow{2}{*}{LLaMA-3.2-1B}
& DPO
& 88.75 & 86.19
& 88.23 & 83.44
& 87.81 & 85.43
& 86.91 & \textbf{81.42}
& 87.44 & 82.61 \\
& \method{}
& \textbf{90.57} & \textbf{87.55}
& \textbf{89.11} & \textbf{83.50}
& \textbf{89.14} & \textbf{86.70}
& \textbf{88.25} & 81.31
& \textbf{88.31} & \textbf{83.19} \\
\midrule

\multirow{2}{*}{Qwen2.5-3B}
& DPO
& 78.60 & 77.48
& 76.04 & 74.53
& 77.38 & 76.92
& 73.97 & 72.62
& 76.92 & 71.99 \\
& \method{}
& \textbf{79.69} & \textbf{78.12}
& \textbf{79.56} & \textbf{78.47}
& \textbf{80.34} & \textbf{77.61}
& \textbf{80.14} & \textbf{79.75}
& \textbf{80.08} & \textbf{73.26} \\
\midrule

\multirow{2}{*}{LLaMA-3.2-3B}
& DPO
& 85.24 & 84.04
& 83.54 & 80.69
& 86.20 & 84.68
& 82.74 & 80.00
& 83.43 & 80.83 \\
& \method{}
& \textbf{90.55} & \textbf{88.56}
& \textbf{89.82} & \textbf{86.92}
& \textbf{91.05} & \textbf{89.12}
& \textbf{87.36} & \textbf{84.31}
& \textbf{88.66} & \textbf{84.72} \\
\bottomrule
\end{tabular}%
}
\end{table*}

\begin{table}[t]
\centering
\small
\caption{
Average OOD performance and absolute gain of \method{} over DPO.
}
\label{tab:model_family_scale_gain}
\begin{tabular}{lcccccc}
\toprule
\multirow{2}{*}{Model}
& \multicolumn{3}{c}{OOD Average}
& \multicolumn{3}{c}{OOD Worst} \\
\cmidrule(lr){2-4}
\cmidrule(lr){5-7}
& DPO & \method{} & Gain
& DPO & \method{} & Gain \\
\midrule
Qwen2.5-0.5B
& 82.63 & 85.19 & +2.57
& 76.92 & 81.62 & +4.70 \\
Qwen2.5-3B
& 76.08 & 80.03 & +3.95
& 74.02 & 77.27 & +3.26 \\
LLaMA-3.2-1B
& 87.60 & 88.70 & +1.11
& 83.23 & 83.68 & +0.45 \\
LLaMA-3.2-3B
& 83.98 & 89.22 & +5.24
& 81.55 & 86.27 & +4.72 \\
\bottomrule
\end{tabular}%
\end{table}

\paragraph{Downstream Evaluation with Robust DPO Baselines.}
\label{app:downstream_robust_baselines}

We further evaluate the downstream generalization of robust preference optimization methods on the Open LLM Leaderboard v2~\citep{fourrier2024openllmleaderboardv2}. In addition to standard DPO and \method{}, we include two representative robust DPO baselines: WDPO~\citep{xu2025neurips-robust_wdpo} and Dr.DPO~\citep{wu2025iclr-robust_drdpo}. All methods use LLaMA-3.2-1B-Instruct with the same UltraFeedback preference data, SFT initialization, training schedule, and evaluation protocol. We evaluate the resulting models on MMLU-Pro, BBH, GPQA, MATH-Hard, IFEval, and MuSR. As shown in Table~\ref{tab:downstream_robust_baselines}, the methods exhibit different strengths across downstream tasks. Dr.DPO achieves the best performance on BBH, GPQA, MATH-Hard, and MuSR, while \method{} performs best on MMLU-Pro and IFEval. Overall, the performance differences are relatively small across most benchmarks, with \method{} remaining competitive with both standard DPO and the robust DPO baselines. These results suggest that the robustness benefits of \method{} are compatible with strong general downstream performance.

\begin{table}[t]
\centering
\small
\caption{
Downstream evaluation on the Open LLM Leaderboard v2 using
LLaMA-3.2-1B-Instruct. The best result is shown in bold and the second-best is underlined.
}
\label{tab:downstream_robust_baselines}
\begin{tabular}{lccccccc}
\toprule
Method
& MMLU-Pro
& BBH
& GPQA
& MATH-Hard
& IFEval
& MuSR
& Avg. \\
\midrule

DPO
& \underline{20.0}
& 35.6
& 23.2
& 8.1
& 51.3
& 31.6
& 28.30 \\

WDPO
& 19.4
& \underline{36.1}
& \underline{24.1}
& 8.0
& 47.9
& \underline{32.5}
& 28.00 \\

Dr.DPO
& 19.7
& \textbf{36.4}
& \textbf{24.3}
& \textbf{8.6}
& \underline{52.1}
& \textbf{32.6}
& \textbf{28.95} \\

\method{} ($\rho_g^S=0.01$)
& \textbf{20.5}
& 35.9
& 23.7
& \underline{8.3}
& \textbf{53.0}
& 31.2
& \underline{28.77} \\

\bottomrule
\end{tabular}%
\end{table}

\paragraph{Computational Cost.} 
\begin{wrapfigure}{r}{0.45\linewidth}

\centering
\vspace{-1.2em}
\includegraphics[width=1.0\linewidth]{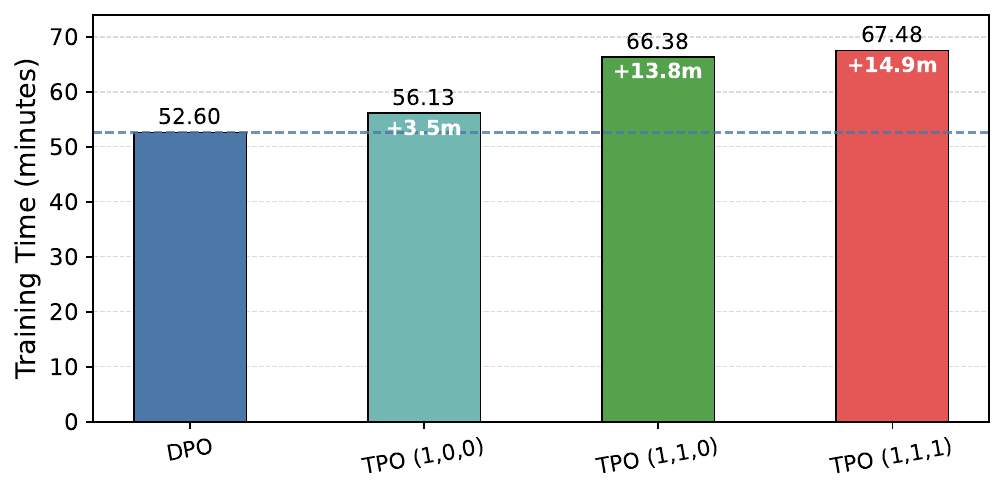}
\caption{Computational cost on UltraFeedback with LLaMA-3.2-1B-Instruct.}
\vspace{-0.5em}
\label{fig:computational_cost}
\end{wrapfigure}
We report the wall-clock training time of LLaMA-3.2-1B-Instruct on the UltraFeedback setting in Figure~\ref{fig:computational_cost} using 2 NVIDIA H100 80GB GPUs. The time is measured in minutes. Here, \method{}\((1,0,0)\), \method{}\((1,1,0)\), and \method{}\((1,1,1)\) denote enabling group-level robustness only, enabling both group- and prompt-level robustness, and enabling all three robustness components, respectively. The corresponding overheads are approximately \(+3.5\), \(+13.8\), and \(+14.9\) minutes compared with DPO. These results show that \method{} introduces additional computation compared with standard DPO, mainly due to group reweighting and the inner adversarial prompt perturbation steps. Nevertheless, the overall overhead remains modest and manageable in our implementation.

\section{Limitations}
\label{sec:limitations}

First, our finite-sample convergence guarantee relies on the uniform covariance coverage assumption (Assumption~\ref{ass:uniform-coverage}), which requires sufficient coverage of pairwise feature differences uniformly over admissible group mixtures and worst-case prompt perturbations. This assumption is standard in preference learning theory, though relaxing it to broader settings remains an interesting theoretical direction. 

Second, our prompt-level robustification uses embedding-space perturbations as a practical approximation to shift in the discrete token space, which is a widely adopted strategy in robust NLP but may not capture all forms of prompt-level variation in real-world scenarios, though it has been shown to be effective in our controlled OOD alignment experiments. Further work may explore more robust prompt-level perturbations in the discrete token space, such as adversarial token substitutions or paraphrasing.

Third, \method{} introduces additional hyperparameters, including the robustness radii \(\{\rho^{\omega}, \rho_g^X, \rho_g^S\}\). Since the appropriate ranges of these radii may vary across tasks and scenarios, selecting them can require additional effort. Although our sensitivity analysis provides some practical guidance, developing principled selection rules or task-adaptive tuning strategies remains an important direction for future work.

\section{Impact Statement}
\label{sec:impact}
This work proposes a hierarchical distributionally robust framework for preference optimization, with the primary goal of improving the reliability of LLM alignment across diverse user populations and deployment environments, which may help reduce performance disparities across underrepresented groups. As a foundational algorithmic contribution, it does not involve model deployment. More robustly aligned models could nonetheless be misused to generate more persuasive harmful content across a broader range of contexts, a risk shared with LLM alignment research more broadly.


\newpage
\clearpage
\section*{NeurIPS Paper Checklist}

\begin{enumerate}

\item {\bf Claims}
    \item[] Question: Do the main claims made in the abstract and introduction accurately reflect the paper's contributions and scope?
    \item[] Answer: \answerYes{} 
    \item[] Justification: Our paper's contributions and scope have been clearly described in the abstract and introduction.
    \item[] Guidelines:
    \begin{itemize}
        \item The answer \answerNA{} means that the abstract and introduction do not include the claims made in the paper.
        \item The abstract and/or introduction should clearly state the claims made, including the contributions made in the paper and important assumptions and limitations. A \answerNo{} or \answerNA{} answer to this question will not be perceived well by the reviewers. 
        \item The claims made should match theoretical and experimental results, and reflect how much the results can be expected to generalize to other settings. 
        \item It is fine to include aspirational goals as motivation as long as it is clear that these goals are not attained by the paper. 
    \end{itemize}

\item {\bf Limitations}
    \item[] Question: Does the paper discuss the limitations of the work performed by the authors?
    \item[] Answer: \answerYes{} 
    \item[] Justification: Please refer to Appendix~\ref{sec:limitations}.
    \item[] Guidelines:
    \begin{itemize}
        \item The answer \answerNA{} means that the paper has no limitation while the answer \answerNo{} means that the paper has limitations, but those are not discussed in the paper. 
        \item The authors are encouraged to create a separate ``Limitations'' section in their paper.
        \item The paper should point out any strong assumptions and how robust the results are to violations of these assumptions (e.g., independence assumptions, noiseless settings, model well-specification, asymptotic approximations only holding locally). The authors should reflect on how these assumptions might be violated in practice and what the implications would be.
        \item The authors should reflect on the scope of the claims made, e.g., if the approach was only tested on a few datasets or with a few runs. In general, empirical results often depend on implicit assumptions, which should be articulated.
        \item The authors should reflect on the factors that influence the performance of the approach. For example, a facial recognition algorithm may perform poorly when image resolution is low or images are taken in low lighting. Or a speech-to-text system might not be used reliably to provide closed captions for online lectures because it fails to handle technical jargon.
        \item The authors should discuss the computational efficiency of the proposed algorithms and how they scale with dataset size.
        \item If applicable, the authors should discuss possible limitations of their approach to address problems of privacy and fairness.
        \item While the authors might fear that complete honesty about limitations might be used by reviewers as grounds for rejection, a worse outcome might be that reviewers discover limitations that aren't acknowledged in the paper. The authors should use their best judgment and recognize that individual actions in favor of transparency play an important role in developing norms that preserve the integrity of the community. Reviewers will be specifically instructed to not penalize honesty concerning limitations.
    \end{itemize}

\item {\bf Theory assumptions and proofs}
    \item[] Question: For each theoretical result, does the paper provide the full set of assumptions and a complete (and correct) proof?
    \item[] Answer: \answerYes{} 
    \item[] Justification: We have clearly stated all assumptions used in our work.
    \item[] Guidelines:
    \begin{itemize}
        \item The answer \answerNA{} means that the paper does not include theoretical results. 
        \item All the theorems, formulas, and proofs in the paper should be numbered and cross-referenced.
        \item All assumptions should be clearly stated or referenced in the statement of any theorems.
        \item The proofs can either appear in the main paper or the supplemental material, but if they appear in the supplemental material, the authors are encouraged to provide a short proof sketch to provide intuition. 
        \item Inversely, any informal proof provided in the core of the paper should be complemented by formal proofs provided in appendix or supplemental material.
        \item Theorems and Lemmas that the proof relies upon should be properly referenced. 
    \end{itemize}

    \item {\bf Experimental result reproducibility}
    \item[] Question: Does the paper fully disclose all the information needed to reproduce the main experimental results of the paper to the extent that it affects the main claims and/or conclusions of the paper (regardless of whether the code and data are provided or not)?
    \item[] Answer: \answerYes{} 
    \item[] Justification: We provide dataset construction, model training, hyperparameter search, and evaluation details in Appendices~\ref{app:emotion_details}, \ref{app:poison_attack_details}, and \ref{app:ultrafeedback_details}.
    \item[] Guidelines:
    \begin{itemize}
        \item The answer \answerNA{} means that the paper does not include experiments.
        \item If the paper includes experiments, a \answerNo{} answer to this question will not be perceived well by the reviewers: Making the paper reproducible is important, regardless of whether the code and data are provided or not.
        \item If the contribution is a dataset and\slash or model, the authors should describe the steps taken to make their results reproducible or verifiable. 
        \item Depending on the contribution, reproducibility can be accomplished in various ways. For example, if the contribution is a novel architecture, describing the architecture fully might suffice, or if the contribution is a specific model and empirical evaluation, it may be necessary to either make it possible for others to replicate the model with the same dataset, or provide access to the model. In general. releasing code and data is often one good way to accomplish this, but reproducibility can also be provided via detailed instructions for how to replicate the results, access to a hosted model (e.g., in the case of a large language model), releasing of a model checkpoint, or other means that are appropriate to the research performed.
        \item While NeurIPS does not require releasing code, the conference does require all submissions to provide some reasonable avenue for reproducibility, which may depend on the nature of the contribution. For example
        \begin{enumerate}
            \item If the contribution is primarily a new algorithm, the paper should make it clear how to reproduce that algorithm.
            \item If the contribution is primarily a new model architecture, the paper should describe the architecture clearly and fully.
            \item If the contribution is a new model (e.g., a large language model), then there should either be a way to access this model for reproducing the results or a way to reproduce the model (e.g., with an open-source dataset or instructions for how to construct the dataset).
            \item We recognize that reproducibility may be tricky in some cases, in which case authors are welcome to describe the particular way they provide for reproducibility. In the case of closed-source models, it may be that access to the model is limited in some way (e.g., to registered users), but it should be possible for other researchers to have some path to reproducing or verifying the results.
        \end{enumerate}
    \end{itemize}

\item {\bf Open access to data and code}
    \item[] Question: Does the paper provide open access to the data and code, with sufficient instructions to faithfully reproduce the main experimental results, as described in supplemental material?
    \item[] Answer: \answerNo{} 
    \item[] Justification: While the data used in our study is publicly available, we are currently unable to provide open access to the code. However, we have included detailed descriptions of the data access, preprocessing steps, model architecture, and experimental setup in the paper and Appendix. These details should be sufficient for others to reproduce the experimental
results.
    \item[] Guidelines:
    \begin{itemize}
        \item The answer \answerNA{} means that paper does not include experiments requiring code.
        \item Please see the NeurIPS code and data submission guidelines (\url{https://neurips.cc/public/guides/CodeSubmissionPolicy}) for more details.
        \item While we encourage the release of code and data, we understand that this might not be possible, so \answerNo{} is an acceptable answer. Papers cannot be rejected simply for not including code, unless this is central to the contribution (e.g., for a new open-source benchmark).
        \item The instructions should contain the exact command and environment needed to run to reproduce the results. See the NeurIPS code and data submission guidelines (\url{https://neurips.cc/public/guides/CodeSubmissionPolicy}) for more details.
        \item The authors should provide instructions on data access and preparation, including how to access the raw data, preprocessed data, intermediate data, and generated data, etc.
        \item The authors should provide scripts to reproduce all experimental results for the new proposed method and baselines. If only a subset of experiments are reproducible, they should state which ones are omitted from the script and why.
        \item At submission time, to preserve anonymity, the authors should release anonymized versions (if applicable).
        \item Providing as much information as possible in supplemental material (appended to the paper) is recommended, but including URLs to data and code is permitted.
    \end{itemize}

\item {\bf Experimental setting/details}
    \item[] Question: Does the paper specify all the training and test details (e.g., data splits, hyperparameters, how they were chosen, type of optimizer) necessary to understand the results?
    \item[] Answer: \answerYes{} 
    \item[] Justification: Please refer to Appendix~\ref{app:emotion_details}, \ref{app:poison_attack_details}, and \ref{app:ultrafeedback_details}.
    \item[] Guidelines:
    \begin{itemize}
        \item The answer \answerNA{} means that the paper does not include experiments.
        \item The experimental setting should be presented in the core of the paper to a level of detail that is necessary to appreciate the results and make sense of them.
        \item The full details can be provided either with the code, in appendix, or as supplemental material.
    \end{itemize}

\item {\bf Experiment statistical significance}
    \item[] Question: Does the paper report error bars suitably and correctly defined or other appropriate information about the statistical significance of the experiments?
    \item[] Answer: \answerYes{} 
    \item[] Justification: We report mean and standard deviation over three random seeds for the main Emotion benchmark in Table~\ref{tab:emotion_avg_worst_std_appendix}.
    \item[] Guidelines:
    \begin{itemize}
        \item The answer \answerNA{} means that the paper does not include experiments.
        \item The authors should answer \answerYes{} if the results are accompanied by error bars, confidence intervals, or statistical significance tests, at least for the experiments that support the main claims of the paper.
        \item The factors of variability that the error bars are capturing should be clearly stated (for example, train/test split, initialization, random drawing of some parameter, or overall run with given experimental conditions).
        \item The method for calculating the error bars should be explained (closed form formula, call to a library function, bootstrap, etc.)
        \item The assumptions made should be given (e.g., Normally distributed errors).
        \item It should be clear whether the error bar is the standard deviation or the standard error of the mean.
        \item It is OK to report 1-sigma error bars, but one should state it. The authors should preferably report a 2-sigma error bar than state that they have a 96\% CI, if the hypothesis of Normality of errors is not verified.
        \item For asymmetric distributions, the authors should be careful not to show in tables or figures symmetric error bars that would yield results that are out of range (e.g., negative error rates).
        \item If error bars are reported in tables or plots, the authors should explain in the text how they were calculated and reference the corresponding figures or tables in the text.
    \end{itemize}

\item {\bf Experiments compute resources}
    \item[] Question: For each experiment, does the paper provide sufficient information on the computer resources (type of compute workers, memory, time of execution) needed to reproduce the experiments?
    \item[] Answer: \answerYes{} 
    \item[] Justification: We provide the details in Appendix~\ref{app:experiment_details}. 
    \item[] Guidelines:
    \begin{itemize}
        \item The answer \answerNA{} means that the paper does not include experiments.
        \item The paper should indicate the type of compute workers CPU or GPU, internal cluster, or cloud provider, including relevant memory and storage.
        \item The paper should provide the amount of compute required for each of the individual experimental runs as well as estimate the total compute. 
        \item The paper should disclose whether the full research project required more compute than the experiments reported in the paper (e.g., preliminary or failed experiments that didn't make it into the paper). 
    \end{itemize}
    
\item {\bf Code of ethics}
    \item[] Question: Does the research conducted in the paper conform, in every respect, with the NeurIPS Code of Ethics \url{https://neurips.cc/public/EthicsGuidelines}?
    \item[] Answer: \answerYes{} 
    \item[] Justification: We acknowledge the NeurIPS Code of Ethics.
    \item[] Guidelines:
    \begin{itemize}
        \item The answer \answerNA{} means that the authors have not reviewed the NeurIPS Code of Ethics.
        \item If the authors answer \answerNo, they should explain the special circumstances that require a deviation from the Code of Ethics.
        \item The authors should make sure to preserve anonymity (e.g., if there is a special consideration due to laws or regulations in their jurisdiction).
    \end{itemize}

\item {\bf Broader impacts}
    \item[] Question: Does the paper discuss both potential positive societal impacts and negative societal impacts of the work performed?
    \item[] Answer: \answerYes{} 
    \item[] Justification: See Appendix~\ref{sec:impact}.
    \item[] Guidelines:
    \begin{itemize}
        \item The answer \answerNA{} means that there is no societal impact of the work performed.
        \item If the authors answer \answerNA{} or \answerNo, they should explain why their work has no societal impact or why the paper does not address societal impact.
        \item Examples of negative societal impacts include potential malicious or unintended uses (e.g., disinformation, generating fake profiles, surveillance), fairness considerations (e.g., deployment of technologies that could make decisions that unfairly impact specific groups), privacy considerations, and security considerations.
        \item The conference expects that many papers will be foundational research and not tied to particular applications, let alone deployments. However, if there is a direct path to any negative applications, the authors should point it out. For example, it is legitimate to point out that an improvement in the quality of generative models could be used to generate Deepfakes for disinformation. On the other hand, it is not needed to point out that a generic algorithm for optimizing neural networks could enable people to train models that generate Deepfakes faster.
        \item The authors should consider possible harms that could arise when the technology is being used as intended and functioning correctly, harms that could arise when the technology is being used as intended but gives incorrect results, and harms following from (intentional or unintentional) misuse of the technology.
        \item If there are negative societal impacts, the authors could also discuss possible mitigation strategies (e.g., gated release of models, providing defenses in addition to attacks, mechanisms for monitoring misuse, mechanisms to monitor how a system learns from feedback over time, improving the efficiency and accessibility of ML).
    \end{itemize}
    
\item {\bf Safeguards}
    \item[] Question: Does the paper describe safeguards that have been put in place for responsible release of data or models that have a high risk for misuse (e.g., pre-trained language models, image generators, or scraped datasets)?
    \item[] Answer: \answerNA{} 
    \item[] Justification: This paper has no such risks.
    \item[] Guidelines:
    \begin{itemize}
        \item The answer \answerNA{} means that the paper poses no such risks.
        \item Released models that have a high risk for misuse or dual-use should be released with necessary safeguards to allow for controlled use of the model, for example by requiring that users adhere to usage guidelines or restrictions to access the model or implementing safety filters. 
        \item Datasets that have been scraped from the Internet could pose safety risks. The authors should describe how they avoided releasing unsafe images.
        \item We recognize that providing effective safeguards is challenging, and many papers do not require this, but we encourage authors to take this into account and make a best faith effort.
    \end{itemize}

\item {\bf Licenses for existing assets}
    \item[] Question: Are the creators or original owners of assets (e.g., code, data, models), used in the paper, properly credited and are the license and terms of use explicitly mentioned and properly respected?
    \item[] Answer: \answerYes{} 
    \item[] Justification: We state the assumptions explicitly and provide complete proofs of the theoretical results in the appendix.
    \item[] Guidelines:
    \begin{itemize}
        \item The answer \answerNA{} means that the paper does not use existing assets.
        \item The authors should cite the original paper that produced the code package or dataset.
        \item The authors should state which version of the asset is used and, if possible, include a URL.
        \item The name of the license (e.g., CC-BY 4.0) should be included for each asset.
        \item For scraped data from a particular source (e.g., website), the copyright and terms of service of that source should be provided.
        \item If assets are released, the license, copyright information, and terms of use in the package should be provided. For popular datasets, \url{paperswithcode.com/datasets} has curated licenses for some datasets. Their licensing guide can help determine the license of a dataset.
        \item For existing datasets that are re-packaged, both the original license and the license of the derived asset (if it has changed) should be provided.
        \item If this information is not available online, the authors are encouraged to reach out to the asset's creators.
    \end{itemize}

\item {\bf New assets}
    \item[] Question: Are new assets introduced in the paper well documented and is the documentation provided alongside the assets?
    \item[] Answer: \answerNA{} 
    \item[] Justification: The paper does not release new assets.
    \item[] Guidelines:
    \begin{itemize}
        \item The answer \answerNA{} means that the paper does not release new assets.
        \item Researchers should communicate the details of the dataset\slash code\slash model as part of their submissions via structured templates. This includes details about training, license, limitations, etc. 
        \item The paper should discuss whether and how consent was obtained from people whose asset is used.
        \item At submission time, remember to anonymize your assets (if applicable). You can either create an anonymized URL or include an anonymized zip file.
    \end{itemize}

\item {\bf Crowdsourcing and research with human subjects}
    \item[] Question: For crowdsourcing experiments and research with human subjects, does the paper include the full text of instructions given to participants and screenshots, if applicable, as well as details about compensation (if any)? 
    \item[] Answer: \answerNA{} 
    \item[] Justification: [NA]
    \item[] Guidelines:
    \begin{itemize}
        \item The answer \answerNA{} means that the paper does not involve crowdsourcing nor research with human subjects.
        \item Including this information in the supplemental material is fine, but if the main contribution of the paper involves human subjects, then as much detail as possible should be included in the main paper. 
        \item According to the NeurIPS Code of Ethics, workers involved in data collection, curation, or other labor should be paid at least the minimum wage in the country of the data collector. 
    \end{itemize}

\item {\bf Institutional review board (IRB) approvals or equivalent for research with human subjects}
    \item[] Question: Does the paper describe potential risks incurred by study participants, whether such risks were disclosed to the subjects, and whether Institutional Review Board (IRB) approvals (or an equivalent approval/review based on the requirements of your country or institution) were obtained?
    \item[] Answer: \answerNA{} 
    \item[] Justification: [NA]
    \item[] Guidelines:
    \begin{itemize}
        \item The answer \answerNA{} means that the paper does not involve crowdsourcing nor research with human subjects.
        \item Depending on the country in which research is conducted, IRB approval (or equivalent) may be required for any human subjects research. If you obtained IRB approval, you should clearly state this in the paper. 
        \item We recognize that the procedures for this may vary significantly between institutions and locations, and we expect authors to adhere to the NeurIPS Code of Ethics and the guidelines for their institution. 
        \item For initial submissions, do not include any information that would break anonymity (if applicable), such as the institution conducting the review.
    \end{itemize}

\item {\bf Declaration of LLM usage}
    \item[] Question: Does the paper describe the usage of LLMs if it is an important, original, or non-standard component of the core methods in this research? Note that if the LLM is used only for writing, editing, or formatting purposes and does \emph{not} impact the core methodology, scientific rigor, or originality of the research, declaration is not required.
    \item[] Answer: \answerYes{} 
    \item[] Justification: LLMs are the target models in our empirical evaluation, and the paper describes the model backbones, fine-tuning setup, and evaluation protocols.
    \item[] Guidelines:
    \begin{itemize}
        \item The answer \answerNA{} means that the core method development in this research does not involve LLMs as any important, original, or non-standard components.
        \item Please refer to our LLM policy in the NeurIPS handbook for what should or should not be described.
    \end{itemize}

\end{enumerate}

\end{document}